\newcolumntype{P}[1]{>{\centering\arraybackslash}p{#1}}
\tikzstyle{startstop} = [rectangle, rounded corners, minimum width=3cm, minimum height=1cm,text centered, draw=black, fill=blue!20]
\tikzstyle{process} = [rectangle, minimum width=3cm, minimum height=1cm, text centered, draw=black, fill=orange!30]
\tikzstyle{io} = [trapezium, trapezium left angle=70, trapezium right angle=110, minimum width=3cm, minimum height=1cm, text centered, draw=black, fill=blue!30]
\tikzstyle{arrow} = [thick,->,>=stealth]
\tikzstyle{decision} = [ellipse, minimum width=3cm, minimum height=1cm, text centered, draw=black, fill=green!30]
\def\BState{\State\hskip-\ALG@thistlm}
\newtheorem{theorem}{Theorem}
\newtheorem{theorem4}{Theorem}
\newtheorem{theorem6}{Theorem}
\newtheorem{theorem7}{Theorem}
\newtheorem{theorem8}{Theorem}
\newtheorem{assumption}[theorem4]{Assumption}
\newtheorem{axiom}[theorem6]{Axiom}
\newtheorem{definition}[theorem]{Definition}
\newtheorem{lemma}[theorem8]{Lemma}
\newtheorem{proposition}[theorem7]{Proposition}
\begin{document}

	\title{\LARGE Measuring the Driving Forces of Predictive Performance:\\
\medskip
 \LARGE Application to Credit Scoring\thanks{We thank for their comments Bart Baesens, Raffaella Calabrese, Colin Cameron, Jean-Edouard Colliard, Xavier D'Haultfoeuille, Serge Darolles, Jean-François Dupuy, Emmanuel Flachaire, Thierry Foucault, Julien Grand-Clément, Emmanuel Kemel, Sébastien Laurent, Pascal Lavergne, Jean-Michel Poggi, Gilbert Saporta, Olivier Scaillet, Nicolas Vieille, participants at the 2022 CISEM workshop, 2022 Quantitative Finance and Financial Econometrics, 2022 Financial Econometrics Day (University Paris Nanterre), 2023 Risk Forum, 2023 Asian Meeting of the Econometric Society (AMES), 2023 Econometric Society Australia Meeeting (ESAM), 2024 Econometric Society European Meeting (ESEM), and workshop and seminar participants at Aix-Marseille School of Economics (AMSE), Amundi, AXA Investment Managers, COFACE, and University of Orléans. We thank the Institut Universitaire de France (IUF), the ACPR Chair in Regulation and Systemic Risk, the Excellence Initiative of Aix-Marseille University - A*MIDEX, and the French National Research Agency (AMSE ANR-17-EURE-0020, Ecodec ANR-11-LABX-0047, MLEforRisk ANR-21-CE26-0007) for supporting our research.
 } 
		%\vspace{0.5cm}
 \vspace{0.75cm}}
	\author{\normalsize Sullivan Hué\thanks{
           Aix Marseille University, CNRS, AMSE, Marseille, France. Email: sullivan.hue@univ-amu.fr } \and %
		\normalsize Christophe Hurlin\thanks{
			University of Orléans (LEO) and Institut Universitaire de France (IUF), Rue de Blois, 45067 Orléans, France. Email: christophe.hurlin@univ-orleans.fr} \and 
		\normalsize Christophe Pérignon\thanks{
			HEC Paris, 1 Rue de la Libération, 78350 Jouy-en-Josas,
			France. Email: perignon@hec.fr} \and \normalsize Sébastien Saurin\thanks{
			University of Orléans (LEO), Rue de Blois, 45067 Orléans, France. Email: sebastien.saurin@univ-orleans.fr}\vspace{1cm}}
	\date{\normalsize \today\vspace{0.5cm}\\
	}
	\maketitle
	\vspace{-1.cm}
	\begin{abstract}
	\noindent As they play an increasingly important role in determining access to credit, credit scoring models are under growing scrutiny from banking supervisors and internal model validators. These authorities need to monitor the model performance and identify its key drivers. To facilitate this, we introduce the XPER methodology to decompose a performance metric (e.g., AUC, $R^2$) into specific contributions associated with the various features of a forecasting model. XPER is theoretically grounded on Shapley values and is both model-agnostic and performance metric-agnostic. Furthermore, it can be implemented either at the model level or at the individual level. Using a novel dataset of car loans, we decompose the AUC of a machine-learning model trained to forecast the default probability of loan applicants. We show that a small number of features can explain a surprisingly large part of the model performance. Notably, the features that contribute the most to the predictive performance of the model may not be the ones that contribute the most to individual forecasts (SHAP). Finally, we show how XPER can be used to deal with heterogeneity issues and improve performance. %[181 words]

	\end{abstract}
	
	\vspace{0.5cm}
	
	\hspace{0.2cm} \textit{Keywords: %Machine learning;
 Explainability; Credit scoring; Performance metrics; Shapley values \medskip }

    \baselineskip=2\normalbaselineskip\newpage

\section{Introduction}

 Why is the AUC of a given machine learning model equal to 0.7? Which features mainly explain this performance? What are the contributions of the different features to the MSE of a regression model? To answer these questions, we develop and apply a general methodology, called eXplainable PERformance (XPER), which measures the marginal contribution of a particular feature to the predictive performance of a regression or classification model.
 
Being able to identify the driving forces of the performance of a predictive model is crucial in the context of credit scoring, where complex and opaque machine learning models are increasingly being used by banks and fintechs. Such decomposition is of primary importance both for banking supervisors and internal model validation teams as they need to understand why a given model is working or not, and for which types of borrowers. Furthermore, it also permits to address heterogeneity issues by identifying groups of individuals for which the features have similar effects on performance. One can then estimate group-specific models to improve overall performance and make sure the credit scoring model operates effectively for all borrowers. In this paper, we propose an application where we use XPER to decompose the AUC of a scoring model forecasting loan defaults and to identify the features with the strongest impact on model performance. 

The XPER framework is based on Shapley values \citep{Shapley_1953}. While Shapley values decompose a \textit{payoff} among \textit{players} in a \textit{game}, XPER values decompose a \textit{performance metric} among \textit{features} in a \textit{model}. More precisely, our method breaks down the difference between a performance metric and a benchmark value among the various features of the model. Formally, an XPER value is defined as the weighted average marginal contribution of a given feature to a performance metric, obtained in a set of coalitions of other features. For instance, evaluating the XPER value of $x_{1}$ in a three-feature model implies to assess the incremental performance due to $x_{1}$ by successively considering four subsets or coalitions of features: one coalition including no features, another coalition only $x_{2}$, another one only $x_{3}$, and a last coalition including both $x_{2}$ and $x_{3}$.
 
 The application of the Shapley methodology in the context of explaining model performance is not trivial. Indeed, many Shapley values can be defined given the assumptions made on the model, the data, and the features excluded from the coalitions (see for instance \cite{strumbelj_efficient_2010}, %\cite{owen_shapley_2017},
 \cite{Redell2019}, \cite{Kumar_2020}, \cite{Sundararajan2020}, and \cite{Aas2021}). In this paper, we define XPER values by considering the expectation of the performance metric with respect to the joint distribution of the target variable and of the features which are excluded from the coalition. A key advantage of this definition is that the benchmark value has a meaningful interpretation: it corresponds to the performance metric that we would obtain on a hypothetical sample in which the target variable is independent from all the features included in the model, i.e., a fully misspecified model with irrelevant features. Moreover, our method does not require to re-estimate the model (\textit{\`a la} \cite{gromping_estimators_2007}) nor to pick ad-hoc values for features excluded from the coalition (\textit{\`a la} \cite{Israeli2007}). 

 XPER offers several other advantages. First, as the XPER decomposition is based on Shapley values, it is theoretically grounded and XPER values satisfy the axioms of efficiency, symmetry, linearity, and null effects. These axioms collectively ensure equitable and consistent allocation of model performance contributions among the involved features.\footnote{\textit{Efficiency} dictates that the total contribution of all features equals the overall model performance metric, minus the benchmark value. \textit{Symmetry} ensures that features making equivalent contributions receive identical XPER values. \textit{Linearity} stipulates that if model performance can be decomposed into separate parts, the XPER value of the whole model equals the sum of its parts. The \textit{null effect} axiom ensures features that do not influence model performance receive a XPER value of zero.} Second, XPER is model-agnostic as it permits to interpret the predictive performance of any econometric or machine learning model. Third, it is metric-agnostic as it can break down any performance metric: predictive accuracy (e.g., AUC, Gini, accuracy), goodness of fit (e.g., $R^2$), information criterion (e.g., AIC, BIC), loss function (e.g., MSE, MAE, Q-like), or economic performance (e.g., profit-and-loss or P\&L). Fourth, XPER can be implemented either at the global level or at the local level. At the global level, the XPER value of a given feature measures its contribution to the performance of the model. At the local level, the XPER value of a given feature measures its contribution to the model performance that comes from a given individual. Finally, as the number of coalitions grows fast with the number of features, we propose two estimation procedures: an exact one when the number of features remains moderate and an approximated one, which is a modified version of the Kernel SHAP method of \cite{Lundberg2017}. We developed a package in Python to compute the XPER values and made it available at \href{https://github.com/hi-paris/XPER}{https://github.com/hi-paris/XPER}. 

 We apply our methodology to a novel dataset of auto loans provided by an international bank. We demonstrate the usefulness of XPER for decomposing the AUC and other performance metrics of a credit scoring model. Our findings reveal that a small number of features can explain a surprisingly large part of the model performance. Furthermore, the empirical analysis confirms that XPER values differ from standard feature importance metrics. More importantly, we show how to use XPER to boost model performance. To do so, we build homogeneous groups of individuals by clustering them based on their individual XPER values. By construction, within a given group, the features tend to have similar effects on performance (same sign, same intensity). We then show that estimating group-specific models yields to a higher accuracy than with a one-fits-all model.

 Our paper contributes to the burgeoning literature on eXplainable Artificial Intelligence (XAI) \citep{Murdoch2019, Gelman2021, Molnar2024}. One well-known limitation of AI and machine learning methods comes from their opacity and lack of explainability. Most of these algorithms are considered as black boxes in the sense that the corresponding outcomes cannot be easily explained to final users nor related to the initial features \citep{Sun2021}. Recently, many explainability methods have been designed to explain black-box models by measuring which features most affect its predictions (see \cite{Zhao2021,Horel2022,Liu2023,Molnar2024}). One of the most impactful XAI methods is the Shapley additive explanation (SHAP) of \cite{Lundberg2017}, which distributes the predictions of a model among its features (see also \cite{Sundararajan2017, Lundberg2018, SundararajanDhamdereAgarwal2020, Casalicchio_2019, Bowen2020, Aas2021}). However, explaining the predictive performance of a model (XPER) is not equivalent to explaining its forecasts (SHAP). The latter approach identifies the features contributing the most to model predictions both when the model is correct and when the model is wrong. This implies that some features identified as contributors that lead the predictions may, in fact, mislead the model.

 While several other studies also decompose model performance using Shapley values (see Appendix \ref{Literature_review} for a comparison table of the main studies), XPER distinguishes itself in three ways. First, XPER stands out for its versatility, enabling the analysis of the drivers of \textit{any} performance metric for \textit{any} estimated model. In the literature, most articles focus exclusively on the decomposition of a single performance metric \citep{owen_shapley_2017,Redell2019,sutera2021global,Verdinelli2023}. Second, XPER sets itself apart from most existing methodologies, which only focus on providing a global analysis of the features influencing model performance \citep{Casalicchio_2019,ghorbani2019data,covert2020understanding,fryer2021shapley,moehle2021portfolio,Borup2022,Zhang2023}, by also enabling a local analysis. This allows us to address heterogeneity issues by identifying groups of individuals for which features exhibit similar effects on model performance. Third, XPER meaningfully decomposes the performance metric without relying on strong assumptions or re-estimating the model. In particular, many alternative approaches require re-estimating the model with different subsets of features, which may lead to omitted variable bias \citep{Lipovetsky2001,Israeli2007,bell2024efficient}.

 %The rest of this paper is structured as follows. We start with an intuitive presentation of XPER in Section 2. We introduce our framework of analysis and the concept of performance metric in Section 3. Section 4 introduces the XPER value decomposition of performance metrics and Section 5 provides some simulation results. Section 6 presents the empirical application and Section 7 concludes the paper and discusses further applications.
 
\vspace{0.2cm}

\section{A primer on XPER}\label{Primer}

 In this section, we introduce XPER using a simple classification problem with $y\in \{0,1\}$ the target variable and three features $\{x_1, x_2, x_3\}$. We estimate a model $\hat{f}(x_1,x_2,x_3)$ on a training sample and evaluate its predictive performance by considering its AUC on a test sample.
 % \footnote{The receiver operating characteristic (ROC) curve is a graphical diagnostic tool for binary classifiers that allows considering a variation on its discrimination threshold. It corresponds to the plot of the true positive rate (TPR) against the false positive rate (FPR) at various threshold settings. The AUC (area under the curve) or AUROC (area under the receiver operating characteristics) measure can be interpreted as the probability that a classifier ranks a randomly chosen positive instance higher than a randomly chosen negative one. The AUC ranges from 0.5 (random classification) to 1 (prefect classification).}
 If none of the three features provides any useful information to predict the target variable, the AUC is 0.5, indicating random classification. We refer to this value as the benchmark AUC, denoted $\phi_0$. In contrast, when AUC $>0.5$, at least one feature contains some relevant information to predict the target variable.
 
 What is the relative contribution of each feature to this predictive performance? We answer this question using XPER by breaking down the difference between the AUC obtained on the test sample and the benchmark value, among the features of the model. Let us denote by $\phi_j$ the XPER contribution of feature $x_j$ to the predictive performance of the model. For instance, the AUC of the model (e.g., predicting/detecting loan default, customer churn, fraud, money laundering) is 0.78 and its XPER decomposition is:
\medskip

\begin{table}[!htbp]
  \centering
    \label{tab:subsets}%
    \begin{tabular}{lccccccccc}
          & AUC   & & $\phi_0$ & & $\phi_1$ & & $\phi_2$ & & $\phi_3$  \vspace{0.1cm}\\  \hline
    Test sample  & 0.78 & =  & 0.50 & +  & 0.14 & +  & 0.10 & + & 0.04 \\
    \end{tabular}%
\end{table}
\vspace{-0.4cm}

It shows that the feature $x_1$ is the main driver of the predictive performance of the model as it explains half of the difference between the AUC of the model and its benchmark ($=0.14/(0.78-0.50)$). The second most informative feature is $x_2$ as it explains another 35.7\% ($=0.10/(0.78-0.50)$) of the difference. Finally, $x_3$ explains the remaining 14.3\%.

 Such performance decomposition can prove handy in several contexts. XPER can help to highlight a potential \textit{heterogeneity} in the predictive performance of a model. Indeed, one can compute the XPER decomposition of a given performance measure on a subset of the test sample. For instance, let us consider two subsets of the test sample, denoted $A$ and $B$. The performance of the model can be evaluated and decomposed with XPER independently in each subset. %\footnote{At the limit, if the subset only contains one individual, XPER can be used to distribute among the features the accuracy of the prediction made by the model for this individual. We call this metric the individual performance.}
 The illustrative results displayed in the table below indicate that the model performs poorly in $A$. In this case, XPER values can be used to explain the sources of such underperformance of the model. For instance, if the XPER decomposition of the AUC over the subsets $A$ and $B$ is as follows, the weak performance observed in A is only due to feature $x_1$.
 
\begin{table}[!htbp]
  \centering
    \label{tab:subsets}%
    \begin{tabular}{lccccccccc}
          & AUC   & & $\phi_0$ & & $\phi_1$ & & $\phi_2$ & & $\phi_3$  \vspace{0.1cm}\\  \hline
    Subset A & 0.65 & =  & 0.50 & +  & 0.01 & +  & 0.10 & + & 0.04 \\ \vspace{ -0.4cm}\\
     Subset B  & 0.85 & =  & 0.50 & +  & 0.21 & +  & 0.10 & + & 0.04 \\ \vspace{ -0.4cm}\\  %\hline
    % Test sample  & 0.78 & =  & 0.50 & +  & 0.10 & +  & 0.10 & + & 0.04 \\
    \end{tabular}%
   \vspace{-0.3cm}
\end{table}

XPER can also help to understand the origin of \textit{overfitting}. Overfitting occurs when a model performs significantly better on the training sample than it does on the test sample. XPER can identify some features which contribute more to the performance of the model in the training sample than in the test sample, and thus explains overfitting. In the table below, the drop in AUC between the training sample (0.90) and the test sample (0.78) illustrates a typical overfitting issue. In this case, the overfitting problem is entirely due to $x_2$.
% Table generated by Excel2LaTeX from sheet 'Feuil1'
  
\begin{table}[!htbp]
  \centering
    \begin{tabular}{lccccccccc}
          & AUC   & & $\phi_0$ & & $\phi_1$ & & $\phi_2$ & & $\phi_3$  \vspace{0.1cm}\\  \hline
    Training & 0.90 & =  & 0.50 & +  & 0.15 & +  & 0.20 & + & 0.05 \\ \vspace{ -0.4cm}\\
    Test  & 0.78 & =  & 0.50 & +  & 0.15 & +  & 0.08 & + & 0.05 \\ \vspace{ -0.4cm}\\  \hline
    Training - Test  & 0.12 & =  & 0 & +  & 0 & +  & 0.12 & + & 0 \\
    \end{tabular}%
\end{table}
\vspace{-0.4cm}

A related application of XPER could prove useful during the \textit{variable selection phase} of model development. Indeed, by contrasting XPER values estimated in the training and validation sets, one could only keep the variables maintaining a high-enough level of importance in the validation test.

XPER can be applied to any statistical performance metrics (e.g., $R^2$, Brier Score, Gini index), but also to any \textit{economic performance metrics} (e.g., P\&L, return-on-investment, performance of a financial portfolio). As an example, we can decompose the P\&L of a credit scoring model: \vspace{-0.3cm}
\begin{equation*} \vspace{-0.3cm}
    P\&L = \sum_{i=1}^n(1-\hat{y}_i)(1-y_i)\times profit + (1-\hat{y}_i)y_i \times loss, 
\end{equation*} 
where $profit$ is the money made on any reimbursed loan ($y_i=0$) and $loss$ is the money lost on any defaulted loan ($y_i=1$). In this case, XPER values would be expressed in dollar terms, and would sum to the total P\&L. 

\section{Framework and performance metrics}
 
\noindent We consider a classification or regression problem involving a target variable, denoted $y$, taking values in $\mathcal{Y}=\{0,1\}$ or $\mathcal{Y}	\subseteq \mathbb{R}$ given the problem considered. The $q$-vector $\mathbf{x}\in \mathcal{X} \subseteq \mathbb{R}^q$ refers to the input features. We denote by $f:\mathbf{x}\rightarrow \hat{y}$ an econometric model or a machine learning algorithm, where $\hat{y}\in \mathcal{Y}$ is either a classification output or a regression output, such as $\hat{y}=f(\mathbf{x})$. 
% In a classification problem, we assume that the classifier also produces conditional probabilities $\hat{P}(\mathbf{x})=\hat{\mathbb{P}}(y=1|\mathbf{x})$. 
We impose no constraint on the model: it may be parametric or not, linear or not, a weak learner or an ensemble method, etc. For simplicity, for parametric models we exclude the parameters from the notation, i.e., $f(\mathbf{x})\equiv f(\mathbf{x};\theta)$. The model is estimated (parametric model) or trained (machine learning algorithm) once for all on a training sample $ S_T=\{\mathbf{x}_j,y_j\}^T_{j=1}$. The sample size $T$ is considered as fixed. The trained model, denoted $\hat{f}(.)$, is evaluated on a test sample $S_n=\{\mathbf{x}_i,y_i,\hat{f}(\mathbf{x}_i)\}^n_{i=1}$ with a performance metric (PM).
%\footnote{See \cite{Benkeser2020} on how to measure performance with the same data used for the training.} 
A PM is defined as a measure used to assess the predictive performance of a model. It quantifies the accuracy and effectiveness of a model’s predictions. Standard PM for classification models include accuracy, precision, recall, F1-score, and AUC, among others. Similarly, for regression models, commonly used PM include MSE, RMSE, MAE, and $R^2$. More generally, any information criteria, loss function, or economic performance indicator can be considered as PM. See Table \ref{all_perfs} in Appendix \ref{PM_examples} for examples of PM.

\begin{definition} \label{PM_def}
    A sample performance metric $\textnormal{PM} \in \Theta \subseteq \mathbb{R}$ associated with the model $\hat{f}(.)$ and the test sample $S_n$ is defined as $\textnormal{PM} = \tilde G_n(y_1,...,y_n;\hat{f}(\mathbf{x_1}),...,\hat{f}% 
    (\mathbf{x_n})) = G_n(\mathbf{y};\mathbf{X})$, where $\mathbf{y}=(y_1,...,y_n)^{'}$ and $\mathbf{X}=(\mathbf{x}_1,..,\mathbf{x}_n)^{'}$.
\end{definition}

\noindent For instance, for a linear regression model $\hat{f}(\mathbf{x})=\hat{\beta}_0+\hat{\beta}_1x_1$ and the PM defined as the opposite of the MSE, we have  $ G_n(\mathbf{y};\mathbf{X})=-\frac{1}{n}\sum_{i=1}^n(y_i-\hat{\beta}_0-\hat{\beta}_1x_{i,1})^2$.  We introduce the following three assumptions on the PM.  
%For instance, $\Theta=[0,1]$ for AUC or $R^2$, and $\Theta= \mathbb{R}^{+}$ for MSE or MAE.

\begin{assumption}
    The sample \textnormal{PM} increases over $\Theta$ with the predictive performance.
\end{assumption}

\noindent Assumption 1 simplifies the interpretation of the PM by asserting that higher values indicate more accurate model predictions. For example, since a higher $R^2$ suggests smaller differences between model predictions and target values, $R^2$ can be directly used as a PM. Conversely, if we want to decompose the MSE, we recommend considering the opposite of the MSE and defining PM = $-$MSE. Hence, higher PM values signify smaller prediction-target differences. In this case, a positive XPER value $\phi_j$ always indicates that the variable $x_j$ enhances the model predictive performance.

\begin{assumption} \label{Additive_property}
	The sample \textnormal{PM} satisfies the following additive assumption: 
	\begin{equation} \label{additive}
		G_n(\mathbf{y};\mathbf{X}) =\frac{1}{n}\sum_{i=1}^{n}G(y_i;\mathbf{x}_i;\hat{\delta}_n),
	\end{equation}
	where  $G(y_i;\mathbf{x}_i;\hat{\delta}_n)$ denotes an individual contribution to the PM, and $\hat{\delta}_n$ is a nuisance parameter which depends on the test sample $S_n$.
\end{assumption}

\noindent For simplicity, we only consider models for which the outcome $y_{i}$ for instance $i$ only depends on features $\mathbf{x}_{i}=(x_{i,1},...,x_{i,q})^{'}$. For regression or classification models with cross-sectional interactions (e.g., spatial econometrics model) or time-series dependence, notations have to be adjusted such that $\hat{y}_{i}=\hat{f}\left( \mathbf{w}_{i}\right)$, where $\mathbf{x}_{i}\subseteq \mathbf{w}_{i}$, $\exists j\neq i:\mathbf{x}_{j}\subseteq \mathbf{w}_{i}$ and/or $y_{j}\subseteq \mathbf{w}_{i}$. Then, the additive assumption becomes $G_n(\mathbf{y};\mathbf{X})=\frac{1}{n}\sum_{i=1}^{n}G\left( y_{i};\mathbf{		w}_{i};\hat{\delta}_n\right)$.

\begin{assumption} \label{assumption_convergence}
    The sample \textnormal{PM}, denoted $G_n(\mathbf{y};\mathbf{X})$, converges to the population \textnormal{PM}, denoted $\mathbb{E}_{y,\mathbf{x}}(G(y;\mathbf{x};\delta_0))$, where $\mathbb{E}_{y,\mathbf{x}}(.)$ refers to the expected value with respect to the joint distribution of $y$ and $\mathbf{x}$, and $\delta_0= \mbox{plim } \hat{\delta}_n$. In addition, $\mathbb{E}_{y,\mathbf{x}}(G(y;\mathbf{x};\delta_0))$ exists and is finite.
\end{assumption}

These three assumptions are consistent with a wide range of PMs. In Appendix \ref{PM_examples}, we detail $G\left( y;\mathbf{x};\delta _{0}\right)$ for standard PMs associated with regression or classification models. For instance, in the case of a linear regression model $\hat{f}(\mathbf{x})=\mathbf{x}\hat{\beta}$, when the $R^2$ is used as PM, we have:
\begin{align} \label{R2_def}
    R^2=G_n(\mathbf{y};\mathbf{X})=\frac{1}{n}\sum_{i=1}^{n}G(y_i;\mathbf{x}_i;\hat{\delta}_n)=1-\frac{\sum_{i=1}^{n}(y_i-\mathbf{x}_i\hat{\beta})^2}{\sum_{j=1}^{n}(y_j-\bar{y})^2},
\end{align}
with $G(y_i;\mathbf{x}_i;\hat{\delta}_n)=1-\frac{1}{\hat{\delta}_n}(y_i-\mathbf{x}_i\hat{\beta})^2$ and $\hat{\delta}_n=\frac{1}{n}\sum_{j=1}^{n}(y_j-\bar{y})^2$. 
The corresponding population $R^2$ is defined as $\mathbb{E}_{y,\mathbf{x}}(G(y;\mathbf{x};\delta_0))=1-\frac{1}{\sigma_y^2}\mathbb{E}_{y,\mathbf{x}}\left({(y-\mathbf{x}\hat{\beta})^2}\right)$, with $G(y;\mathbf{x};\delta_0)=1-\frac{1}{\hat{\delta}_0}(y-\mathbf{x}\hat{\beta})^2$ and $\delta_0=\sigma_y^2$ the variance of the target variable. 

This set of assumptions imposes minimal constraints. In particular, Assumption \ref{assumption_convergence} is satisfied under three key conditions related to the performance metric, the sampling assumptions, and the consistency of the estimator $\hat{\delta}_n$. First, regarding the performance metric, the function $G(\cdot)$ needs to be integrable with respect to the joint distribution of $y$ and $\mathbf{x}$. This ensures that $\mathbb{E}_{y, \mathbf{x}}(G(y; \mathbf{x}; \delta_0))$ exists and is finite. Second, depending on the type of data (cross-sectional or time series), specific assumptions about the joint distribution of $y$ and $\mathbf{x}$ are necessary. For cross-sectional data, we assume that the elements $G(y_i; \mathbf{x}_i; \hat{\delta}_n)$ are independently distributed or weakly dependent. This assumption is typically mild since $G(y_i; \mathbf{x}_i; \hat{\delta}_n)$ generally represents a function of the error term. For example, for the (opposite) MSE, we have $G(y_i; \mathbf{x}_i; \hat{\delta}_n) = -(y_i - \hat{f}(\mathbf{x}_i))^2$. This ensures the conditions for applying the weak law of large numbers when $n$ is large. For time series data, we assume that the elements $G(y_i; \mathbf{x}_i; \hat{\delta}_n)$ are stationary and weakly dependent. Finally, the estimator $\hat{\delta}_n$ should be consistent for $\delta_0$. This means $\hat{\delta}_n$ converges in probability to $\delta_0$ as the sample size $n$ increases.

\vspace{0.2cm}

\section{XPER values}
\label{XPER_section}
\medskip
  
\subsection{Definition}
  
Our objective is to identify the contribution of the model's features to its predictive performance, as evaluated by a PM on a given sample. We measure this contribution through Shapley values \citep{Shapley_1953}, a method used in game theory to fairly distribute a payoff $Val(x_1,...,x_q)$ among several players  $x_1,...,x_q$. The Shapley value $\phi_j$ measures the marginal impact of a player $x_j$ on the payoff by assessing the changes in $Val(x_1,...,x_q)$ when this player is added or not to a coalition of players already in the game. Denote by $\mathbf{x}^S$ the vector of players included in a coalition $S$ and $\mathbf{x}^{\overline{S}}$ the vector of excluded players, such that $ \{\mathbf{x}\} = \{\mathbf{x}^{S}\} \cup \{\mathbf{x}^{\overline{S}}\} \cup \{x_j\}$ and $\mathbf{x}=(x_1,...,x_q)$. The Shapley value is then defined as the weighted average of the marginal contributions associated with all possible coalitions $S$.

\begin{definition}[\cite{Shapley_1953}] \label{Shapley_1953_def}
	The Shapley value contribution of player $x_j$ to the payoff is:
	\begin{equation} 
		\phi _{j} = \sum_{S \subseteq \mathcal{P}(\{\mathbf{x}\} \setminus
			\left\{x_{j}\right\})}^{} \omega_{S} \left[Val(\mathbf{x}^{S} \cup \{x_j\}) - Val(\mathbf{x}^{S})\right],
   \label{Shapley_formula}
	\end{equation}
	\begin{equation} 
        \label{Equation_weight}
		\omega_{S} = \frac{|S|!\left(q-|S|-1\right)!}{q!},
	\end{equation}
    with $Val(.)$ the payoff, $S$ a coalition of players, excluding the player of interest $x_{j}$, $|S|$ the number of players in the coalition, and $\mathcal{P}(\{\mathbf{x}\}\setminus \left\{x_{j}\right\})$ the powerset of the set $\{\mathbf{x}\} \setminus \left\{x_{j}\right\}$. 
\end{definition} 
		
By analogy, we decompose the performance metric $\mathbb{E}_{y,\mathbf{x}}(G(y;\mathbf{x};\delta_0))$ (the ``payoff'') among the features $x_1,...,x_q$ (the ``players'') of the model $\hat{f}(\mathbf{x})$ (the ``game''). The main difference with the previous notations is that the payoff $Val(\mathbf{x};y)=\mathbb{E}_{y,\mathbf{x}}(G(y;\mathbf{x};\delta_0))$ depends on the joint distribution of the features $x_1,...,x_q$ and the target variable $y$. %Thus, we need to consider the dependence between the target variable and the features $\mathbf{x}^S$ included in coalition $S$.
%Formally, we define XPER as the marginalization-based Shapley value associated with a given model $\hat{f}(\mathbf{x})$ and a corresponding performance metric $G(y;\mathbf{x};\delta_0)$. 
Formally, we define XPER as the Shapley value associated with the performance metric $\mathbb{E}_{y,\mathbf{x}}(G(y;\mathbf{x};\delta_0))$ and to the model $\hat{f}(\mathbf{x})$. 

\begin{definition}[XPER] \label{XPER_definition}
	The XPER value associated with the feature $x_j$ is defined as:
    \begin{equation*}
        \phi _{j}=\sum_{S\subseteq  \mathcal{P}(\{\mathbf{x}\}\setminus \left\{
      x_{j}\right\} )} \omega_{S}\left[ \underset{\text{averaging}}{%
        \underbrace{\mathbb{E}_{y,x_{j},\mathbf{x}^{S}}}} \hspace{0.1cm}
       \underset{\text{marginalization}}{\underbrace{%
      \mathbb{E}_{\mathbf{x}^{\overline{S}}}}}
       \left( G\left( y;\mathbf{x}
       ;\delta _{0}\right) \right) -\underset{\text{averaging}}{%
      \underbrace{\mathbb{E}_{y,\mathbf{x}^{S}}}} \hspace{0.1cm}
       \underset{\text{marginalization}}{\underbrace{%
      \mathbb{E}_{x_{j}\mathbf{x}^{\overline{S}}}}}
      \left( G\left( y;\mathbf{x}
      ;\delta _{0}\right) \right) \right] ,
       \label{DefGSHAP}
    \end{equation*}
	  with S a coalition of features, excluding the feature of interest $x_{j}$, $\mathcal{P}(\{\mathbf{x}\}\setminus \left\{x_{j}\right\})$ the powerset of the set $\{\mathbf{x}\} \setminus \left\{x_{j}\right\}$, and $\omega_{S}$ the coalition weight. The expectations are taken over the joint distribution of the features and/or target variable specified in the subscripts.
\end{definition}
 
The XPER value $\phi_j$ measures the weighted average marginal contribution of the feature $x_j$ to the PM over all features coalitions. For each coalition, the marginal contribution of $x_j$ is defined as the difference between the expected values of (1) the PM obtained while including this feature in the coalition and (2) the PM obtained while excluding this feature from the coalition. In Definition \ref{DefGSHAP}, the term  $\mathbb{E}_{\mathbf{x}^{\overline{S}}}$ refers to the marginalization with respect to the \textit{joint distribution} of the features which are excluded from the coalition $S$.\footnote{An alternative approach would be to compute the expectation over the conditional distribution of the excluded features, $\mathbf{x}^{\overline{S}}$, given the joint distribution of the target variable $y$, the feature of interest $x_j$, and the features included in the coalition $\mathbf{x}^{S}$, i.e., $\mathbb{E}_{\mathbf{x}^{\overline{S}}|y,x_{j},\mathbf{x}^{S}}$. However, the corresponding decomposition would no longer satisfy the four axioms mentioned in Section \ref{Axioms_Section}. See the discussion about the Conditional Expectation Shapley (CES) methods in \cite{Sundararajan2020}.} The second expectation $\mathbb{E}_{y,x_j,\mathbf{x}^{S}}\left(G\left(y;\mathbf{x};\delta_0\right)\right)$ refers to an averaging effect, i.e., expectation with respect to the \textit{joint distribution} of the features $\mathbf{x}^{S}$ included in the coalition with $x_j$, and the target variable $y$.

\bigskip

	\begin{table}[!htb]
		\centering
		\caption{Components of $\phi_1$ in a three-feature model}
		\begin{threeparttable} 
			\begin{tabular}{ccc}
				\hline
				$S$  & $\omega_{S}$ & $\mathbb{E}_{y,x_1,\mathbf{x}^{S}}\mathbb{E}_{\mathbf{x}^{\overline{S}}}\left(
				G\left(y;\mathbf{x};\delta_0\right)\right)- \mathbb{E}_{y,\mathbf{x}^{S}}\mathbb{E}_{x_1,\mathbf{x}^{\overline{S}}}\left(
				G\left(y;\mathbf{x};\delta_0\right)\right)$ \\
				\hline
				$\left\{\emptyset\right\}$ & $1/3$ &
				$\mathbb{E}_{y,x_1}\mathbb{E}_{x_2,x_3}\left(
				G\left(y;\mathbf{x};\delta_0\right)\right)-\mathbb{E}_{y}\mathbb{E}_{x_1,x_2,x_3}\left(
				G\left(y;\mathbf{x};\delta_0\right)\right)$\\
				$\left\{x_{2}\right\}$ & $1/6$ &
				$\mathbb{E}_{y,x_1,x_2}\mathbb{E}_{x_3}\left(
				G\left(y;\mathbf{x};\delta_0\right)\right)-\mathbb{E}_{y,x_2}\mathbb{E}_{x_1,x_3}\left(
				G\left(y;\mathbf{x};\delta_0\right)\right)$ \\
				$\left\{x_{3}\right\}$ & $1/6$ &  $\mathbb{E}_{y,x_1,x_3}\mathbb{E}_{x_2}\left(
				G\left(y;\mathbf{x};\delta_0\right)\right)-\mathbb{E}_{y,x_3}\mathbb{E}_{x_1,x_2}\left(
				G\left(y;\mathbf{x};\delta_0\right)\right)$ \\
				$\left\{x_{2},x_{3}\right\}$ & $1/3$ & $\mathbb{E}_{y,x_1,x_2,x_3}\left(
				G\left(y;\mathbf{x};\delta_0\right)\right)-\mathbb{E}_{y,x_2,x_3}\mathbb{E}_{x_1}\left(
				G\left(y;\mathbf{x};\delta_0\right)\right)$\\
				\hline
			\end{tabular}
		\end{threeparttable}
		\label{Global_analysis}
        \medskip \medskip 
        
        \justifying \noindent {\footnotesize Note: This table provides details about the XPER value computation, i.e., the coalitions (column 1), the associated weights according to Equation \eqref{Equation_weight} (column 2), and the marginal contributions (column 3).}
        \vspace{-0.25cm}
\end{table}

As an illustration, we consider a three-feature model $\hat{f}(x_1,x_2,x_3)$ and pick $x_1$ as the feature of interest. In Table \ref{Global_analysis}, we report all the coalitions among the set $\{x_2,x_3\}$ (column 1), the associated weights computed according to Equation \eqref{Equation_weight} (column 2), and the marginal contributions (column 3) used to compute the XPER value $\phi_1$. For instance, when considering a coalition with only $x_1$, we first compute the expectation of the PM with respect to the \textit{joint distribution} of excluded features $x_2$ and $x_3$, i.e., $\mathbb{E}_{x_2,x_3}(\tilde{G}(y,\hat{f}(x_1,x_2,x_3)))$. This corresponds to a marginalization of the PM with respect to the features which are excluded from the coalition. Then, we consider the expectation of the PM with respect to the \textit{joint distribution} of the features included in the coalition and the target, i.e., $y$ and $x_1$ in our example. Thus, we get an expected value $\mathbb{E}_{y,x_1}\mathbb{E}_{x_2,x_3}(\tilde{G}(y,\hat{f}(x_1,x_2,x_3)))$, where the first expectation refers to the averaging effect, whereas the second one refers to the marginalization effect. The Shapley value is then computed by summing the weighted differences in the expected PM obtained with or without the feature of interest, for all the coalitions of other features. 

One advantage of our marginalization-based approach is that there is no need to re-estimate any sub-model for each subset of features. We consider the expected value of the PM with respect to the features $\mathbf{x}^{\overline{S}}$ excluded from the coalitions, while leaving the model $\hat{f}(\mathbf{x})$ unchanged. An alternative solution would consist in estimating a submodel for each subset of features. For example, for a model with three features $x_1,x_2,x_3$, the computation of the Shapley value associated with $x_1$ would require to estimate four sub-models, namely without any feature, with $x_2$ only, with $x_3$ only, and with $x_2,x_3$, and then to re-estimate the same sub-models while including $x_1$ as an additional feature. This approach was adopted for instance by \cite{Israeli2007} to decompose the $R^2$ of a linear model. However, re-estimating a submodel with only a subset of the initial features can lead to model specification errors, e.g., omitted variable bias in the case of linear regression. This specification error necessarily distorts the Shapley value and thus may lead to an unreliable decomposition of the performance metric.

XPER is a weighted average of the expected performance improvement resulting from adding the feature of interest to a model (coalition) that ``turns off'' this feature. Therefore, the choice of the weighting scheme is crucial. To compute the XPER values, we utilize the coalition weights derived from Shapley's original definition \citep{Shapley_1953}, as defined in Equation \eqref{Equation_weight}. Given this definition, the weights of coalitions are typically uniform and based on the number of features. This choice offers a significant advantage: it guarantees compliance with the four axioms that define a Shapley value — efficiency, symmetry, linearity, and null effects. Therefore, most XAI methods relying on Shapley values adopt these coalition weights (e.g., \cite{Sundararajan2017, Lundberg2018, Casalicchio_2019, Bowen2020, SundararajanDhamdereAgarwal2020}). For instance, \cite{Lundberg2017} follows this approach when introducing the Shapley value to justify their Shapley Additive Explanation (SHAP) method.

\vspace{0.2cm}

 \subsection{Axioms} \label{Axioms_Section}
	
    The XPER values satisfy different axioms associated with Shapley values. These axioms are particularly relevant in the context of statistical performance analysis.

	\begin{axiom}[Efficiency] \label{efficiency}
     The sum of the XPER values $\phi_j$, $\forall j=1,...,q$, is equal to the difference between the performance metric $\mathbb{E}_{y,\mathbf{x}}(G(y;\mathbf{x};\delta_0))$ and a benchmark $\phi_0$ such as:
	    \begin{equation} \label{Efficiency}
			\mathbb{E}_{y,\mathbf{x}}(G(y;\mathbf{x};\delta_0)) = \phi_0 + \sum_{j=1}^{q}\phi_j,
		\end{equation}
	 where $\phi_0=\mathbb{E}_{\mathbf{x}}\mathbb{E}_{y}\left(G(y;\mathbf{x};\delta_0)\right)$ corresponds to the performance metric associated with a population where the target variable is independent from all features considered in the model.
	\end{axiom}

 \vspace{-0.5cm}
 
%The proof is reported in Appendix \ref{Proof1}. 
One of the main advantages of the XPER decomposition is that the benchmark value $\phi_0$ has an insightful interpretation: it corresponds to the PM that we would obtain on a hypothetical sample in which the target variable $y$ is independent from all model features $\mathbf{x}$, i.e., in a case where the model $\hat{f}(\mathbf{x})$ is fully misspecified. As an example, for the AUC, the benchmark $\phi_0$ corresponds to the AUC associated with a random predictor and is equal to $0.5$. For the sensitivity (true positive rate), the benchmark corresponds to the probability $\Pr(\hat{y}=1)$, for the specificity (true negative rate) the benchmark is $\Pr(\hat{y}=0)$, etc.
%Formally, the benchmark value $\phi_0$ is the expected value of the population metric $\mathbb{E}_{y,\mathbf{x}}(G(y;\mathbf{x};\delta_0)$ obtained under the assumption that model features are independent of the target, and is then equal to $\mathbb{E}_{y}\mathbb{E}_{\mathbf{x}}(G(y;\mathbf{x};\delta_0)$.  
Thus, we can decompose any PM into two parts: (i) a base value $\phi_0$ obtained in a hypothetical case where $y$ and $\mathbf{x}$ would be independent, and (ii) a component determined by the XPER feature contributions, which depends on their dependence with the target, i.e., their relevance. %By definition, this second component is equal to the sum of the XPER values $\phi_j$ associated with the model features. 
\vspace{-0.5cm}
\begin{equation*}
\underset{\text{population performance metric}}{\underbrace{\mathbb{E}_{y,\mathbf{x}}(G(y;\mathbf{x};\delta_0)}} = \underset{\text{expected value under independence}}{\underbrace{\mathbb{E}_{y}\mathbb{E}_{\mathbf{x}}(G(y;\mathbf{x};\delta_0)}} +
\underset{\text{feature contributions}}{\underbrace{\sum_{j=1}^{q}\phi_j}.}
\end{equation*}

\noindent The XPER values also satisfy the other main axioms of the Shapley values: 

    \begin{axiom}[Symmetry] If for all subsets $S \subseteq \mathcal{P}(\{\mathbf{x}\} \setminus
			\left\{x_{j},x_{k}\right\})$
    \begin{equation*}
        \mathbb{E}_{y,x_{j},\mathbf{x}^{S}}\mathbb{E}_{\mathbf{x}^{\overline{S}}}\left( G\left( y;\mathbf{x},x_j
       ;\delta _{0}\right) \right) = \mathbb{E}_{y,x_{k},\mathbf{x}^{S}}\mathbb{E}_{\mathbf{x}^{\overline{S}}}\left( G\left( y;\mathbf{x},x_k
       ;\delta _{0}\right) \right),
    \end{equation*}
    then $\phi_{j} = \phi_k$. It means that if two features $x_j$ and $x_k$ contribute equally to the performance of the model then their effects must be the same.
    \end{axiom}

    \begin{axiom}[Linearity] For a performance metric $PM$ such that $PM=PM_1+PM_2$, then
        \begin{equation*}
            \phi_j(PM) = \phi_j(PM_1) + \phi_j(PM_2),
        \end{equation*}
        where $\phi_j(PM)$ refers to the XPER value of feature $x_j$ measuring its effect on the performance metric defined as the sum of $PM_1$ and $PM_2$.
    \end{axiom}

\begin{axiom}[Null effects]  \label{Dummy_property} If for all subsets $S \subseteq \mathcal{P}(\{\mathbf{x}\} \setminus
			\left\{x_{j}\right\})$
   \begin{equation*}\mathbb{E}_{y,x_{j},\mathbf{x}^{S}}\mathbb{E}_{\mathbf{x}^{\overline{S}}}\left( G\left( y;\mathbf{x},x_j
       ;\delta _{0}\right) \right) = \mathbb{E}_{y,\mathbf{x}^{S}}\mathbb{E}_{x_{j},\mathbf{x}^{\overline{S}}}\left( G\left( y;\mathbf{x},x_j
       ;\delta _{0}\right) \right),
   \end{equation*}
   then $\phi_j = 0$. A feature without any impact on the performance of the model $\mathbb{E}_{y,\mathbf{x}}(G(y;\mathbf{x};\delta_0))$ has an XPER value equal to 0.
% If the model feature $x_j$ does not have any impact on the performance metric $\mathbb{E}_{y,\mathbf{x}}(G(y;\mathbf{x};\delta_0))$, then its XPER value $\phi_{j}$ is null, i.e., $\phi_{j}=0$.
\end{axiom}

% \begin{property} (Monotonicity) If a feature $x_j$ contributes more to the performance metric 
%      $\mathbb{E}_{y,\mathbf{x}}(G(y;\mathbf{x};\delta_0))$ than a feature $x_s$ then $\phi_j > \phi_{s}$.
% \end{property}

% \begin{proposition} (Symmetry) If two features $x_j$ and $x_s$ contribute equally to the performance metric $\mathbb{E}_{y,\mathbf{x}}(G(y;\mathbf{x};\delta_0))$  across all coalitions then $\phi_j = \phi_{s}$.
% \end{proposition}

 %\vspace{-0.5cm}
% The proofs are reported in Appendices \ref{Proof2} and \ref{Proof3}. 
To illustrate these axioms, consider a linear regression model $\hat{f}(\mathbf{x}_i)= \sum_{j=1}^{q}\hat{\beta}_jx_{i,j}$ estimated on a training set, and the $R^2$ as PM. For simplicity, we assume that the Data Generating Process (DGP)
	of the test sample $S_n=\{\mathbf{x}_i,y_i,\hat{f}(\mathbf{x}%
	_i)\}^n_{i=1}$ satisfies $	\mathbb{E}\left(\mathbf{x}\right) = \mu_q 
	\mbox{ and } \mathbb{V}(\mathbf{x}) = \Sigma$ a positive semi-definite matrix, with $\Sigma_{k,j} = \sigma_{x_k,x_j}$ the covariance between feature $x_k$ and $x_j$. Then, the XPER contribution $\phi_{j}$ of feature $x_j$ to the $R^2$ is:
\begin{equation} \label{R2_theory}
        \phi_{j}=\frac{2\hat{\beta}_j\sigma_{y,x_j}}{\sigma^2_y},  \qquad  \quad  \forall j=1,...,q,
\end{equation}
with $\sigma_y^2$ the variance of the target variable and $\sigma_{y,x_j}$ its covariance with feature $x_j$. See Appendix \ref{R2_global_proof} for the proof. Formally, $\phi_j$ depends on the estimated parameter $\hat{\beta}_j$ (training set) and the covariance  (test set) between $x_j$ and the target variable $y$, i.e., $\sigma_{y,x_j}$. If the DGP of the training and test samples are identical, model parameters $\hat{\beta}_j$ and covariance $\sigma_{y,x_j}$ have the same sign, which means $\phi_j > 0$.\footnote{
Note that the XPER values $\phi_j$ can be negative, even if the DGP of the \textit{training} and \textit{test} samples are identical. This occurs in a particular case where the true parameter $\beta_j$ of the feature of interest $x_j$ is null. In this case, the parameter estimate $\hat{\beta}_j$ obtained using the \textit{training sample} can be either positive or negative. Additionally, the estimate of the covariance between the target variable $y$ and the feature $x_j$, denoted $\hat{\sigma}_{y,x_j}$, obtained using the \textit{test sample} can also be either positive or negative, even if the true covariance $\sigma_{y,x_j}$ is null as $\beta_j=0$. Therefore, if the sign of the parameter estimate $\hat{\beta}_j$ and the covariance estimate $\hat{\sigma}_{y,x_j}$ differ, then the XPER value $\phi_j$ can be negative.} Otherwise, XPER values may be negative. Similarly, if $\hat{\beta}_j=0$ (i.e., the variable is useless in the model) or if the feature is uncorrelated with the target variable on the test sample ($\sigma_{y,x_j}=0$), then $\phi_j=0$. Finally, a feature $x_j$ has a larger contribution to the $R^2$ than a feature $x_s$ if $\hat{\beta}_j\sigma_{y,x_j} > \hat{\beta}_s\sigma_{y,x_s}$, meaning that $x_j$ is more related to the target variable than $x_s$ both in-sample (through $\hat{\beta}_j$) and out-of-sample (through $\sigma_{y,x_s}$). 

\newpage

In this simple framework, the XPER values correspond to the contribution of the features to the explained variance, as typically computed in econometrics. For ease of explanation, consider the following DGP:
\begin{equation}
    y = x_1 + 3x_2 + \varepsilon, \quad \mathbb{V}(x_1)=\mathbb{V}(x_2)=1, \quad \mathbb{V}(\varepsilon)=10.
\end{equation}
In this framework, the variance explained by the model is typically decomposed as follows:
        \begin{align} \label{R2_theory_2_}
            R^2 = \frac{\mathbb{V}(\hat{y})}{\mathbb{V}(y)}=\frac{\hat{\beta}_1 \sigma_{y,x_1}}{\sigma_y^2}+\frac{\hat{\beta}_2 \sigma_{y,x_2}}{\sigma_y^2}= \tilde{\phi}_1 + \tilde{\phi}_2 .
        \end{align}
Thus, in our example, the $R^2$ is equal to 50\%, with the contributions of features $x_1$ and $x_2$ to the $R^2$ amounting to 5\% and 45\%, respectively. Similarly, the XPER decomposition is:
\begin{align} \label{R2_theory_2__}
            R^2 = \phi_0 + \frac{2 \hat{\beta}_1\sigma_{y,x_1}}{\sigma^2_{y}}+\frac{2 \hat{\beta}_2\sigma_{y,x_2}}{\sigma^2_{y}}=\phi_0 + \phi_1 + \phi_2 ,
        \end{align}
where $\phi_j$ represents the XPER value associated with the feature $x_j$ and $\phi_0 = -R^2$ the benchmark value.\footnote{Appendix \ref{Appendix:Benchmark} presents a detailed example illustrating the interpretation of the benchmark value $\phi_0$ in the context of a standard linear regression model and the decomposition of the $R^2$.} From Equations \eqref{R2_theory_2_} and \eqref{R2_theory_2__}, we can see that the contribution $\tilde{\phi}_j$ of the feature $x_j$ to the explained variance can be derived from the XPER value by normalizing it with respect to the difference $R^2 - \phi_0$:
\begin{align}
    \tilde{\phi}_j = \frac{\phi_j}{R^2-\phi_0}.
\end{align} 
We generally report normalized XPER values, defined as $\phi_j / (\text{PM} - \phi_0)$, where $\text{PM}$ denotes the performance metric under consideration (e.g., $R^2$). Thus, in this particular setting, the XPER values exactly match the features’ contributions to the explained variance. 
%This highlights that, in a simple case, the decomposition obtained with XPER aligns perfectly with the standard decomposition used in econometrics.

In many cases, analytical expressions for the XPER values $\phi_{j}$ are not available. Indeed, the function $G\left(y;\mathbf{x};\delta_0\right)$ is in general non-linear in $y$ and $\mathbf{x}$, as the model $\hat{f}(.)$ can be highly non-linear in $\mathbf{x}$ (e.g., machine learning model) and/or the performance metric can be non-linear in $y$ and $\mathbf{x}$. 
See Appendix \ref{illustrations} for the XPER values decomposition of several regression and classification performance metrics.

% Most of the time, there is no analytical expression for the XPER values $\phi_{j}$. Indeed, the function $G\left(y;\mathbf{x};\delta_0\right)$ is in general non-linear in $y$ and $\mathbf{x}$, as the model $\hat{f}(.)$ may be highly non-linear in $\mathbf{x}$ (e.g., machine learning model) and/or the performance metric may be non-linear in $y$ and $\mathbf{x}$. 
% %Then, the expectation cannot be computed analytically and the XPER values are obtained numerically. 
% One exception is the case of quadratic loss functions, e.g., MSE, $R^2$, etc., associated with the linear regression model (see Appendix \ref{illustrations}). 

%Finally, the XPER values $\phi _{j}$ are theoretical and unobserved quantities. In Section \ref{estimation_small}, we propose an estimation method for $\phi _{j}$ to accommodate models with few features, and in Appendix \ref{feasible_XPER} its counterpart approach when the number of features rises.
%In Section \ref{estimation_small} and Appendix \ref{feasible_XPER}, we propose two estimation methods for $\phi _{j}$ to accommodate models with either few features or with a large number of features, respectively. 

\vspace{0.2cm}

\subsection{Individual XPER values}

\vspace{0.25cm}

\subsubsection{Definition}

\vspace{-0.25cm}
The XPER framework can be used to conduct a global analysis of the model predictive performance through feature contributions $\phi_{j}$, or a local analysis at the individual level. 
	
 \begin{definition} (Individual XPER) \label{def_ind_XPER}
		The individual XPER value $\phi_{i,j}$ associated with individual $i$ and model feature $j$ satisfies $\phi_{j} = \mathbb{E}_{y,\mathbf{x}}(\phi_{i,j}(y_i;\mathbf{x}_i))$, where the random variable $\phi_{i,j}(y_i;\mathbf{x}_i)$ corresponds to  individual $i$ and feature $j$  contribution to the performance metric defined as:
		\begin{equation*}  \label{DefLSHAP}
			\phi _{i,j}(y_i;\mathbf{x}_i) = \sum_{S \subseteq \mathcal{P}(\{\mathbf{x}\} \setminus
				\left\{x_{j}\right\})}^{} w_S \left[\mathbb{E}_{\mathbf{x}^{\overline{S}}}\left(
			G\left(y_i;x_{i,j},\mathbf{x}_i^S,\mathbf{x}^{\overline{S}};\delta_0\right)\right) - \mathbb{E}_{x_j,\mathbf{x}^{\overline{S}}}\left(
			G\left(y_i;x_{i,j},\mathbf{x}_i^S,\mathbf{x}^{\overline{S}};\delta_0\right)\right)\right],
		\end{equation*}
with $\mathbb{E}_{\mathbf{x}^{\overline{S}}}(\cdot)$ the expectation with respect to the joint distribution of the features $\mathbf{x}^{\overline{S}}$ and $\mathbb{E}_{x_j,\mathbf{x}^{\overline{S}}}(\cdot)$ the expectation with respected to the joint distribution of the features $\mathbf{x}^{\overline{S}}$ and $x_j$.
\end{definition}

The individual XPER value can be interpreted as the contribution of individual $i$ and model feature $j$ to the performance metric. For a given realization ($y_i,\mathbf{x}_i$), the corresponding individual contribution to the performance metric can be broken down into:
\begin{align} \label{ind_decompos}
	G(y_i;\mathbf{x}_i;\delta_0) = \phi_{i,0} + \sum_{j=1}^{q}\phi_{i,j},
\end{align}
	where $\phi_{i,j}$ is the realization of $\phi _{i,j}(y_i;\mathbf{x}_i)$ and $\phi_{i,0}$ is the realization of $\phi_{i,0}(y_i)= \mathbb{E}_{\mathbf{x}}(G(y_i;\mathbf{x};\delta_0))$. The benchmark $\phi_{i,0}$ corresponds to the contribution of an individual to the performance metric when the target variable $y_i$ is independent from the features $\mathbf{x}_i$. Therefore, the difference between the individual contribution to the performance metric $G(y_i;\mathbf{x}_i;\delta_0)$ and the individual benchmark $\phi_{i,0}$ is explained by individual XPER values $\phi_{i,j}$. 
		
    As an illustration, consider the $R^2$ of a linear regression. Then, the individual XPER value $\phi_{i,j}$ can be expressed as: 
 \begin{equation}  \label{R2_ind_theory}
    \phi_{i,j} = \sigma^{-2}_y\left(\hat{\beta}_j(x_{i,j} - \mathbb{E}(x_{j}))A -\hat{\beta}_j^2(x_{i,j}^2 - \mathbb{E}(x_{j}^2))  + \sum_{\substack{k=1\\ k \neq j}}^q\hat{\beta}_k\hat{\beta}_j\sigma_{x_k,x_j}   \right),
\end{equation}
with $A=\left( 2y_i -\sum_{\substack{k=1\\ k \neq j}}^q\hat{\beta}_k(x_{i,k} +\mathbb{E}(x_{k})) \right)$. See Appendix \ref{proof_R2_ind_theory} for the proof. The value $\phi_{i,j}$ measures the contribution of feature $x_j$ to $G(y_i;\mathbf{x}_i;\delta_0)=1-\frac{1}{\sigma_y^{2}}(y_i-\hat{f}(\mathbf{x}_i))^2$. A positive individual XPER value $\phi_{i,j}$ means that incorporating the information included in feature $x_j$ improves model prediction for individual $i$ compared to the benchmark, hence increasing $R^2$. This gives rise to several observations. First, if $\hat{\beta}_j=0$, i.e., the feature has no impact on the model outcome, the feature $x_j$ does not have any impact on the $R^2$, for all individuals. Second, the closer the realization of feature $x_{j}$ is to its expected value $\mathbb{E}(x_j)$, the lower is the contribution of this feature to the performance metric, for all individuals. A similar result occurs when $x_j^2$ is close to its expected value.
    Indeed, when the characteristics of an individual are close to the mean values in the population, his or her contribution to the predictive performance of the model is also close to the average contribution of other individuals. We include a numerical example in Appendix \ref{R2_ind_XPER} to provide further intuition.

    \subsubsection{Dealing with model heterogeneity}

    \vspace{-0.25cm}
    Individual XPER values can be particularly useful to detect and understand sources of \textit{model heterogeneity}. In machine learning and econometrics, \textit{model homogeneity} refers to a consistent relationship between features and the target variable across the entire dataset, implying that a single model can adequately capture the underlying patterns and make accurate predictions for all data points. Conversely, \textit{model heterogeneity} arises when this relationship varies across different parts of the dataset, indicating that a single model may fail to capture the diverse patterns within the data. In such cases, different models may be needed to achieve better predictive accuracy, indicating distinct feature-target relationships within subgroups of the data.
    
    To identify these subgroups, we suggest relying on the individual XPER values. Indeed, by definition, individual XPER values measure the contribution of features on individual predictive performance, hence capturing the relationship between the features and the target variable. Intuitively, if a feature's impact on performance varies between two groups, it indicates that the feature does not exert the same influence on the accuracy of the classification or regression model. 

    In practice, model heterogeneity can be revealed by applying standard clustering techniques to individual XPER values, such as the K-Medoids algorithm \citep{park2009simple}. Specifically, we propose the following three-step approach. First, using the original classification or regression model applied to the entire training sample, we calculate the individual XPER values based on the features $\mathbf{x}$ and the true target variable $y$. Second, we apply a clustering algorithm to the XPER values to identify $C$ clusters that reveal the heterogeneity in the model's performance. Third, we estimate a separate model for each of the $C$ subgroups derived from the initial sample. The empirical application presented in Section \ref{Heterogeneity_Emp_App} suggests that this approach could lead to an improvement in the model's accuracy.
 
    \section{Estimation}

    \label{estimation_small}
	In this section, we discuss the estimation procedure for the XPER values  $\phi_{j}$ and $\phi_{i,j}$, based on a test sample $%
	S_n=\{\mathbf{x}_i,y_i,\hat{f}(\mathbf{x}%
	_i)\}^n_{i=1}$. Below, we assume that the model has a small number of features, typically $q<10$.\footnote{The standard estimation framework is only feasible for a small number of features $q$. The computation of the XPER values $\phi_j$ according to Definition \ref{XPER_definition} becomes cumbersome as the number of features increases. Indeed, it requires to consider $q\times 2^{q-1}$ coalitions of features, e.g., $5,120$ for $q = 10$ and $10,485,760$ for $q = 20$, etc.} When the number of features is larger, an approximation of the XPER values is required and we propose a modified version of the Kernel SHAP method of \cite{Lundberg2017} (see Appendix \ref{feasible_XPER} for more details).
 	
	Under Assumption \ref{additive}, the XPER value $\phi_{j}$ can be estimated by a weighted average of individual contributions differences such as:
 	%\begin{proposition}
     %	A consistent estimator of the XPER value associated with $x_j$ is:
		\begin{equation}  \label{DefGSHAP_emp} 
			\hat{\phi} _{j} = \sum_{S \subseteq \mathcal{P}(\{\mathbf{x}\} \setminus
				\left\{x_{j}\right\})}^{} w_S \left[\frac{1}{n^2}\sum_{u=1}^{n}\sum_{v=1}^{n}
			G\left(y_v;x_{v,j},\mathbf{x}_v^{S},\mathbf{x}_u^{\overline{S}};\hat{\delta}_n\right) - \frac{1}{n^2}\sum_{u=1}^{n}\sum_{v=1}^{n}
			G\left(y_v;x_{u,j},\mathbf{x}_v^{S},\mathbf{x}_u^{\overline{S}};\hat{\delta}_n\right)\right],
		\end{equation}
		 with $S$ a coalition, i.e., a subset of features, excluding the
		feature of interest $x_{j}$, and $\mathcal{P}(\{\mathbf{x}\}
		\setminus \left\{x_{j}\right\})$ the powerset of the set $\{\mathbf{x}\} \setminus \left\{x_{j}\right\}$. 
	%\end{proposition}

\begin{table}[!htb]
		\centering
		\caption{Computation of the XPER value $\hat{\phi}_1$ in a three-feature model}
		\begin{threeparttable} 
			\begin{tabular}{ccc} 
				\hline 
				$S$  & $w_S$ & $\frac{1}{n^2}\sum_{u=1}^{n}\sum_{v=1}^{n}
				G\left(y_v;x_{v,j},\mathbf{x}_v^{S},\mathbf{x}_u^{\overline{S}};\hat{\delta}_n\right)- \frac{1}{n^2}\sum_{u=1}^{n}\sum_{v=1}^{n}
				G\left(y_v;x_{u,j},\mathbf{x}_v^{S},\mathbf{x}_u^{\overline{S}};\hat{\delta}_n\right)$ \\
				\hline \vspace{0.2cm}
				$\left\{\emptyset\right\}$ & $1/3$ &
				$\frac{1}{n^2}\sum_{u=1}^{n}\sum_{v=1}^{n}
				G\left(y_v;x_{v,1},x_{u,2},x_{u,3};\hat{\delta}_n\right)-\frac{1}{n^2}\sum_{u=1}^{n}\sum_{v=1}^{n}
				G\left(y_v;x_{u,1},x_{u,2},x_{u,3};\hat{\delta}_n\right)$ \\ \vspace{0.2cm}
				$\left\{x_{2}\right\}$ & $1/6$ & $\frac{1}{n^2}\sum_{u=1}^{n}\sum_{v=1}^{n}
				G\left(y_v;x_{v,1},x_{v,2},x_{u,3};\hat{\delta}_n\right)-\frac{1}{n^2}\sum_{u=1}^{n}\sum_{v=1}^{n}
				G\left(y_v;x_{u,1},x_{v,2},x_{u,3};\hat{\delta}_n\right)$ \\ \vspace{0.2cm}
				$\left\{x_{3}\right\}$ & $1/6$ &
				$\frac{1}{n^2}\sum_{u=1}^{n}\sum_{v=1}^{n}
				G\left(y_v;x_{v,1},x_{u,2},x_{v,3};\hat{\delta}_n\right)-\frac{1}{n^2}\sum_{u=1}^{n}\sum_{v=1}^{n}
				G\left(y_v;x_{u,1},x_{u,2},x_{v,3};\hat{\delta}_n\right)$ \\ \vspace{0.2cm}
				$\left\{x_{2},x_{3}\right\}$ & $1/3$ & $\frac{1}{n}\sum_{v=1}^{n}
				G\left(y_v;x_{v,1},x_{v,2},x_{v,3};\hat{\delta}_n\right)-\frac{1}{n^2}\sum_{u=1}^{n}\sum_{v=1}^{n}
				G\left(y_v;x_{u,1},x_{v,2},x_{v,3};\hat{\delta}_n\right)$ \\
				\hline
			\end{tabular}%
            \justifying \noindent {\footnotesize
                  \noindent Note: This table presents information on the empirical XPER value computation, i.e., the coalitions (column 1), the associated weights (column 2), and the estimated marginal contributions (column 3).}
		\end{threeparttable}
		\label{Global_analysis_emp}
	\end{table}
    
Table \ref{Global_analysis_emp} illustrates the computation of the estimated value $\hat{\phi}_1$ associated with feature $x_1$ in a model with three features $(x_1,x_2,x_3)$. For each coalition of features (column 1), we report the corresponding weight (column 2) along with the estimated marginal contribution of feature $x_1$ to the performance metric (column 3). The intuition is as follows: the sum over index $u$ refers to the marginalization effect, whereas the sum over index $v$ refers to the averaging effect. For a given instance $v$, we compute its average performance metric by replacing the features ${x}^{\overline{S}}$ which are \textit{not included} in the coalition by the corresponding values observed for all the instances of the test sample (marginalization). Then, we compute the average performance metric for all the instances $v$ (averaging).

	 Similarly to the estimation of features $\phi_{j}$, we can estimate the \textit{individual} XPER values $\phi_{i,j}$.\footnote{Performing inference on XPER values presents significant challenges. \citet{Williamson2020} proposed an approach for statistical inference on a variable importance measure called the ``Shapley Population Variable Importance Measure'' (SPVIM). However, this approach is incompatible with XPER values due to its reliance on sub-model re-estimation and its focus on global rather than individual analyses. Another potential approach involves using bootstrap methods to generate a block bootstrap distribution of XPER values, though the high computational cost remains an important limitation. Finally, conformal prediction \citep{Vovk2005, Romano2019} represents a promising avenue, as it could be adapted to construct prediction intervals for individual XPER values.}  For any individual $i$ of the test sample of $S_n$ and any feature $x_j$, a consistent estimator of the  individual XPER values $\phi_{i,j}$ is defined as:
		\begin{equation}  \label{DefLSHAP_emp} 
			\hat{\phi} _{i,j} = \sum_{S \subseteq \mathcal{P}(\{\mathbf{x}\} \setminus
				\left\{x_{j}\right\})}^{} w_S \left[\frac{1}{n}\sum_{u=1}^{n}
			G\left(y_i;x_{i,j},\mathbf{x}_i^{S},\mathbf{x}_u^{\overline{S}};\hat{\delta}_n\right) - \frac{1}{n}\sum_{u=1}^{n}
			G\left(y_i;x_{u,j},\mathbf{x}_i^{S},\mathbf{x}_u^{\overline{S}};\hat{\delta}_n\right)\right].
		\end{equation}
		 By definition, these individual XPER values satisfy:
		\begin{equation} \label{individual_contribution}
			\hat{\phi}_{j} = \frac{1}{n}\sum_{i=1}^{n}\hat{\phi}_{i,j}.
		\end{equation}

	\section{Simulations}
     In this section, we conduct a Monte Carlo simulation experiment to illustrate the XPER methodology and to highlight some of its properties. In this experiment, XPER values are used to explain the predictive performance of a probit model, measured by the AUC computed on a test set $S_n$. To understand the mechanisms of the XPER decomposition, we consider a white-box model for which the predictions are explainable. The DGP is given by a latent variable model such that $y_i = \mathbf{1}(y^*_i>0)$,	where $\mathbf{1}(.)$ is the indicator function, $y_i^{*} = \mathbf{\omega}_i\beta + \varepsilon_{i}$, $\mathbf{\omega}_i=(1:\mathbf{x}^{'}_i)$ and $\varepsilon_{i}$ an i.i.d. error term with  $\varepsilon_{i}\sim \mathcal{N}(0,1)$. We consider three i.i.d. features such that $\mathbf{x}_i=(x_{i,1},x_{i,2},x_{i,3})^{'} \sim \mathcal{N}(\mathbf{0},\mathbf{\Sigma})$ with $diag(\mathbf{\Sigma})=(1.2,1,1)$. The true vector of parameters is $\beta=(\beta_0,\beta_1,\beta_2,\beta_3)^{'} = (0.05,0.5,0.5,0)^{'}$ with $\beta_0$ the intercept. 
	
	 We simulate $K=5,000$ pseudo-samples $\lbrace y_i^k,\mathbf{x}_i^k \rbrace_{i=1}^{T+n}$ of size 1,000, for $k=1,\dots,K$. For each pseudo-sample, we use the first $T=700$ observations to estimate a probit model and the remaining ones as test set $S_n$ to compute the AUC and the corresponding XPER values according to Equation \eqref{DefGSHAP_emp}. For instance, the estimated parameters obtained for the simulation $k=1$ are equal to $\{\hat{\beta}_0^k,\hat{\beta}_1^k,\hat{\beta}_2^k,\hat{\beta}_3^k\} = \{0.0109,0.4943, 0.5234, 0.0688\}$ and the $AUC^k$ is equal to $0.7775$. For this simulation, the average feature contributions, taken over the individuals of the test set, are the following: \vspace{-1.25cm}
	\begin{equation*} 
		\underbrace{0.7775}_{AUC^k} = 	\underbrace{0.4984}_{\hat{\phi}_0^k} + \underbrace{0.1716}_{\hat{\phi}_1^k} + \underbrace{0.1098}_{\hat{\phi}_{2}^k} + \underbrace{(-0.0023)}_{\hat{\phi}_{3}^k}.
	\end{equation*} 
	%\vspace{-0.2cm}
 
	As expected, the estimated benchmark $\hat{\phi}_{0}^k$ is close to 0.5. As a reminder, an AUC equal to 0.5 is associated with a random predictor. Hence, the benchmark $\hat{\phi}_{0}^k$ corresponds to the AUC of the model that we would obtain on a virtual test sample in which the target variable is independent from all features. The difference between the estimated AUC and this hypothetical benchmark is explained by the feature contributions. The feature with the largest variance ($x_1$) has also the largest contribution to the predictive ability of the model ($0.1716/(0.7775-0.4984) \simeq 62\%$). In contrast, the contribution of $x_3$, which is not used to generate the data ($\beta_3 = 0$), is very close to zero. 

    \bigskip \bigskip 
         \begin{figure}[!h]
            \centering
            \begin{subfigure}[b]{0.49\textwidth}
			\makebox[\linewidth]{
			\includegraphics[width=0.6%
			\textwidth,trim={0 0.8cm 0 0},clip]{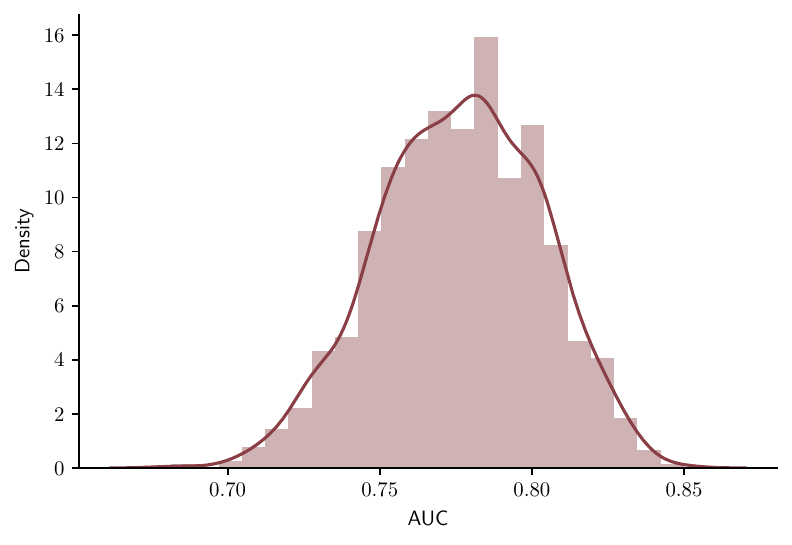}} %
			\caption{AUC}
			\end{subfigure}
			\hfill
			\begin{subfigure}[b]{0.49\textwidth}
			\makebox[\linewidth]{
			\includegraphics[width=0.6%
			\textwidth,trim={0 0.8cm 0 0},clip]{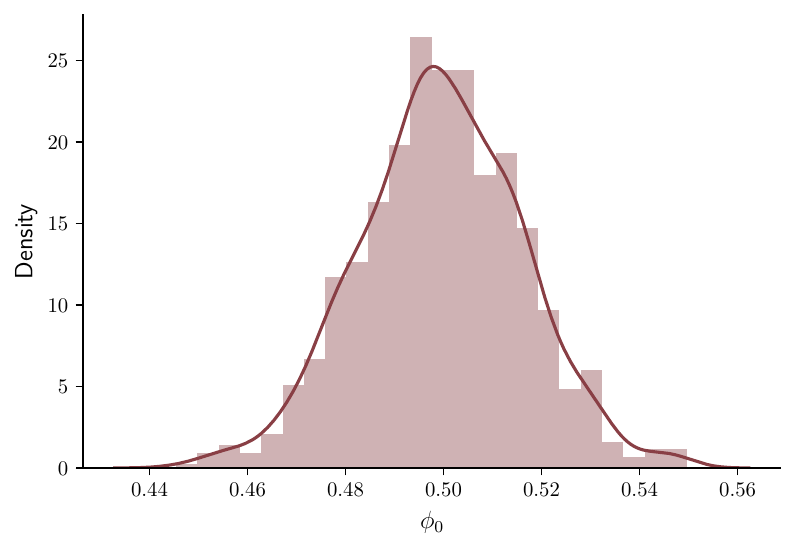}} %
			\caption{$\hat{\phi}_0$}
			\end{subfigure}
			\hfill
			\begin{subfigure}[b]{0.3\textwidth}
			\makebox[\linewidth]{
			\includegraphics[width=1%
			\textwidth,trim={0 0.8cm 0 0},clip]{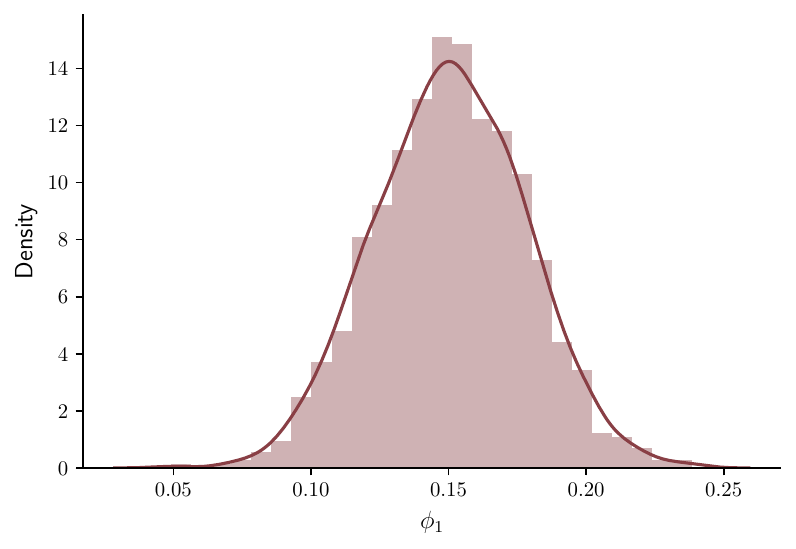}} %
			\caption{$\hat{\phi}_1$}
			\end{subfigure}
			\hfill
			\begin{subfigure}[b]{0.3\textwidth}
			\makebox[\linewidth]{
			\includegraphics[width=1%
			\textwidth,trim={0 0.8cm 0 0},clip]{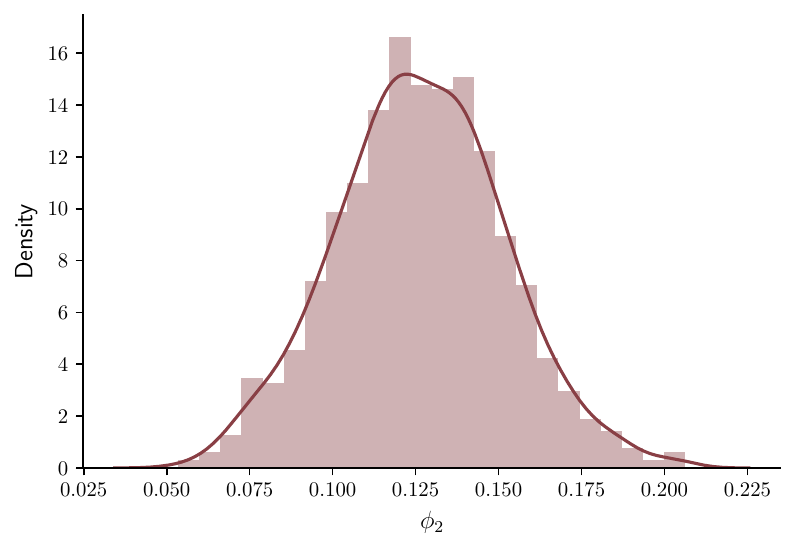}} %
			\caption{$\hat{\phi}_2$}
			\end{subfigure}
			\hfill
			\begin{subfigure}[b]{0.3\textwidth}
			\makebox[\linewidth]{
			\includegraphics[width=1%
			\textwidth,trim={0 0.8cm 0 0},clip]{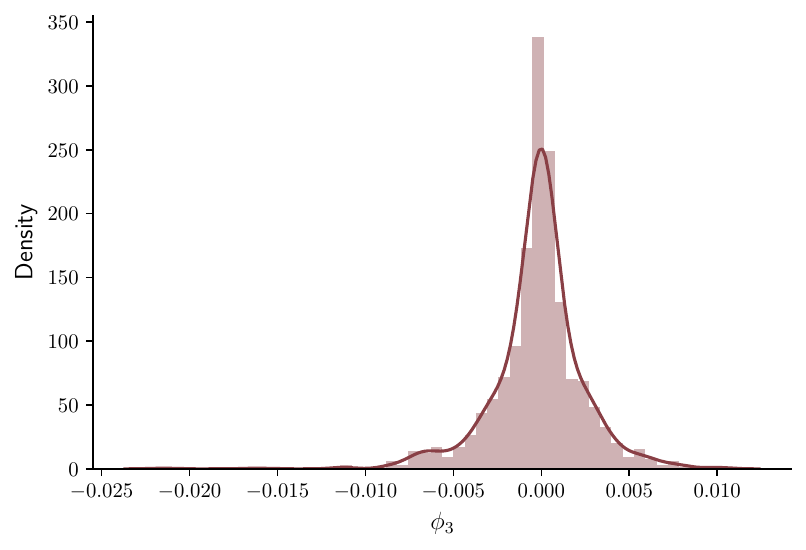}} %
			\caption{$\hat{\phi}_3$}
			\end{subfigure}
              \caption{Empirical distributions of AUC and XPER values}
              \label{illustration_1_AUC}
              \medskip \medskip
               \justifying \noindent {\footnotesize
                  \noindent Note: This figure displays the empirical distributions of the AUC and XPER values on the test sample according to the DGP detailed in Illustration 1. 
        The solid red lines refer to kernel density estimations.}
	%\vspace{-0.5cm}
	\end{figure}
    
    \noindent Figure \ref{illustration_1_AUC} displays the empirical distributions of the AUC, the benchmark values $\hat{\phi}_0$, and the XPER values associated with features $x_1$, $x_2$, and $x_3$, computed from the $K$ simulations. It confirms the robustness of our analysis, but also illustrates the possibility to make inference on XPER values by Bootstrap or other numerical methods.
    \bigskip 
 
    \begin{table}[!h]
	\centering
	\caption{Illustration of the AUC XPER decomposition for simulation $k=1$}
	\label{AUC_phi_i_j} 
	\resizebox{0.8\textwidth}{!}{ 
		\begin{tabular}{cccccccc} \toprule \addlinespace[10pt]
            \multicolumn{8}{c}{Panel A: Local analysis (individual XPER values)} 
			\vspace{7pt} \\ \midrule \addlinespace[5pt]
			{} &  $G(y_i^k;\mathbf{x}_i^k;\hat{\delta}_n^k)$ &   $\hat{\phi}_{i,0}^k$ &   $\hat{\phi}_{i,1}^k$ &   $\hat{\phi}_{i,2}^k$ &  $\hat{\phi}_{i,3}^k$ & $y_i^k$ & $\hat{p}_i^k$
			%&  $x_{i,1}$ & $x_{i,2}$ 
			 \\ \addlinespace[5pt]  %$\sum_{j=1}^{3}\hat{\phi}_{i,j}$
			\hline
    i=1   & 0.9000 & 0.5001 & 0.2450 & 0.1771 & -0.0221 & 1     & 0.6614 \\
     i=2   & 1.0000 & 0.5001 & 0.3975 & 0.1168 & -0.0144 & 1     & 0.8369 \\
    %i=2   & 1.0067 & 0.5001 & 0.2142 & 0.3203 & -0.0279 & 1     & 0.8785 \\
    %i=3   & 0.5334 & 0.5001 & -0.1245 & 0.1660 & -0.0083 & 1     & 0.3524 \\
    % i=5   & 0.6134 & 0.4967 & 0.1311 & 0.0211 & -0.0356 & 0     & 0.5076 \\
    ...   & ... & ... & ... & ... & ... & ...     & ... \\
    % i=296 & 0.5934 & 0.4967 & -0.0145 & 0.1344 & -0.0233 & 0     & 0.5346 \\
    i=297 & 0.2533 & 0.4967 & -0.1232 & -0.1237 & 0.0035 & 0     & 0.7616 \\
    % i=298 & 0.2267 & 0.4967 & -0.2364 & -0.0236 & -0.0100 & 0     & 0.7828 \\
    % i=299 & 0.9867 & 0.4967 & 0.3583 & 0.1196 & 0.0121 & 0     & 0.0894 \\
    ...   & ... & ... & ... & ... & ... & ...     & ... \\
    i=300 & 0.9600 & 0.4967 & 0.1316 & 0.3069 & 0.0249 & 0     & 0.1982 \\
    \toprule \addlinespace[10pt]
            \multicolumn{8}{c}{Panel B: Global analysis (XPER values)} 
			\vspace{7pt} \\ \midrule \addlinespace[7pt] 
            {} &  $AUC^k$ &   $\hat{\phi}_{0}^k$ &   $\hat{\phi}_{1}^k$ &   $\hat{\phi}_{2}^k$ &  $\hat{\phi}_{3}^k$ & $\frac{1}{n}\sum_{i=1}^{n}y_i^k$ & $\frac{1}{n}\sum_{i=1}^{n} \hat{p}_i^k$
			%&  $x_{i,1}$ & $x_{i,2}$ 
			\\ \addlinespace[7pt] \hline
          & 0.7775 & 0.4984 & 0.1716 & 0.1098 & -0.0023 & 0.4967 & 0.4941 \\ \hline
	\end{tabular}}
    
        \medskip  \medskip 
        
        \justifying \noindent {\footnotesize Note: This table presents both the individual XPER values (local analysis, Panel A) and the aggregated XPER values (global analysis, Panel B) associated with the AUC of a Probit model for each of the three features \( x_j \), where \( j = 1, 2, 3 \). The values correspond to those obtained for the test sample (sample size \( n=300 \)) during the first simulation (\( k = 1 \)). The additional columns provide the individual contributions to the AUC, the individual benchmarks, the target variable, and the conditional probabilities $\hat{p}_i^k=\hat{\mathbb{P}}(y_i^k=1|\mathbf{x}_i^k)$ derived from the Probit model.}
 \vspace{-0.5cm}
\end{table}
      % \vspace{-1cm}

	In Table \ref{AUC_phi_i_j}, we report the values of the individual XPER values obtained for the first simulation $k=1$. The third column displays the individual benchmark. For a binary classification model, the benchmark $\hat{\phi}_{i,0}\equiv \hat{\phi}_{i,0}(y_i)$ only takes two values. In our simulations, for individuals $y_i=1$ (respectively $y_i=0$), this value is equal to $0.5001$ (respectively $0.4967$). Remind that the individual benchmark corresponds to the contribution to the AUC obtained from a random predictor for an individual with a target value $y_i=1$ or $y_i=0$. Consider the first instance $i=1$ with $y_i^k=1$, $G(y^k_1;\mathbf{x}^k_1;\hat{\delta}^k_n)=0.9$ and $\hat{\phi}_{1,0}=0.5001$. As $G(y^k_1;\mathbf{x}^k_1;\hat{\delta}^k_n) > \hat{\phi}_{1,0}^k$, it means that the features allow the probit model to better predict the event $y_1$ for this instance than a random predictor. %On the contrary, for individual $i=300$ for which $G(y_{300};\mathbf{x}_{300};\hat{\delta}_n)=0.2533 < \hat{\phi}_{300,0}=0.4967$, the probit model is outperformed by a random predictor. Indeed, for this instance the estimated probability given by the model $\hat{P}(\mathbf{x}_{300})=0.7616$ is high, whereas the event does not occur ($y_{300}=0$). 

	 Finally, columns 4, 5, and 6 report the individual XPER values associated with features $x_1$, $x_2$, and $x_3$. First, we verify that feature $x_3$ has close to no impact on the AUC for all instances. Second, the heterogeneity of contributions to the AUC depends on individual characteristics. Remind that at the global level, the contribution of feature $x_1$ is higher than the one of feature $x_2$. However, at the local level, we observe for $i=300$ that the contribution of feature $x_1$ is smaller than the one of feature $x_2$, i.e., $0.1316 < 0.3069$. Third, some individual contributions are negative. For instance, the negative contribution of feature $x_2$ for individual $i=297$ implies that, for this instance, this feature hinders the model from predicting the true target value $y_i^k=0$.

    To further illustrate the XPER methodology, we provide in Appendix \ref{Simulations_Appendix} two additional Monte Carlo simulation experiments illustrating how XPER values can be used to detect the origin of overfitting.
    
	\section{Empirical application}\label{Emp_App}

    \medskip
 
	\subsection{Data and model}

We implement our methodology on a proprietary database of auto loans provided by an international bank. For each borrower, we know whether he or she has eventually defaulted ($y=1$) or not ($y=0$) on the loan. Given the sensitive nature of the data, we had to randomly under-sample individuals to set the default rate to an arbitrary 20\% level. Besides benefits in terms of confidentiality, setting a high arbitrary default rate also protects us against concerns arising from using an unbalanced database. After undersampling, our database includes 7,440 borrowers.\footnote
{Besides undersampling, potential techniques to deal with imbalanced datasets include oversampling (e.g., ADASYN, ROSE, SMOTE) and cost-sensitive learning. For more details, see \citet{Jian2023} and \cite{Chen2024}.} Besides the default target variable, we have access to ten features on the loan (funding amount, funded duration, vehicle price, down-payment) and on the borrower (job tenure, age, marital status, monthly payment in percentage of income, home ownership status, credit event). We divide the database into a stratified training (70\%) and test (30\%) samples to have the same default rate in both subsamples. 

 We provide in Table \ref{tab:summary_stats} some summary statistics about the features and the target variable. In our dataset, a typical loan amounts to around 11,500 euros, finances a 13,000 euro car, and lasts for 56 months. A typical borrower is 45 years old, married, not owning his or her home, has spent nine years in the same job, experienced no credit event over the past six months, provides less than a 50\% down payment, and allocates 10\% of his or her monthly income to reimburse the car loan. To get a first sense of the role of each feature on default, we display in Figure \ref{fig:FeatureDistributions_by_default} their distributions separately for defaulting and non-defaulting borrowers. This preliminary test indicates that the list of discriminating feature includes age, credit event, down payment, marital and ownership status.

 Using the training sample, we estimate an XGBoost model to predict default.
 % \footnote{See \cite{Baesens2003}, \cite{lessmann_benchmarking_2015}, and  \cite{gunnarsson_deep_2021} for applications of XGBoost and other machine-learning algorithms in credit scoring.} 
 We selected this model because it is recognized as one of the most powerful scoring engines \citep{gunnarsson_deep_2021}. Another reason for using an XGBoost is its black-box nature. Indeed, while the XPER methodology is model-agnostic, we believe it is interesting to assess its usefulness when used with a particularly complex and opaque algorithm. We select the hyperparameters of the XGBoost using stratified five-fold cross-validation based on a balanced accuracy criterion \citep{bergstra_random_nodate}.

 Table \ref{tab:XGBoost_Performances} displays the values of six performance metrics for the XGBoost model obtained on the training and test samples. Specifically, (1) the AUC measures the discriminatory ability of the model, (2) the Brier Score evaluates the accuracy of the probabilities, and (3) Accuracy, Balanced Accuracy (BA), Sensitivity, and Specificity assess the correctness of the categorical predictions. As shown in Table \ref{tab:XGBoost_Performances}, the XGBoost has an AUC of 0.7521, a Brier Score of 0.1433, and an accuracy of 79.53 on the test sample. Despite the cross-validation of the hyperparameters, we observe some over-fitting in the model as its performances drops slightly from the training sample to the test sample. Overall, the performance metrics of our model are comparable to those reported in \cite{gunnarsson_deep_2021}.
 %, %lessmann_benchmarking_2015}.

\bigskip

\begin{table}[!h]
\centering
\caption{Model performance}
\resizebox{0.8\textwidth}{!}{ 
\begin{tabular}{c|ccccccc} \hline
    Sample & Size (\%) & AUC & Brier Score  & Accuracy & BA & Sensitivity & Specificity \\ \hline
    Training & 70 & 0.8969 & 0.0958  & 86.98 & 72.43 & 48.18 & 96.69 \\ 
    Test & 30 & 0.7521 & 0.1433  & 79.53 &  58.69 & 23.99 & 93.39\mbox{ } \\ \hline
\end{tabular}}
\label{tab:XGBoost_Performances}
\medskip \medskip
        
        \justifying \noindent {\footnotesize Note: This table displays the performance metrics of the XGBoost model on the training and test samples. BA stands for Balanced Accuracy.}
\end{table}
	\subsection{XPER decomposition}
	We display in Figure \ref{fig:AUC_ESHAP_Empirical} the decomposition of the AUC among the ten features obtained with the estimation method detailed in Appendix \ref{feasible_XPER}. For ease of presentation, we express the feature contributions in percentage of the spread between the AUC and its benchmark (see Equation \eqref{Efficiency}). As shown in Table \ref{tab:XGBoost_Performances}, the AUC in the test sample is equal to 0.7521, which is significantly better than the benchmark value of 0.5 obtained for a random predictor.\footnote{In Appendix \ref{appendix:imbalanced}, we vary the average default rate in the dataset from 5\% to 30\% by randomly removing some default observations or non-default observations from the sample, respectively, to assess the impact of imbalanced data on XPER values.} We see that around 40\% of this over-performance is coming from \textit{funding amount}. The second most contributing feature is \textit{job tenure}, which accounts for another 18\%. It is interesting to note that with only two features, we can explain more than half of the performance of the model. Next, we have five features which contribute each for another 8-10\% to the performance. At the other side of the spectrum, \textit{down payment} is the feature contributing the least to the AUC. On the test sample, this feature does not help the model to better predict default than a random predictor as its XPER value is close to 0. Note that in Appendix \ref{Individual_empirical}, we also analyze the effect of the various features on the performance for each borrower individually.

\medskip
 \begin{figure}[!h]
            \begin{subfigure}[b]{0.5\textwidth}
            \makebox[\linewidth]{
			\includegraphics[width=1%
			\textwidth,trim={0 0 0 0},clip]{./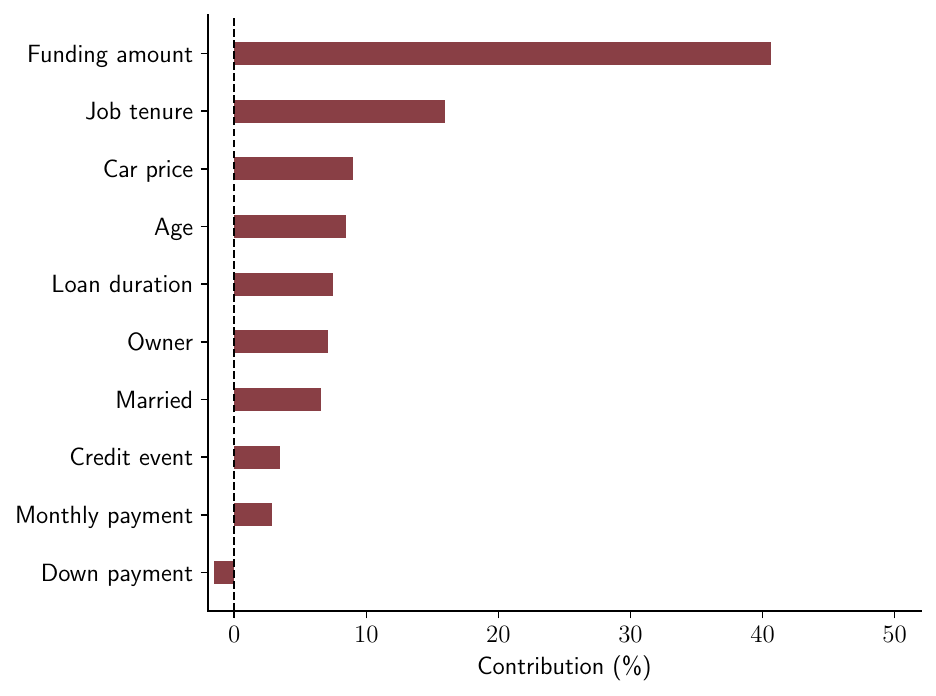}}
		\caption{XPER values}
            \label{fig:AUC_ESHAP_Empirical}
			\end{subfigure}
			\hfill
			\begin{subfigure}[b]{0.5\textwidth}
			\makebox[\linewidth]{
			\includegraphics[width=1%
        			\textwidth,trim={0 0 0 0},clip]{./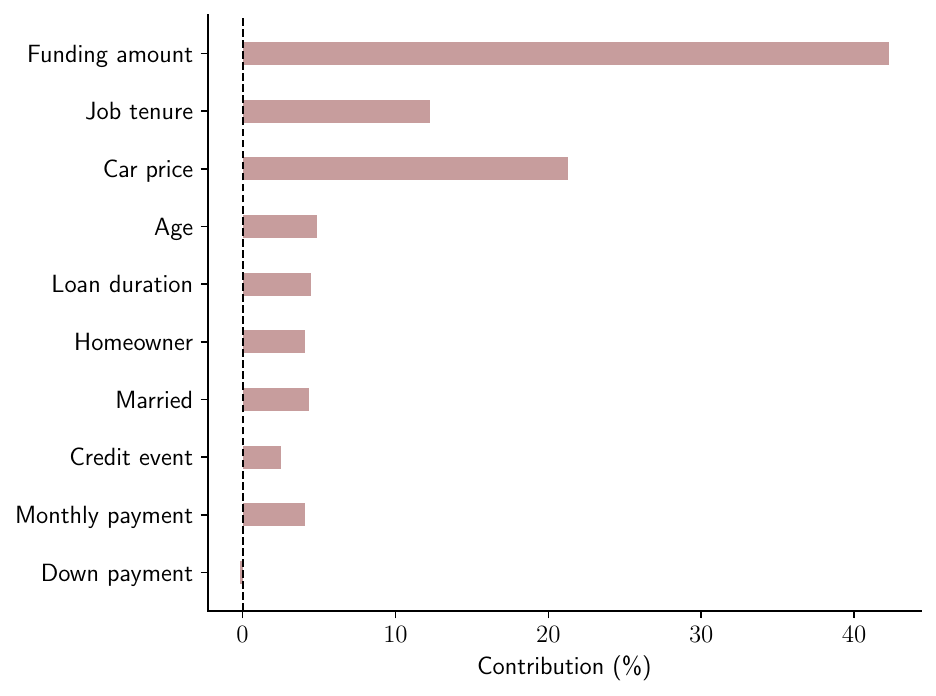}} 
        			\caption{PI-based feature contributions} 
           \label{fig:Permutation_importance}
			\end{subfigure}
   \hfill
    \caption{XPER decomposition and Permutation Importance}
     \medskip \medskip
        
        \justifying \noindent {\footnotesize Note: This figure displays in Panel (a) the XPER values for the AUC of the XGBoost model estimated on the test sample and in Panel (b) the feature contributions for the AUC based on Permutation Importance (PI).}
\vspace{-0.5cm}
	\end{figure}
  
	We now compare the XPER performance decomposition to standard feature contribution methods commonly used in machine learning to assess the impact of a feature on performance or to explain the output of a black-box model, namely Permutation Importance (PI), feature importance, and SHAP values. 

    First, we distinguish between the XPER decomposition of the AUC and the PI method introduced by \cite{breiman_random_2001}. The latter computes for a feature $x_j$, the decrease in the AUC of the model when the values of this feature are randomly reshuffled across instances. As the permutation breaks the dependency between the target variable and the feature, the resulting drop in model AUC indicates how much the model depends on the feature. In our analysis, the PI results indicate the average decrease in AUC when the values are reshuffled 200 times to obtain more robust results. As for the previous methods, we divide each feature contribution by the sum of all features contributions to compare them to the XPER values. We see in Figure \ref{fig:Permutation_importance} that XPER and PI produce different results. For instance, the contribution of the car price is twice as important with PI than with XPER.\footnote{For a more detailed comparison between XPER and PI, see Appendix \ref{XPERvsPI}. We show that PI is a special case of XPER and is only designed for a global analysis.}

 %    \begin{figure}[!h]
	% 	\begin{center}
	% 	    			\includegraphics[width=0.5%
	% 		\textwidth,trim={0 0 0 0},clip]{./}
 %             \caption{XPER decomposition of the AUC}
 %             \label{fig:Permutation_importance} 
	% 	\end{center}
 %     \begin{flushleft}
 %        {\footnotesize Note: This figure compares the XPER values of the AUC with the permutation importance on the test sample. The contributions are expressed in percentage. The contributions are ranked in decreasing order based on the XPER values. The vertical dotted line refers to a contribution equal to 0.\par}
 %        \end{flushleft}
 %        \vspace{-0.75cm}
	% \end{figure}
 
	 Second, we contrast in Figure \ref{fig:AUC_Feature_imp} the XPER values and the XGBoost-based feature importance. The latter computes for a feature $x_j$, the average gain across all splits where the feature is utilized. %(regarder cette référence pour le calcul : \href{https://xgboost.readthedocs.io/en/latest/tutorials/model.html}{gain}). 
For ease of comparison, we divide each feature contribution by the sum of the ten feature contributions. As shown in Figure \ref{fig:AUC_Feature_imp}, the result is rather striking. Indeed, the two methodologies lead to very different contributions as some dominating features in a given methodology play a minor role in the other. For instance, \textit{credit event} exhibits the highest feature importance but it is only the $9th$ most contributing feature according to XPER. Differently, funding amount plays a very important role to explain performance but does not contribute much in terms of feature importance. %Notice that these differences cannot be fully explained by the fact that these two methodologies assess the impact of the features on two different performance metrics. Indeed, as shown in the previous subsection, the XPER decomposition is very close across multiple performance metrics meaning that for the same methodology the performance metric has a little impact on the results. 

	 Third, we compare in Figure \ref{fig:AUC_test_pred_ESHAP} the XPER decomposition of the AUC with the SHAP values introduced by \cite{Lundberg2017}. In our context, the SHAP values assess the impact of the different features on the probabilities of default of each borrower. As it is the norm, we take the average absolute SHAP values for each feature to assess the feature contribution at the model level. As before, we divide each feature contribution by the sum of all features contributions to express them in percentages. In Figure \ref{fig:AUC_test_pred_ESHAP}, we see that SHAP and XPER provide for some features very different information. For instance, the \textit{car price} contribution is more than twice as important for SHAP than for XPER. Similarly, the \textit{funding amount} contribution is around 40\% for XPER whereas only 28\% for SHAP.
  % Moreover, a more fundamental difference between SHAP and XPER values is that the former is by construction positive whereas the latter can turn negative. Indeed, as shown in Figure \ref{fig:AUC_ESHAP_Empirical}, \textit{downpayment} has a negative impact on the Balanced Accuracy of the model. 
\footnote{For a more detailed comparison between XPER and SHAP, see Appendix \ref{XPERvsSHAP}.} 

   \medskip
    \begin{figure}[!h]
            \begin{subfigure}[b]{0.5\textwidth}
            \makebox[\linewidth]{
			\includegraphics[width=1%
			\textwidth,trim={0 0 0 0},clip]{./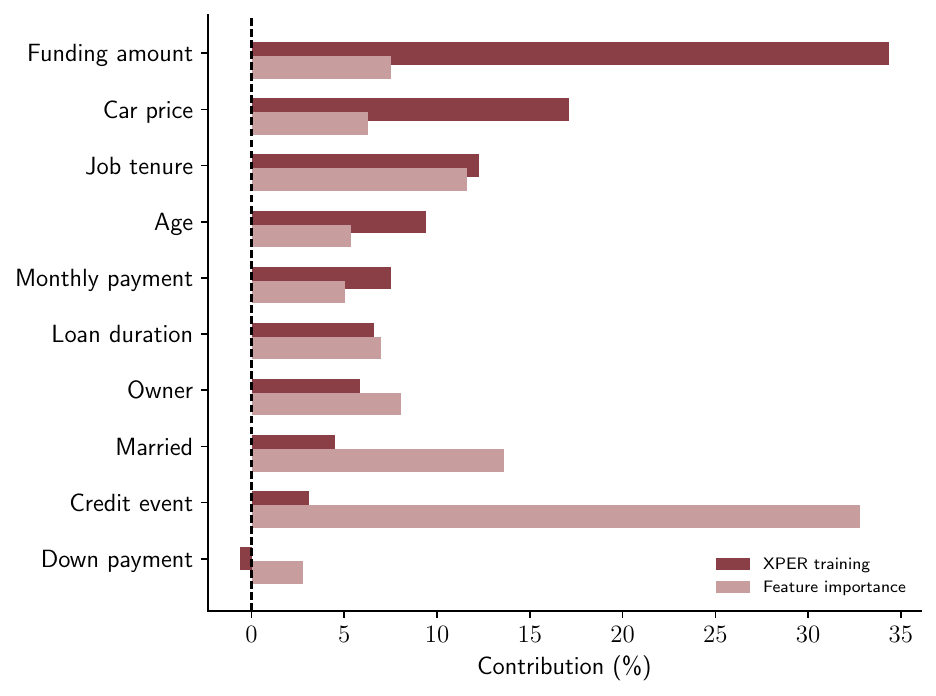}}
		\caption{XPER vs. feature importance}
            \label{fig:AUC_Feature_imp}
			\end{subfigure}
			\hfill
			\begin{subfigure}[b]{0.5\textwidth}
			\makebox[\linewidth]{
			\includegraphics[width=1%
        			\textwidth,trim={0 0 0 0},clip]{./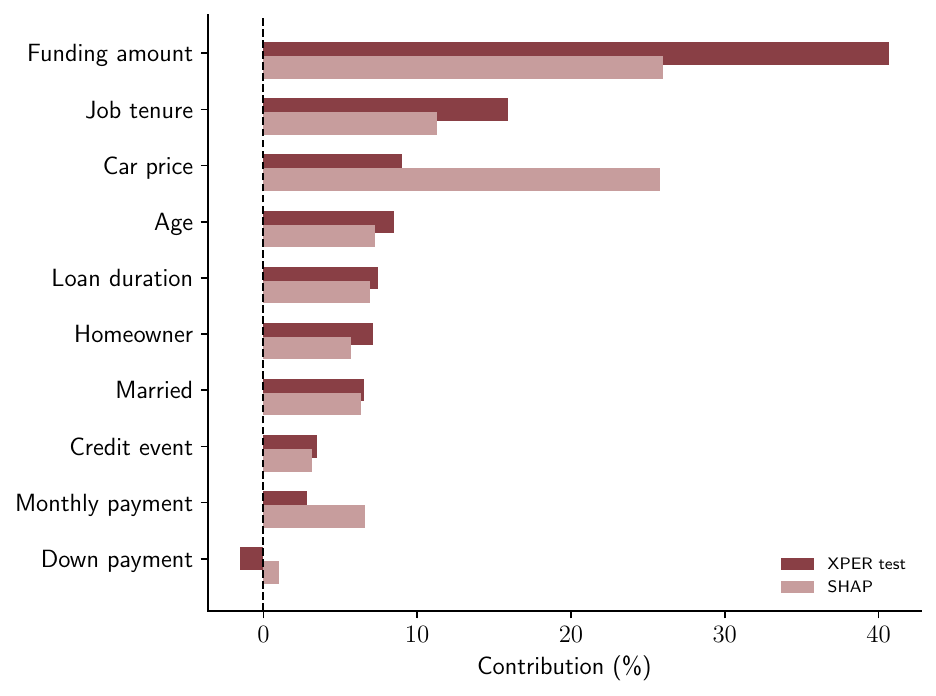}} 
        			\caption{XPER vs. SHAP} 
           \label{fig:AUC_test_pred_ESHAP}
			\end{subfigure}
   \hfill
			% \begin{subfigure}[b]{1\textwidth}
			% \makebox[\linewidth]{
			% \includegraphics[width=0.55
   %      			\textwidth,trim={0 0 0 0},clip]{./figure/Permutation_importance.pdf}} 
   %      			\caption{XPER vs. Permutation Importance} 
   %         \label{fig:Permutation_importance}
			% \end{subfigure}
   % \hfill
		\label{AI_per_job_before}
    \caption{XPER vs. other feature contribution methodologies}
     \medskip \medskip
        
        \justifying \noindent {\footnotesize Note: This figure compares the XPER values of the AUC with (a) the XGBoost-based feature importances and (b) the average absolute SHAP values. %In Panel (a) the XPER values correspond to the ones obtained on the training sample whereas for the Panel (b) they are computed on the test sample.
        }
%\vspace{-0.75cm}
	\end{figure}

    \subsection{Using XPER to boost model performance}\label{Heterogeneity_Emp_App}

We now show how XPER can be used to deal with heterogeneity issues and improve performance. In practice, it is often challenging to estimate a single model able to correctly estimate the relationship between the target variable and the features for all individuals in the sample. In this section, we propose an alternative two-step procedure. First, we build homogeneous groups of individuals displaying similar XPER values using a clustering algorithm. Second, within each group, we estimate a group-specific model. Given the definition of XPER, in each group, the features have a similar impact on the target variable. This alternative approach is likely to increase the model performance compared to the one-fits-all model, which has been used so far in the empirical application. 

In the first step, we estimate the XPER values $\hat{\phi}_{i,j}$ on the training sample and apply the K-Medoids method \citep{park2009simple} to create two clusters of individuals with similar XPER values.\footnote{XPER values are always continuous, even when they are associated with categorical variables. To determine the optimal number of clusters, we used the ``silhouette score'' criterion. We report this score in Table \ref{Tab:Silhouette_scores} in Appendix \ref{appendix_boosting} for a number of clusters ranging from 2 to 10.} We see in Figure \ref{fig:distrib_G1_G2_train} that individuals in group 1 tend to be older, married, home-owners and ask for a mid-range loan to finance a mid-priced car. One striking difference is that in group 2, individuals tend to finance cars with extremely low prices or high prices. Overall, group 1 gathers individuals displaying a lower risk profile compared to those in group 2. This is confirmed by the fact that the default rate is much higher in group 2 (52\%) than in group 1 (13\%). Moreover, as shown in Figures \ref{fig:Feature_distrib_G1_train_default} and \ref{fig:Feature_distrib_G2_train_default} in Appendix \ref{appendix_boosting}, the feature distributions are very different for defaulters and non-defaulters in each group. We also see in Figure \ref{fig:AUC_cluster} that in group 2, using the car price deteriorates the performance of the model as its XPER value is close to -20\% vs. 20\% in group 1. Therefore, these results suggest that estimating group-specific models is likely to boost predictive performance. 

In the second step, we estimate using the training dataset one XGBoost model per group to predict default. Then, we aim to compare the resulting out-of-sample performance with the one of the one-fits-all model. To do so, we first assign each individual of the test sample to either group 1 or group 2 using the clustering rule built on the training sample. 
%Note that in practice the clustering rule cannot be implemented in this way as the target value used to construct the individual XPER values is unknown. 
We see in Table \ref{tab:perf_empirical} that the performance of our strategy (column 2) is significantly higher than the performance of the initial model (column 1). For instance, the AUC of the model on the test sample increases by 16 percentage points, from 0.752 to 0.912. Another notable result is that by using two distinct models, the bank would be able to correctly detect 64\% of the defaulting borrowers against only 24\% with the initial model. Yet, it would still be able to correctly detect more than 95\% of the non-defaulting borrowers.

Finally, we benchmark our strategy with a standard clustering approach based on the features themselves. Specifically, we use the K-prototype algorithm \citep{huang1997clustering} to create two clusters of individuals with comparable feature values.\footnote{We utilize the K-Prototype algorithm instead of the K-Medoids algorithm as the former is specifically designed to handle both categorical and continuous features. More precisely, when handling continuous features, the algorithm calculates the squared Euclidean distance between the features and a representative vector (or prototype) of clusters. For categorical features, it determines the number of mismatches between the features and the cluster’s prototypes.} %\footnote{Note that we cannot use K-Medoid as it not suited for categorical features.}
 As shown in column 3, the performance of the standard approach is close to the one of the one-fits-all model (AUC = 0.744 vs. 0.752), and is much lower than the performance of the XPER-based models. 

\clearpage

\begin{figure}[!h]
		\begin{center}
		    			\includegraphics[width=0.7
			\textwidth,trim={0 0 0 0},clip]{./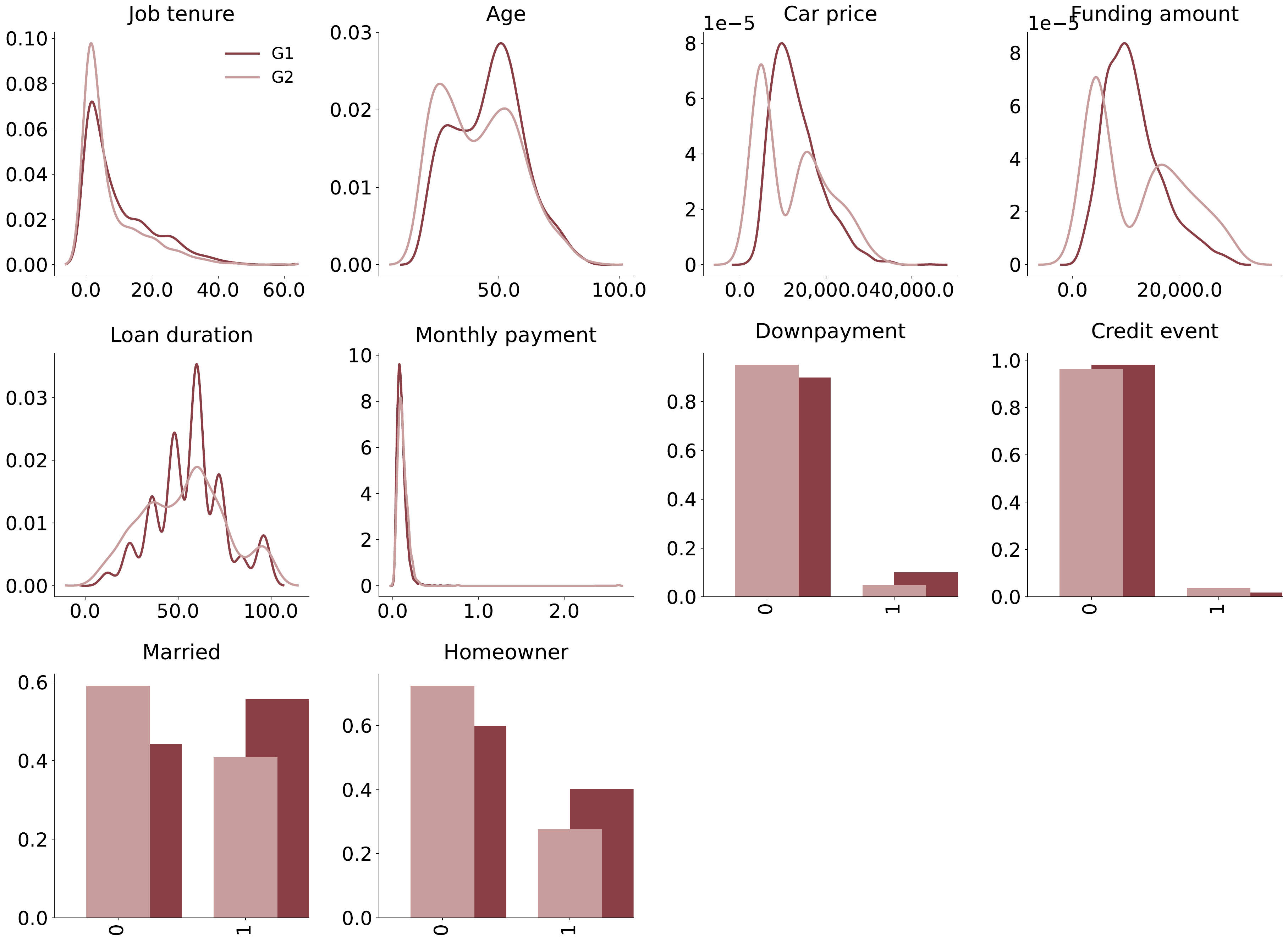}
             \caption{Features distribution by group based on XPER values}
             \label{fig:distrib_G1_G2_train} 
		\end{center}
     \medskip
        
        \justifying \noindent {\footnotesize Note: This figure displays the distribution of the features on the training sample by group created from individual XPER values using the K-Medoids methodology. For continuous features, we use kernel density estimation. Dark red refers to the first group and light red to the second group.}
	\end{figure}
 
\bigskip \bigskip 

\begin{table}[!h]
  \centering
  \caption{Model performances}
  \resizebox{0.8\textwidth}{!}{ 
    \begin{tabular}{lccc}
    \midrule
          & Initial  & Clusters on XPER values & Clusters on features \\
          & (1)  & (2) & (3) \\
    \midrule
    AUC   & 0.752 & 0.912 & 0.744\\
    Brier score & 0.143  & 0.080 & 0.151\\
    Accuracy & 79.53  & 89.11 & 79.53 \\
    Balanced Accuracy & 58.69  & 79.74 & 59.11 \\
    Sensitivity & 23.99  & 64.13 & 25.11\\
    Specificity & 93.39  & 95.35 & 93.11\\
    \midrule
    \end{tabular}}%
    \medskip \medskip
        
        \justifying \noindent {\footnotesize Note: This table displays the performances on the test sample of (1) the initial XGBoost model, (2) our two-step strategy relying on the clustering of XPER values, and (3) the alternative two-step strategy based on the clustering of feature values. For each clustering-based approach, two clusters were derived using the training sample. The performances reported in columns (2) and (3) represent the average performance metrics across the two clusters on the test sample.}
  \label{tab:perf_empirical}%
\end{table}%

% \begin{figure}[!h]
% 		\begin{center}
% 		    			\includegraphics[width=0.7
% 			\textwidth,trim={0 0 0 0},clip]{./}
%              \caption{Features distribution by group based on the features}
%              \label{fig:distrib_G1_G2_train_V2} 
% 		\end{center}
%      \begin{flushleft}
%         {\footnotesize Note: This figure displays the distribution of the features on the training sample by group created from feature values using the K-prototype methodology. For continuous features we use kernel density estimation. Dark red refers to the first group  and light to the second group.\par}
%         \end{flushleft}
% 	\end{figure}

\clearpage

	\section{Conclusion}
 
We have introduced XPER, a methodology designed to measure the feature contributions to the performance of any regression or classification model. XPER is built on Shapley values and interpretability tools developed in machine learning but with the distinct objective of focusing on model performance, such as AUC or $R^2$, and not on model predictions, $\hat{y}$. 

Specifically, XPER breaks down the difference between a performance metric of the model and a benchmark value. The latter corresponds to the performance metric that we would obtain on a hypothetical sample in which the target variable is independent from all the features included in the model. As such, the XPER decomposition only focuses on the part of the performance that directly originates from the features. Other advantages of the method include being implementable either at the model level or at the individual level, and not being plagued by omitted variable biases, as it does not require re-estimating the model with  different features coalitions.

We show that identifying the driving forces of the performance of a predictive model is very useful in practice. In a loan default forecasting application, XPER appears to be able to efficiently deal with heterogeneity issues and to boost out-of-sample performance. To do so, we create homogeneous groups of borrowers by clustering them based on their individual XPER values. We find that estimating group-specific models yields to a higher predictive accuracy than with a one-fits-all model.

XPER could go beyond performance. Indeed, as XPER can decompose any function of both $\hat{y}$ and $y$, one could use it to identify the features responsible for the lack of algorithmic fairness of a given machine learning model \citep{hurlin2024fairness}. Another interesting avenue for future research relies on developing inference methods for XPER values, enabling the assessment of their statistical significance.

\baselineskip=1\normalbaselineskip

% References here (outcomment the appropriate case)

% CASE 1: BiBTeX used to constantly update the references
%   (while the paper is being written).

\clearpage

\bibliographystyle{apalike} % outcomment this and next line in Case 1

\bibliography{main} % if more than one, comma separated

%%%%%%%%%%%%%%%%
\baselineskip=1.5\normalbaselineskip
	\clearpage

\clearpage
	
\appendix
\pagenumbering{arabic}
\renewcommand{\thefigure}{A\arabic{figure}}

\section*{Appendices to ``Measuring the Driving Forces of Predictive Performance: Application to Credit Scoring'' }

	\setcounter{figure}{0}
	
	\renewcommand{\thetable}{A\arabic{table}}
	
	\setcounter{table}{0}

 \setcounter{equation}{0}
 \renewcommand{\theequation}{A\arabic{equation}}

    \section{Performance metric decompositions using Shapley values}

    \label{Literature_review}

    In this appendix, we provide a  literature review including numerous recent studies from computer science and statistics that specifically use Shapley values to measure the individual contributions of features to \textit{model performance}. In contrast, our review does not include articles that aim to decompose model predictions, $\hat{y}$, as this represents a different objective from ours. To facilitate the comparison of our approach with those proposed in the surveyed articles, we present a comparison in Table \ref{Table 1 Litterature Review} across several important dimensions. 
    \medskip
    
    \noindent This comprehensive literature review permits to clearly identify three main contributions of the XPER methodology: 
        \begin{itemize}
            \item XPER stands out for its versatility, enabling the analysis of the drivers of \textit{any} performance metric for \textit{any} estimated model (see paragraph starting by ``First'' below). 
            \item XPER uniquely addresses heterogeneity issues by identifying groups of individuals for which features exhibit similar effects on model performance (see paragraph starting by ``Second'' below). 
            \item XPER offers a meaningful decomposition of the performance metric, achieving this without relying on strong assumptions, unlike many other methods in the literature (see paragraph starting by ``Third'' and the following ones below).
        \end{itemize}
    
        \noindent First, we contrast papers according to the \textit{metrics they decompose}. Six out of the twelve papers considered in this survey focus exclusively on the decomposition of a single performance metric, which is often the $R^2$ for the linear regression model \citep{bell2024efficient,Israeli2007,Lipovetsky2001}. Among the remaining papers, half of them decompose loss functions, which means they are not designed to decompose accuracy metrics (e.g., AUC, Gini, accuracy), goodness of fit ($R^2$), information criterion (AIC, BIC), or economic performance metric (profit-and-loss function). Among these, \cite{Zhang2023} propose an original approach that links changes in model performance to shifts in features distributions, relying on the definition of a causal graph. In contrast, XPER enables the analysis of feature contributions for \textit{any} given performance metric. Unlike XPER, four surveyed papers are model-specific, as shown in Table \ref{Table 1 Litterature Review}.

        \medskip 
        
        \noindent Second, we distinguish existing methodologies by their \textit{level of analysis}, specifically whether they provide a global and/or a local analysis of the features influencing model performance. All but one of the approaches presented in these studies only deliver a global analysis. Indeed, \cite{sutera2021global} is the only considered article that proposes both a local and a global analysis. However, they only do so for tree-ensembles (a single model) and only for the Shannon entropy impurity measure (a single performance metric). As a result, XPER appears to be the only method able to conduct both local or global analyses for any model and any performance metric. The local analysis enabled by XPER is particularly valuable as it addresses heterogeneity issues by identifying groups of individuals for whom the features have similar effects on performance. This capability provides deeper insights and a more nuanced understanding of feature impacts across different subpopulations.
        \medskip
        
        \noindent Third, we classify the approaches based on how they handle features excluded from the feature coalitions used to compute the Shapley values. We identify two main strategies in the literature: (i) re-estimating the model without the excluded features and (ii) marginalizing over the excluded features. While intuitive, the \textit{re-estimating strategy} can result in model specification errors, such as omitted variable bias in linear regression. This concern is particularly relevant here because most of these methods are applied within a linear regression framework to analyze the drivers of $R^2$. A closely related approach is employed by \cite{moehle2021portfolio} as they use a simulator to model the investment process under scenarios where certain drivers are excluded from the decision-making. A limitation of their approach is that the reliability of the derived feature contributions depends heavily on the accuracy of the simulator. 

        \medskip
        
        \noindent The \textit{marginalization strategy} involves marginalizing over the features excluded from the model $f(\cdot)$ to avoid re-estimating it. This marginalization can be carried out in different ways, depending on the feature distribution considered (joint or conditional) and whether the performance metric \(G(\cdot)\) is evaluated at the expected value of the model’s predictions, \(G(\mathbb{E}(f(x)))\), or the expected value of the performance metric itself, \(\mathbb{E}(G(f(x)))\). These distinctions are not merely technical; they have significant implications for the interpretation of the Shapley decomposition.  For instance, \cite{covert2020understanding} and \cite{Borup2022} evaluate the performance metric at the expected value of the model’s predictions. This expected value is computed by integrating over the distribution of the excluded features while treating the included features as fixed.\footnote{Using our notations, they compute $\mathbb{E}_{\mathbf{x}^s,y}\left(\widetilde{G}\left(y;\mathbb{E}_{\mathbf{x}^{\overline{S}}}(\hat{f}(\mathbf{x}));\delta_0\right)\right)$, where $\mathbf{x}^S$ ($\mathbf{x}^{\overline{S}}$) represents the vector of features included in (excluded from) coalition $S$, $y$ is the target variable, $\delta_0$ is a nuisance parameter, and $\widetilde{G}(.)$ is the performance metric comparing the model's prediction $\hat{f}(\mathbf{x})$ to its target value $y$. In this case, the benchmark $\phi_0$ is defined as $\phi_0 = \mathbb{E}_{y}\left(\widetilde{G}\left(y;\mathbb{E}_{\mathbf{x}}(\hat{f}(\mathbf{x}));\delta_0\right)\right)$, which corresponds to the performance that would be obtained if the model were predicting the same constant value for everyone, $\mathbb{E}_{\mathbf{x}}(\hat{f}(\mathbf{x}))$.} 
        In this case, the benchmark is defined as the performance metric, as the MSE for instance, calculated for a restricted model that predicts a \textit{homogeneous} value, \(\mathbb{E}_{\mathbf{x}}(\hat{f}(\mathbf{x}))\), for all instances. The contributions of the features, \(\phi_1, \dots, \phi_q\), then explain the difference between the model's performance and this specific benchmark. For similar reasons, \cite{fryer2021shapley} examine the use of the SAGE \citep{covert2020understanding} and SHAP \citep{Lundberg2017} approaches, concluding that these methods are not well-suited for explaining performance.
         
        \medskip
        
        \noindent In our paper, we designed the marginalization approach to end up with a meaningful benchmark. Specifically, the XPER values explain the difference between the model performance and the performance of a model that would be obtained if the features were independent of the target variable.\footnote{Formally, XPER computes $\mathbb{E}_{\mathbf{x}^s,y}\mathbb{E}_{\mathbf{x}^{\overline{S}}}\left(\widetilde{G}\left(y;\hat{f}(\mathbf{x});\delta_0\right)\right)$. As a consequence, the XPER values $\phi_1,\hdots,\phi_q$ decompose the difference between the model's performance $\mathbb{E}_{\mathbf{x},y}\left(\widetilde{G}\left(y;\hat{f}(\mathbf{x});\delta_0\right)\right)$ and the benchmark value $\phi_0=\mathbb{E}_{y}\mathbb{E}_{\mathbf{x}}\left(\widetilde{G}\left(y;\hat{f}(\mathbf{x});\delta_0\right)\right)$.} Our benchmark is therefore defined as the value of the performance metric in the case where the model is completely misspecified. Our reasoning is analogous to a global Fisher test in standard linear regression, where one compares the MSE of the model to its value for a purely misspecified model in which all variables are insignificant. While we are not conducting a formal test here, the underlying idea is the same. For example, when using the AUC criterion for classification, our benchmark value is $0.5$, which corresponds to the value achieved by purely random classification. Thus, our XPER values are meaningful because they measure the improvement in predictive performance attributable to the features relative to this well-founded benchmark.

        \medskip
        
        \noindent To the best of our knowledge, the Shapley Feature IMPortance (SFIMP) method proposed by \cite{Casalicchio_2019} is the only approach in the literature that performs marginalization in the same manner as we do. However, there are several important differences between SFIMP and XPER. First, XPER enables both local and global analyses of feature contributions to performance, whereas SFIMP can only perform global analyses. Second, while SFIMP decomposes loss functions, XPER is designed to decompose any performance metric: e.g., predictive accuracy (AUC, Gini, accuracy), goodness of fit ($R^2$), information criterion (AIC, BIC), statistical loss function (MSE, MAE, Q-like), or economic performance metric (profit-and-loss function). Third, SFIMP does not specify any assumptions about the set of loss functions it can decompose. In contrast, XPER is built on several assumptions about the performance metric, such as the additivity assumption (Assumption 2). This assumption ensures that XPER can decompose any performance that can be expressed as an average of individual contributions to the performance. Crucially, the introduction of the nuisance parameter $\delta_0$, unique to XPER, extends this framework to handle even complex metrics like the AUC. By reformulating such metrics into an additive structure, XPER significantly expands its scope, allowing it to handle a broader range of performance measures than SFIMP. Fourth, as opposed to SFIMP, XPER puts emphasis on the definition of the benchmark $\phi_0$ as it plays a key role to explain the meaning of the feature contributions derived from the Shapley value decomposition. Our benchmark has a meaningful interpretation: it corresponds to the performance obtained on a hypothetical sample where the target variable $y$ is independent of all model features $\mathbf{x}$—that is, a scenario in which the model $\hat{f}(\mathbf{x})$ is fully misspecified.       
  
     \begin{sidewaystable}[h] % sidewaystable
     \centering 
     \caption{Literature review} 
     \label{Table 1 Litterature Review} 
     \vspace{2mm} 
     
     \resizebox{1\textwidth}{!}{ 
     \renewcommand{\arraystretch}{1.2}
    \begin{tabular}{l|c|c|c|c|c|c|c}
    \toprule
    \multirow{2}{*}{References} & \multirow{2}{*}{Metrics to be decomposed}  & \multirow{2}{*}{Model-agnostic} &  \multirow{2}{*}{Global} & \multirow{2}{*}{Local} & \multirow{2}{*}{Data} & \multirow{2}{*}{Applications} & \multicolumn{1}{c}{Management of excluded features} \\  %\cmidrule{3-6}  
     & &   &  & & & & \multicolumn{1}{c}{from the model} \\ \midrule  
    \textcolor{blue}{\cite{bell2024efficient}}     & $R^2$  &  & \checkmark{} & & Synthetic & Numerical experiments &  Re-estimation\\ 
    \textcolor{blue}{\cite{Borup2022}}     & Loss functions  & \checkmark{} & \checkmark{} & & Real & Forecasting inflation & marginalization \\
    \textcolor{blue}{\cite{Casalicchio_2019}}     & Loss functions & \checkmark{}  & \checkmark{} & & Real &  Housing price prediction & marginalization \\ 
    \textcolor{blue}{\cite{covert2020understanding}}     & Loss functions  & \checkmark{} & \checkmark{} & & Real & Many (text classification, credit scoring, etc.) & marginalization\\
    \textcolor{blue}{\cite{fryer2021shapley}}     & Evaluation functions    & \checkmark{} & \checkmark{} & & Synthetic & Numerical experiments &  marginalization\\
    \textcolor{blue}{\cite{ghorbani2019data}}  & Performance metrics  & \checkmark{} &  \checkmark{} &  & Real & Many (disease prediction, spam classification, etc.) & Re-estimation \\
    % \textcolor{blue}{\cite{Gromping2015}}  &   &  &  &  &   & & & \\ 
    \textcolor{blue}{\cite{Israeli2007}}     & $R^2$  & & \checkmark{} & & Real & Income prediction &  Re-estimation\\ 
    % \textcolor{blue}{\cite{Kumar_2020}}  &   &  &  &  &   & &  &  &\\
    \textcolor{blue}{\cite{Lipovetsky2001}}     & $R^2$ &  & \checkmark{} & & Real & Customer satisfaction & Re-estimation\\ 
    \textcolor{blue}{\cite{moehle2021portfolio}}     & Performance metrics & \checkmark{} & \checkmark{} & & Real & Portfolio management & Other \\ 
    \textcolor{blue}{\cite{owen_shapley_2017}}    & Explained variance in ANOVA  & \checkmark{}  & \checkmark{} & & None & None & Re-estimation \\ 
    %\textcolor{blue}{\cite{Redell2019}}     & $R^2$   & \checkmark{} & & & Wine quality rating & Other\\
    \textcolor{blue}{\cite{sutera2021global}} & Mean Decrease of Impurity   & & \checkmark{} & \checkmark{} & Real & Led and digits problems & Other\\
    \textcolor{blue}{\cite{Verdinelli2023}}  & Mean Squared Error & \checkmark{}   & \checkmark{} & & Synthetic & Numerical experiments & Re-estimation \\
    % \textcolor{blue}{\cite{Stufken_1992}}  &   &  &  &  &   & & & &\\ 
    % \textcolor{blue}{\cite{Williamson2020}}  &   &  &  &  &   & & & &\\ 
    % \textcolor{blue}{\cite{Williamson2021}}  &   &  &  &  &   & & & &\\ 
    % \textcolor{blue}{\cite{Williamson2021b}}  &   &  &  &  &   & & & & \\
    \textcolor{blue}{\cite{Zhang2023}}  & Loss functions  & \checkmark{} & \checkmark{} & & Real & Mortality and Tumor prediction & Other \\
    \textbf{Our paper}  & Performance metrics & \checkmark{} &  \checkmark{} &  \checkmark{} & Real & Credit Scoring & marginalization \\ 
    \bottomrule
    \end{tabular}
    }
    
 \end{sidewaystable}

    \clearpage

    \section{Examples of performance metrics} \label{PM_examples}

    \begin{table}[htbp]
      \centering
      \caption{Performance metrics}
      \normalsize
      \subcaption*{Panel A: Regression models}
      	\resizebox{1\textwidth}{!}{ 
        \begin{tabular}{lP{7.5cm}P{7.5cm}P{7cm}}
                 \midrule
               \textbf{Metrics} & \multicolumn{1}{c}{$G_n(\mathbf{y},\mathbf{x})$} & \multicolumn{1}{c}{$G(y_i;\mathbf{x}_i;\hat{\delta}_n)$} & \multicolumn{1}{c}{$\hat{\delta}_n$} \\
        \hline
        MAE   &  -$\frac{1}{n}\sum_{i=1}^{n}\left|y_i-\hat{f}(\mathbf{x}_i)\right|$     &   -$\left|y_i-\hat{f}(\mathbf{x}_i)\right|$    &  $\emptyset$ \\ \midrule 
         MSE   &  -$\frac{1}{n}\sum_{i=1}^{n}\left(y_i-\hat{f}(\mathbf{x}_i)\right)^2$     &   -$\left(y_i-\hat{f}(\mathbf{x}_i)\right)^2$    &  $\emptyset$ \\ \midrule
               $R^2$    & $1-\frac{\sum_{i=1}^{n}\left(y_i-\hat{f}(\mathbf{x}%
				_i)\right)^2}{		\sum_{j=1}^{n}(y_j-\bar{y})^2}$      &   $1-\hat{\delta}_n^{-1}\left(y_i-\hat{f}(\mathbf{x}%
		_i)\right)^2$    & $n^{-1}
		\sum_{j=1}^{n}(y_j-\bar{y})^2$ \vspace{0.2cm} \\ \hline 
         \end{tabular}}%
         \bigskip
         \normalsize
         \subcaption*{Panel B: Classification models}
         \resizebox{1\textwidth}{!}{ 
        \begin{tabular}{lP{7.5cm}P{7.5cm}P{7cm}}
                 \midrule
               \textbf{Metrics} & \multicolumn{1}{c}{$G_n(\mathbf{y},\mathbf{x})$} & \multicolumn{1}{c}{$G(y_i;\mathbf{x}_i;\hat{\delta}_n)$} & \multicolumn{1}{c}{$\hat{\delta}_n$} \\
        \hline
         Accuracy & $\frac{1}{n}\sum_{i=1}^{n}\left(y_i\hat{f}(\mathbf{x}_i) + (1-y_i)(1-\hat{f}(\mathbf{x}_i))\right)$      &  $y_i\hat{f}(\mathbf{x}_i) + (1-y_i)(1-\hat{f}(\mathbf{x}_i))$      & $\emptyset$ \\ \midrule
         \multirow{2}[0]{*}{BA}  & \multirow{2}[0]{*}{$\frac{1}{n}\sum_{i=1}^{n}\frac{1}{2}\left[\frac{y_i\hat{f}(\mathbf{x}_i)}{\frac{1}{n}\sum_{j=1}^{n}y_j} + \frac{(1-y_i)(1-\hat{f}(\mathbf{x}_i))}{\frac{1}{n}\sum_{j=1}^{n}(1-y_j)} \right]$} & \multirow{2}[0]{*}{$\frac{1}{2}\left[\hat{\delta}_{n_1}^{-1}\left(y_i\hat{f}(\mathbf{x}_i)\right) + \hat{\delta}_{n_2}^{-1}\left((1-y_i)(1-\hat{f}(\mathbf{x}_i))\right)\right]$} & $\hat{\delta}_{n_1}=\frac{1}{n}\sum_{j=1}^{n}y_j$ \vspace{0.8cm}\\  
            & & & $\hat{\delta}_{n_2}=\frac{1}{n}\sum_{j=1}^{n}(1-y_j)$ \\ \midrule
             Brier score &  -$\frac{1}{n}\sum_{i=1}^{n}\left(y_i - \hat{P}(\mathbf{x}_i)\right)^2$     &  -$\left(y_i - \hat{P}(\mathbf{x}_i)\right)^2$    & $\emptyset$ \\  \midrule
               Precision &  $\frac{1}{n}\sum_{i=1}^{n}\left(\frac{y_i\hat{f}(\mathbf{x}_i)}{\frac{1}{n}\sum_{j=1}^{n}\hat{f}(\mathbf{x}_j)}\right)$     &  $\hat{\delta}_{n}^{-1}y_i\hat{f}(\mathbf{x}_i)$    & $\frac{1}{n}\sum_{j=1}^{n}\hat{f}(\mathbf{x}_j)$ \\  \midrule
               Sensitivity & $\frac{1}{n}\sum_{i=1}^{n}\left(\frac{y_i\hat{f}(\mathbf{x}_i)}{\frac{1}{n}\sum_{j=1}^{n}y_j}\right)$      & $\hat{\delta}_{n}^{-1}y_i\hat{f}(\mathbf{x}_i)$      &  $\frac{1}{n}\sum_{j=1}^{n}y_j$\\ \midrule
               Specificity &  $\frac{1}{n}\sum_{i=1}^{n}\left(\frac{(1-y_i)(1-\hat{f}(\mathbf{x}_i))}{\frac{1}{n}\sum_{j=1}^{n}(1-y_j)}\right)$     &  $\hat{\delta}_{n}^{-1}(1-y_i)(1-\hat{f}(\mathbf{x}_i))$     & $\frac{1}{n}\sum_{j=1}^{n}(1-y_j)$ \\ \midrule
               \multirow{6}[0]{*}{AUC}   &  $\frac{\sum_{i=1}^{n}\sum_{j=1}^{n}(1-y_i)y_jI(\hat{P}(\mathbf{x}_i) < \hat{P}(\mathbf{x}_j))}{\sum_{j=1}^{n}y_j\sum_{j=1}^{n}(1-y_j)}$     & $\left((1-y_i)\times \hat{\delta}_{n_1}(\mathbf{x}_i)\right)\hat{\delta}_{n_2}^{-1}$   &  $\hat{\delta}_{n_1}(\mathbf{x}_i) = \frac{1}{n}\sum_{j=1}^{n}y_jI(\hat{P}(\mathbf{x}_i) < \hat{P}(\mathbf{x}_j))$\\ 
               & \begin{equation*} I(\hat{P}(\mathbf{x}_i) < \hat{P}(\mathbf{x}_j)) =
                    \label{start_2}
                    \left\{
                    \begin{array}{ll}
                             0  & \mbox{ if } \hat{P}(\mathbf{x}_i) > \hat{P}(\mathbf{x}_j) \\ 
                            0.5 & \mbox{ if } \hat{P}(\mathbf{x}_i) = \hat{P}(\mathbf{x}_j) \\ 
                            1  &  \mbox{ if } \hat{P}(\mathbf{x}_i) < \hat{P}(\mathbf{x}_j)
                    \end{array}
                    \right.
                    \end{equation*} & &  \begin{equation*}\hat{\delta}_{n_2} = \frac{1}{n^2}\sum_{j=1}^{n}y_j\sum_{j=1}^{n}(1-y_j)\end{equation*}\\ \hline
        \end{tabular}}%
      \label{all_perfs}%
      \medskip \medskip
        
        \justifying \noindent {\footnotesize Note: This table displays the expression of sample performance metrics $G_n(\mathbf{y},\mathbf{x})$, individual contribution to the sample performance metric $G(y_i;\mathbf{x}_i;\hat{\delta}_n)$, and the corresponding nuisance parameter $\hat{\delta}_n$. We distinguish between the performance metric associated with regression models (Panel A) and those related to classification models (Panel B).}
    \end{table}%

    \clearpage

    \section{Examples of XPER values decomposition}
	\label{illustrations}
    
 We provide several examples in Table \ref{all_dec} of the XPER decomposition of regression and classification performance metrics. For regression models, we consider a linear regression model $\hat{f}(\mathbf{x}_i)= \sum_{j=1}^{q}\hat{\beta}_jx_{i,j}$, where we assume that the DGP generating the test sample $S_n=\{\mathbf{x}_i,y_i,\hat{f}(\mathbf{x}%
	_i)\}^n_{i=1}$ satisfies $	\mathbb{E}\left(\mathbf{x}\right) = 0_q 
	\mbox{ and } \mathbb{V}(\mathbf{x}) = diag(\sigma^2_{x_j}) \mbox{ } \forall j=1,...,q$, and $\mathbb{E}(y)=0$. We denote by $\sigma_y^2$ the variance of the target variable and by $\sigma_{y,x_j}$ the covariance between the feature $x_j$ and the target variable. For classification models, we consider any binary classification model $\hat{f}(\mathbf{x})$, with $\hat{P}(\mathbf{x}) = \hat{\mathbb{P}}(y=1|\mathbf{x})$ the estimated probability of belonging to class 1 ($y=1$). We denote by $\sigma_{y,\hat{f}(\mathbf{x})}$ the covariance between the target variable and the classification output. 
    \bigskip 
    
 \begin{table}[htbp]
      \centering
      \caption{Examples of XPER values decomposition}
      \normalsize
      \subcaption*{Panel A: Regression models}
      	\resizebox{1\textwidth}{!}{ 
        \begin{tabular}{lP{5cm}P{7.5cm}P{3cm}}
                \midrule
               \textbf{Metrics} & \multicolumn{1}{c}{$\mathbb{E}_{y,\mathbf{x}}(G(y;\mathbf{x};\delta_0))$} & \multicolumn{1}{c}{$\phi_0$} & \multicolumn{1}{c}{$\phi_j$} \\
        \hline
        \multicolumn{1}{l}{\multirow{2}[0]{*}{MSE}} & $2\sum_{j=1}^{q}\hat{\beta}_j\sigma_{y,x_j}$   & \multirow{2}[0]{*}{$- \sum_{j=1}^{q}\hat{\beta}_j^2\sigma^2_{x_j} - \sigma_y^2$} & \multirow{2}[0]{*}{$2\hat{\beta}_j\sigma_{y,x_j}$ }  \\
          & $- \sum_{j=1}^{q}\hat{\beta}_j^2\sigma^2_{x_j} - \sigma_y^2$   &       & \vspace{0.2cm} \\   \midrule
          $R^2$ & $\frac{\sigma_{y,\hat{y}}}{\sigma^2_y}$      &  $-\frac{\sigma_{y,\hat{y}}}{\sigma^2_y}$ & $\frac{2\hat{\beta}_j\sigma_{y,x_j}}{\sigma^2_y}$ \vspace{0.2cm} \\  \hline
         \end{tabular}}%
         \bigskip
         \normalsize
         \subcaption*{Panel B: Classification models}
         \resizebox{1\textwidth}{!}{ 
        \begin{tabular}{lP{5cm}P{7.5cm}P{3cm}}
               \midrule
               \textbf{Metrics} & \multicolumn{1}{c}{$\mathbb{E}_{y,\mathbf{x}}(G(y;\mathbf{x};\delta_0))$} & \multicolumn{1}{c}{$\phi_0$} & \multicolumn{1}{c}{$\phi_j$} \\
        \hline
        \multicolumn{1}{l}{\multirow{2}[0]{*}{Accuracy}} & $2 \sigma_{y,\hat{f}(\mathbf{x})} + 2\mathbb{P}(y=1)\hat{P}(\mathbf{x})$   & \multirow{2}[0]{*}{$2\mathbb{P}(y=1)\hat{P}(\mathbf{x}) + 1 - \mathbb{P}(y=1) - \hat{P}(\mathbf{x})$} & \multirow{2}[0]{*}{$\mbox{No closed-form}$}  \\
          & $+ 1 - \mathbb{P}(y=1) - \hat{P}(\mathbf{x})$   &       &  \\  \midrule
               Precision &  $\frac{\sigma_{y,\hat{f}(\mathbf{x})}}{\mathbb{P}(\hat{f}(\mathbf{x})=1)} + \mathbb{P}(y=1)$     &  $\mathbb{P}(y=1)$    & $\mbox{No closed-form}$ \\  \midrule
               Sensitivity & $\frac{\sigma_{y,\hat{f}(\mathbf{x})}}{\mathbb{P}(y=1)} + \mathbb{P}(\hat{f}(\mathbf{x})=1)$ & $\mathbb{P}(\hat{f}(\mathbf{x})=1)$      &  $\mbox{No closed-form}$\\ \midrule
               Specificity &  $\frac{\sigma_{y,\hat{f}(\mathbf{x})}}{\mathbb{P}(y=0)} + \mathbb{P}(\hat{f}(\mathbf{x})=0)$    &  $\mathbb{P}(\hat{f}(\mathbf{x})=0)$     & $\mbox{No closed-form}$ \\ \midrule
               AUC &  $\mbox{No closed-form}$     &  $0.5$     & $\mbox{No closed-form}$ \\ \hline
        \end{tabular}}%
      \label{all_dec}%
      \medskip \medskip
        
        \justifying \noindent {\footnotesize Note: This table displays the expression for population performance metrics $\mathbb{E}_{y,\mathbf{x}}(G(y;\mathbf{x};\delta_0))$, benchmark values $\phi_0$, and XPER values $\phi_j$. We distinguish between the performance metric associated with regression models (Panel A) and those associated with classification models (Panel B). See Appendices \ref{proof_MSE_example}, \ref{proof_R2_example}, and \ref{proof_accuracy_example} for the proofs related to the MSE, $R^2$, and accuracy.}
    \end{table}%

   \clearpage

    \section{Estimation with a large number of features}
	\label{feasible_XPER}
    
	    \noindent Computing the individual XPER value $\phi_{i,j}$, as defined in Definition \ref{def_ind_XPER}, for a given feature $x_j$ requires evaluating all possible coalitions of the other features of the model, $S \subseteq \mathcal{P}(\{\mathbf{x}\} \setminus \{x_{j}\})$. For a model with $q$ features, this involves $2^{(q-1)}$ coalitions to compute $\phi_{i,j}$. To calculate the individual XPER values for all features $\left(\phi_{i,1}, \dots, \phi_{i,q}\right)$, a total of $q \times 2^{(q-1)}$ coalitions must be evaluated. As the number of coalitions grows exponentially with the number of features (e.g., $5,120$ for $q=10$ and $10,485,760$ for $q=20$), computing individual XPER values rapidly becomes computationally prohibitive. This scalability issue is not unique to our methodology but is a fundamental challenge for any approach based on Shapley values.
    \medskip

    \noindent To address this issue, we have adapted the Kernel SHAP methodology proposed by \citet{Lundberg2017} for estimating SHAP values. A key advantage of Kernel SHAP is its model-agnostic nature, making it applicable to any predictive model, unlike model-specific methods such as DeepSHAP or TreeSHAP \citep{ Lundberg2018}. This approach significantly reduces computational complexity by sampling only a subset $K$ of all possible feature coalitions. The approximate XPER values are then derived using a regression-based approximation, which ensures both practical feasibility and accurate estimation of feature contributions.  
    \medskip
    
    \noindent Specifically, we adapt this methodology to estimate the individual XPER values, $\phi_{i,0}, \phi_{i,1}, \dots, \phi_{i,q}$, as defined in Definition \ref{def_ind_XPER}. Recall that the individual XPER value $\phi_{i,j}$, for $j=1, \dots, q$, quantifies the contribution of model feature $j$ for individual $i$ to the performance metric, while $\phi_{i,0}$ represents the benchmark contribution for individual $i$. We approximate these values using a system of $K$ equations derived from the performance metrics associated with the $K$ selected coalitions of features. The estimation process is structured as a linear regression problem, where the individual XPER values are the parameters to be estimated:
    \begin{equation}
        G_{k}\left(y_i;\mathbf{x}_i\right) = \phi_{i,0} + \sum_{j=1}^{q}\phi_{i,j}z_{k,j} + \epsilon_k, \quad \text{for } k=1,\dots,K,
        \label{Eq_Estimation_indi}
    \end{equation}
    where $\epsilon_k$ is an error term and:
    \begin{equation}
        G_{k}\left(y_i;\mathbf{x}_i\right) = \frac{1}{n}\sum_{u=1}^{n} G\left(y_i;\mathbf{x}_i^{S_k},\mathbf{x}_u^{\overline{S}_k};\hat{\delta}_n\right),
    \end{equation}  
    represents the individual contribution to the sample performance metric associated with coalition $S_k$, and $z_{k,j}$ are binary variables indicating the presence ($z_{k,j} = 1$) or absence ($z_{k,j} = 0$) of feature $x_j$ in coalition $k$. For instance, consider a regression model $f(x)$ with $q = 5$ features indexed by $j$, denoted as $x_1, \dots, x_5$, for which we aim to estimate the XPER values associated with the mean squared error (MSE). Let $i$ index an individual in the sample, represented by $(y_i, x_{i,1}, \dots, x_{i,5})$. Now, consider a specific coalition of features, randomly chosen and indexed by $k = 1$, such that:  
    \begin{equation}
        S_1 = \{x_{i,1}, x_{i,3}, x_{i,5}\}, 
    \end{equation}  
    \begin{equation}
        z_{1,1} = 1, \quad z_{1,2} = 0, \quad z_{1,3} = 1, \quad z_{1,4} = 0, \quad z_{1,5} = 1.
    \end{equation}  
    The set of features excluded from this coalition is $\overline{S}_1 = \{x_{i,2}, x_{i,4}\}$, and the individual contribution of observation $i$ to the MSE for this coalition is computed as:  
    \begin{equation}
        G_{1}(y_i, \mathbf{x}_{i}) = \frac{1}{n} \sum_{u=1}^{n} \left( y_i - f(x_{i,1}, x_{u,2}, x_{i,3}, x_{u,4}, x_{i,5}) \right)^2.
    \end{equation}  
    For a second coalition, $S_2 = \{x_{i,2}, x_{i,3}\}$, we have $z_{2,2} = 1$ and $z_{2,3} = 1$, with all other dummy variables set to 0. The individual contribution for this coalition is computed as:
    \begin{equation}
        G_{2}(y_i, \mathbf{x}_{i}) = \frac{1}{n} \sum_{u=1}^{n} \left( y_i - f(x_{u,1}, x_{i,2}, x_{i,3}, x_{u,4}, x_{u,5}) \right)^2.
    \end{equation} 
    By repeating this process for $K$ coalitions, we obtain a series of individual contributions, $G_{1}(y_i,\mathbf{x}_{i})$, $\dots$, $G_{K}(y_i,\mathbf{x}_{i})$, along with $q$ series of dummy variables $\{z_{1,1}, \dots, z_{K,1}\}$, $\dots$, $\{z_{1,q}, \dots, z_{K,q}\}$.
    \medskip
      
    \noindent In a vectorial form, the system in Equation \eqref{Eq_Estimation_indi} can be written as:
    \begin{equation}
        \mathbf{G}\left(y_i;\mathbf{x}_i\right)=\mathbf{Z}\phi_i+\epsilon,
    \end{equation} 
    \begin{equation}
    \underset{\left(q+1\right) \times 1}{\phi_i}=\begin{pmatrix}  
    \phi_{i,0} \\ \phi_{i,1} \\ \vdots 
    \\ \phi_{i,q}
    \end{pmatrix}
     ,~~~
 	\underset{K \times \left(q+1\right)}{\mathbf{Z}} = \begin{pmatrix}
	1 & z_{1,1} & \hdotsfor{1} & z_{1,q} \\ 
	\vdots & \vdots & \ddots & \vdots \\
	1 & z_{K,1} & \hdotsfor{1} & z_{K,q} \\ \end{pmatrix}, ~~~
    \underset{K \times 1}{\mathbf{G\left(y_i;\mathbf{x}_i\right)}} = 
    \begin{pmatrix}
	G_{1}\left(y_i;\mathbf{x}_i\right) \\ 
	\vdots \\ 
	G_{K}\left(y_i;\mathbf{x}_i\right)
    \end{pmatrix}.
    \end{equation}
    
    \noindent The vector of individual XPER values, $\phi_i$, can be estimated using a least squares estimator. However, to accurately approximate the individual XPER values, it is necessary to account for (1) the weights of each coalition of features and (2) the fact that the estimation relies on a subset of coalitions of size $K$. Hence, we estimate Equation \eqref{Eq_Estimation_indi} using a Weighted Least Squares (WLS) estimator as follows:  
    \begin{align*}
        \hat{\boldsymbol{\phi}_i} &= \underset{\phi_i}{\text{argmin}} \sum_{k=1}^{K} \omega_{S_k} \left( G_{k}\left(y_i; \mathbf{x}_i\right) - \left(\phi_{i,0} + \sum_{j=1}^{q} \phi_{i,j} z_{k,j}\right)\right)^2,
    \end{align*}  
    where the weights $\omega_{S_k}$ are defined as:  
    \begin{equation*}
        \omega_{S_k} = \frac{q-1}{\frac{q!}{|S_k|!\left(q-|S_k|\right)!}|S_k|\left(q-|S_k|\right)},
    \end{equation*}  
    with $|S_k|$ denoting the number of features included in coalition $S_k$. The estimates of the individual XPER values are then computed as: 
    \begin{align}
        \hat{\boldsymbol{\phi}_i} = \left(\mathbf{Z}'\boldsymbol{\Omega}\mathbf{Z}\right)^{-1}\mathbf{Z}'\boldsymbol{\Omega}\mathbf{G}\left(y_i;\mathbf{x}_i\right),
    \end{align} 
    where  
    \begin{align*}
        \underset{\left(K \times K\right)}{\boldsymbol{\Omega}} &= \begin{pmatrix}
        \omega_{S_1} & 0 & \hdotsfor{1} & 0 \\ 
        0 & \omega_{S_2} & \hdotsfor{1} & 0 \\
        \vdots & \vdots & \ddots & \vdots \\
        0 & 0 & \hdotsfor{1} & \omega_{S_K} \\ \end{pmatrix}.
    \end{align*} 
    \noindent Estimating the individual XPER values using WLS significantly reduces computational time by limiting the number of coalitions considered and allowing for the simultaneous estimation of contributions for all features. The steps for implementing this approximation are detailed in Algorithm \ref{pseudo_code_individual}.

    \begin{algorithm}
	\caption{Pseudo-code to estimate individual XPER values with a large number of features} \label{pseudo_code_individual}
	\begin{algorithmic}[1]
			\STATE Train the model $f(.)$ using the training sample $\{\mathbf{x}_i, y_i\}^T_{i=1}$. \vspace{0.1cm}
            \STATE Randomly select a coalition $S_k$ from the set of all coalitions $\widetilde{\mathcal{P}}(\{\mathbf{x}\})$, and determine its complement $\overline{S}_k$. \vspace{0.1cm}
            \STATE  Store the dummy variables $z_{k,1}, \dots, z_{k,q}$. \vspace{0.1cm}
            \STATE Select an observation $i = 1$ from the test sample $\{\mathbf{x}_i, y_i, \hat{f}(\mathbf{x}_i)\}^n_{i=1}$. \vspace{0.1cm}
            \STATE Compute the individual contribution of observation $i$ to the performance metric $G_{k}\left(y_i; \mathbf{x}_i\right)$ using $S_k$ and $\overline{S}_k$. \vspace{0.1cm}
            \STATE Calculate the weight $\omega_{S_k}$ for the selected coalition $S_k$. \vspace{0.1cm}
            \STATE Repeat steps 2 through 6 $K$ times to obtain $K$ values for the individual contribution $G_{k}\left(y_i; \mathbf{x}_i\right)$ and the corresponding weights $\omega_{S_k}$, for $k = 1, \dots, K$. \vspace{0.1cm}
            \STATE Estimate the individual XPER values $\phi_{i,j}$ for $j = 1, \dots, q$ for observation $i$ using Weighted Least Squares. \vspace{0.1cm}
            \STATE Repeat steps 3 through 8 for all observations $i = 2, \dots, n$.
		\end{algorithmic}
\end{algorithm}
\medskip
\newpage
\noindent Finally, the global XPER value $\hat{\phi}_j$ for feature $x_j$, as defined in Definition \ref{XPER_definition}, is estimated as the empirical mean of the individual XPER values $\hat{\phi}_{i,j}$, as shown in Equation \eqref{individual_contribution}.

    \clearpage

    \section{XPER vs. Permutation Importance}

    \label{XPERvsPI}
    In this section, we compare the XPER method with the Permutation Importance (PI) method \citep{breiman_random_2001,fisher2019}. Both methods aim to measure the effect of the features on model performance. Therefore, it is important to choose between PI and XPER. In our opinion, there are two key differences between these approaches. First, XPER satisfies the axioms of the Shapley value, unlike PI. These axioms, such as efficiency and the null effects axiom, simplify the interpretation of XPER and ensure its relevance. For example, the sum of the contributions obtained from PI lacks a specific meaning, except in very restricted cases. Second, in the performance context, PI is designed only at the global level and does not account for individual-level analysis, thus excluding the possibility of conducting heterogeneity analysis as done in Section \ref{Heterogeneity_Emp_App}.

    \smallskip
    
    \noindent More fundamentally, XPER encompasses PI. Specifically, it can be shown that XPER values can be expressed as the sum of two terms, one of which is the (normalized) PI. To derive this result, let us formally define PI. Initially developed by \citet{breiman_random_2001} and later generalized by \citet{fisher2019}, PI corresponds to the difference between the model's performance with the original feature values (``original'' performance) and its performance when the feature values are randomly permuted (``permuted'' performance). Formally, the ``original'' performance of the model is given by the population performance metric $PM=\mathbb{E}_{y,\mathbf{x}}\left(G(y,\mathbf{x};\delta_0)\right)$. The ``permuted'' performance metric associated with a feature $x_j$ is $PM_{switch,j}=\mathbb{E}_{y,\mathbf{x}_{-j}}\mathbb{E}_{x_j}\left(G(y,\mathbf{x};\delta_0)\right)$, with $\mathbf{x}_{-j}$ the vector of features excluding the feature $x_j$. This metric represents the expected performance of the model when $x_j$ is replaced with random noise, rendering $x_j$ uninformative about $y$, while keeping the marginal distribution of $x_j$ unchanged \citep{fisher2019}. By taking the difference between the two terms, we obtain the PI value for feature $x_j$:
    \begin{equation} \label{PI_def}
        PI_j \equiv PM - PM_{switch} = \mathbb{E}_{y,\mathbf{x}}\left(G(y,\mathbf{x};\delta_0)\right) - \mathbb{E}_{y,x^S,x^{\overline{S}}}\mathbb{E}_{x_j}\left(G(y,\mathbf{x};\delta_0)\right).
    \end{equation}
    Given this definition, it is possible to show that the XPER value encompasses the PI value, as we have (see Appendix \ref{proof_PI} for the proof):  
    \begin{equation}
       \phi _{j}=\sum_{S\subseteq  \mathcal{P}(\{\mathbf{x}_{-j}\}) \setminus \{\mathbf{x}_{-j}\}} \omega_{S}\left[ \mathbb{E}_{y,x_{j},\mathbf{x}^{S}} \mathbb{E}_{\mathbf{x}^{\overline{S}}}
       \left( G\left( y;\mathbf{x} ;\delta _{0}\right) \right) -\mathbb{E}_{y,\mathbf{x}^{S}}
      \mathbb{E}_{x_{j},\mathbf{x}^{\overline{S}}} \left( G\left( y;\mathbf{x}
      ;\delta _{0}\right) \right) \right] + \frac{PI_j}{q}, 
      \label{Eq : PI and XPER}
    \end{equation}        
    where $q$ denotes the number of model features, and $\mathcal{P}(\{\mathbf{x}_{-j}\}) \setminus \{\mathbf{x}_{-j}\}$ represents the set of all possible feature coalitions excluding the feature $x_j$ and discarding the coalition with all features except $x_j$. For example, in a model with three variables, and for $j=1$, this set includes the coalitions $\{\emptyset\}$, $\{x_2\}$, and $\{x_3\}$. The PI corresponds to the gain in performance associated with adding feature $x_j$ to the coalition of all the model features (excluding $x_j$), corresponding to $\{x_2, x_3\}$ in this example.

    \smallskip

    \noindent Equation \eqref{Eq : PI and XPER} illustrates the similarity between PI and XPER values. Both metrics assess the impact of adding a given feature $x_j$ to a pre-existing coalition of features on the model's performance metric. The key difference is that PI evaluates this impact for only \textit{one specific coalition} that includes all features except $x_j$. In contrast, XPER takes into account all possible feature coalitions, satisfying the axioms of a Shapley value. As a result, PI and XPER coincide in a linear regression model, but generally differ in nonlinear models, as illustrated in Figure \ref{fig:Permutation_importance} of the empirical application.

    \smallskip

    \noindent To illustrate this simple case where XPER and PI deliver the same results, we conduct a Monte Carlo experiment. The DGP is defined by $y_i = x_{i,1} \beta_1 + x_{i,2}^3\beta_2 + \varepsilon_i$, with $\varepsilon_i$ is an i.i.d. error term, and $\mathbf{x}_{i} = (x_{i,1}, x_{i,2})'$ two i.i.d. features. We assume that $\varepsilon_i \sim \mathcal{N}(0,1)$, $\mathbf{x}_{i}\sim \mathcal{N}(\mathbf{0},\mathbf{\Sigma})$ with $diag(\Sigma) = (1,1)$, and $\beta = (2, 1)'$. 
        \noindent We simulate $K = 1,000$ pseudo-samples $\lbrace y_i^s,\mathbf{x}_i^s \rbrace_{i=1}^{T+n}$ of size $4,000$ for $s = 1, \dots, K$. For each pseudo-sample, we use the first $T = 2,000$ observations to estimate a linear regression model involving $x_{i,1}$ and $x_{i,2}^3$, and the remaining $n=2,000$ observations to compute the $R^2$ and the corresponding XPER and PI values.
        Over the $1,000$ replications, the average $R^2$ obtained is equal to $0.9492$, which is extremely close to its theoretical value ($0.95$). A similar result is obtained for estimated coefficients as the average values are equal to $\{\hat{\beta_1}, \hat{\beta_2}\} = \{2.0010, 0.9999\}$. On average, we obtain the following XPER values:
        \begin{equation*}
            \phi_0 = -0.9496, \ \phi_1 = 0.4059, \ \phi_2 = 1.4930.
        \end{equation*}
        We obtain similar contributions with the PI, as on average: 
        \begin{equation*}
            PI_1 = 0.4053, \ PI_2 = 1.4925.
        \end{equation*}

    \clearpage
    
    \section{XPER vs. SHAP}

    \label{XPERvsSHAP}
    
    In this section, we compare the XPER method with the Shapley additive explanation (SHAP) method of \cite{Lundberg2017}. As SHAP has now become ubiquitous in machine learning, we believe it is important to clearly show the added value of XPER over SHAP. In the latter, the contribution of a feature $x_j$ to the predicted value $\hat{f}(\mathbf{x}_i)$ for individual $i$, denoted $\phi _{i,j}^{SHAP}$, is defined as:
    \begin{align}  \label{DefSHAP}
			\phi _{i,j}^{SHAP} = \sum_{S \subseteq \mathcal{P}(\{\mathbf{x}\} \setminus
				\left\{x_{j}\right\})}^{} w_S \left[\mathbb{E}_{\mathbf{x}^{\overline{S}}}\left(
			\hat{f}(x_{i,j},\mathbf{x}_i^S,\mathbf{x}^{\overline{S}})\right) - \mathbb{E}_{x_j,\mathbf{x}^{\overline{S}}}\left(
			\hat{f}(x_{j},\mathbf{x}_i^S,\mathbf{x}^{\overline{S}})\right)\right],
    \end{align}
    \begin{equation} \label{pred_SHAP}
        \hat{f}(\mathbf{x}_i) = \phi_{i,0}^{SHAP} + \sum_{j=1}^q \phi_{i,j}^{SHAP},
    \end{equation}
    \begin{equation} \label{SHAP_bench}
       \phi_{i,0}^{SHAP} = \mathbb{E}\left(\hat{f}(\mathbf{x}_i)\right).
    \end{equation}

    \begin{restatable}{prop}{XPERSHAP}
\label{prop:SHAP_particular_XPER}
SHAP is a particular case of XPER where the individual contribution to the performance metric is equal to the predicted value of the model, $G\left(y_i;\mathbf{x}_i;\delta_0\right) = \hat{f}(\mathbf{x}_i)$.
\end{restatable}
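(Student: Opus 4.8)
The plan is to prove the claim by direct substitution: take the individual contribution to the performance metric to be $G(y_i;\mathbf{x}_i;\delta_0)=\hat f(\mathbf{x}_i)$, with empty nuisance parameter $\delta_0=\emptyset$, and verify that, with this choice, every object produced by the XPER construction of Section \ref{XPER_section} coincides term by term with the corresponding object of the SHAP construction. First I would check that this choice is admissible within the framework of Section 2: Assumption \ref{Additive_property} holds with $G_n(\mathbf{y};\mathbf{X})=n^{-1}\sum_{i=1}^n \hat f(\mathbf{x}_i)$, Assumption \ref{assumption_convergence} holds with population counterpart $\mathbb{E}_{\mathbf{x}}(\hat f(\mathbf{x}))$, and Assumption 1 is merely a sign convention, so the XPER quantities $\phi_j$, $\phi_{i,j}$, $\phi_0$, $\phi_{i,0}$ are all well defined in this degenerate case.

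The core step is the term-by-term comparison at the individual level. Substituting $G(y_i;x_{i,j},\mathbf{x}_i^S,\mathbf{x}^{\overline S};\delta_0)=\hat f(x_{i,j},\mathbf{x}_i^S,\mathbf{x}^{\overline S})$ into the individual XPER value of Definition \ref{def_ind_XPER} makes the dependence on $y_i$ vanish; the coalition weights are the Shapley weights $w_S=\omega_S$ of Equation \ref{Equation_weight} in both constructions, and the marginalisation operators $\mathbb{E}_{\mathbf{x}^{\overline S}}(\cdot)$ and $\mathbb{E}_{x_j,\mathbf{x}^{\overline S}}(\cdot)$ are taken with respect to the same feature distributions. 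Hence each summand reduces exactly to the corresponding summand of Equation \ref{DefSHAP}, giving $\phi_{i,j}(y_i;\mathbf{x}_i)=\phi_{i,j}^{SHAP}$. It then remains to check the two auxiliary identities: the individual benchmark $\phi_{i,0}(y_i)=\mathbb{E}_{\mathbf{x}}(G(y_i;\mathbf{x};\delta_0))$ collapses to $\mathbb{E}_{\mathbf{x}}(\hat f(\mathbf{x}))=\phi_{i,0}^{SHAP}$, matching Equation \ref{SHAP_bench}; and the individual decomposition of Equation \ref{ind_decompos}, $G(y_i;\mathbf{x}_i;\delta_0)=\phi_{i,0}+\sum_{j=1}^q\phi_{i,j}$, collapses to $\hat f(\mathbf{x}_i)=\phi_{i,0}^{SHAP}+\sum_{j=1}^q\phi_{i,j}^{SHAP}$, matching Equation \ref{pred_SHAP}. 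Taking expectations $\mathbb{E}_{y,\mathbf{x}}(\cdot)$ then recovers the global statements, the benchmark $\phi_0$ of the Efficiency Axiom \ref{efficiency} becoming $\mathbb{E}_{\mathbf{x}}(\hat f(\mathbf{x}))$.

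I expect the only delicate point to be that core comparison step: one must be careful that the expectation operators appearing in Definition \ref{def_ind_XPER} and in Equation \ref{DefSHAP} are genuinely over the same distributions --- the marginal (``interventional'') feature distribution, consistent with the Kernel SHAP convention adopted throughout the paper --- and that the weighting schemes are identical, so that the reduction is an exact identity rather than a merely structural analogy. Everything else is routine substitution.
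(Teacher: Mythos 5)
Your proposal is correct and follows essentially the same route as the paper's own proof: substituting $G\left(y_i;\mathbf{x}_i;\delta_0\right)=\hat{f}(\mathbf{x}_i)$ into Definition \ref{def_ind_XPER} and observing that the resulting expression coincides term by term (same Shapley weights, same marginalisation operators) with Equation \ref{DefSHAP}, so $\phi_{i,j}=\phi_{i,j}^{SHAP}$. The additional verifications you include (admissibility of the assumptions, the benchmark matching Equation \ref{SHAP_bench}, and the efficiency decomposition matching Equation \ref{pred_SHAP}) go beyond what the paper writes but are consistent with it.
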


    % \begin{proposition} \label{SHAP_particular_XPER} SHAP is a particular case of XPER where the individual contribution to the performance metric is equal to the predicted value of the model, i.e., $G\left(y_i;\mathbf{x}_i;\delta_0\right) = \hat{f}(\mathbf{x}_i)$. 
    % \end{proposition}
    See Appendix \ref{proof_SHAP_particular_XPER} for the proof.
    As stated in Proposition \ref{prop:SHAP_particular_XPER}, we can show that SHAP is a particular case of XPER where the performance metric does not take into account the target variable. However, as performance metrics generally include at least the target variable to compare the predictions to their true value, SHAP and individual XPER values will differ in most cases.
    % \footnote{In the case where the performance metric is defined as $G(y_i;\hat{f}(\mathbf{x}_i);\delta_0) =  \hat{f}(\mathbf{x}_i) - y_i$ we can show that $\phi_{i,j}^{SHAP} = \phi_{i,j} \mbox{ } \forall j=1,..,q$ and $ \phi_{i,0}^{SHAP} - y_i = \phi_{i,0}$. In this case, only the benchmark values differ between XPER and SHAP. However, this metric is generally not considered because when we take its mean value across all individuals, positive and negative values cancel out.} 
    However, one may still wonder whether SHAP and individual XPER values provide the same information. If this were to be true, we should find that for a given feature $x_j$ (1) a positive (negative) SHAP value means that this feature has a positive (negative) effect on the performance, and (2) a large SHAP value implies that this feature has a strong impact on performance. Below, we show that none of these statements are true. Indeed, in the former case, depending on the value of the target variable, a positive XPER value is not necessarily associated with a positive SHAP value. Intuitively, if the target variable is positive, a variable can contribute to increase the predicted value of the model ($\phi_{i,j}^{SHAP} > 0$) which can reduce the spread between the predicted value and the target value ($\phi_{i,j} > 0$). However, if the target variable is negative, a variable which contributes to decrease the predicted value of the model ($\phi_{i,j}^{SHAP} < 0$) can also reduce the prediction error of the model ($\phi_{i,j} > 0$). For instance, when the model only includes one variable, if the performance metric is defined as $G(y_i;\hat{f}(\mathbf{x}_i);\delta_0) = -(y_i - \hat{f}(\mathbf{x}_i))^2$, and if we assume that $\mathbb{E}(\hat{f}(x_1))=0$, we can show that:
    \begin{equation} \label{XPER_SHAP_example}
        \phi_{i,1} = 2\hat{\varepsilon}_i\phi_{i,1}^{SHAP} + \left(\phi_{i,1}^{SHAP}\right)^2 + \mathbb{V}\left(\hat{f}(x_{i,1})\right) ,
    \end{equation}
    where $\hat{\varepsilon}_i = y_i - \hat{f}(x_{i,1})$ is the prediction error of the model for individual $i$. See Appendix \ref{proof_XPER_SHAP_example} for the proof.
    % \footnote{Note that as we only have one variable and we assume that $\mathbb{E}(\hat{f}(x_1))=0$ the predicted value of the model is equal to the SHAP value of the variable $x_1$, i.e., $\hat{f}(x_{i,1}) = \phi_{i,1}^{SHAP}$.}
    As we can see, a positive XPER value can be associated with either a positive or negative SHAP value. If the prediction error and the SHAP value are both positive (negative), it means that this variable contributes to reduce the spread between the predicted value and the target value, which results in a positive XPER value. Therefore, a positive and a negative SHAP value can both lead to a positive XPER value. Moreover, since the individual performance and the model predictions do not share the same domain, except in the case where $G\left(y_i;\mathbf{x}_i;\delta_0\right) = \hat{f}(\mathbf{x}_i)$, the magnitude of SHAP and XPER values can differ significantly.
    
    Although designed at the individual level, the SHAP method is also used to assess the effect of the features at the global level by taking the mean of the absolute SHAP values for each feature. We have:
    % \begin{equation}
    %     \hat{\phi}_{j}^{SHAP} = \frac{1}{N}\sum_{i=1}^N \left|\hat{\phi}_{i,j}^{SHAP}\right|
    % \end{equation}
    \begin{equation}
        \phi_{j}^{SHAP} = \mathbb{E}\left( \left|\phi_{i,j}^{SHAP}\right|\right),
    \end{equation}
    where $\phi_{j}^{SHAP}$ refers to the Mean Absolute SHAP values for feature $j$. This raises again the question of how similar these values are from the XPER values. One major difference between SHAP and XPER is that the SHAP values cannot be negative at the global level. This difference turns out to be important as negative XPER values allow to red-flag features that deteriorate the performance of the model on a given sample. For instance, this could help to explain the origin of the overfitting of a model as mentioned in Section \ref{Primer}. Note that according to Equations \eqref{pred_SHAP} and \eqref{SHAP_bench}, it would not be appropriate to take the mean of the SHAP values without taking the absolute value, as we would end up decomposing a zero:
    \begin{align}
        \mathbb{E}\left(\hat{f}(\mathbf{x}_i)\right) &= \phi_{i,0}^{SHAP} + \sum_{j=1}^q \mathbb{E}\left(\phi_{i,j}^{SHAP}\right), \nonumber \\
          \Leftrightarrow \sum_{j=1}^q \mathbb{E}\left(\phi_{i,j}^{SHAP}\right) &= \mathbb{E}\left(\hat{f}(\mathbf{x}_i)\right) - \mathbb{E}\left(\hat{f}(\mathbf{x}_i)\right) = 0.
    \end{align}
    
    Moreover, as both methods provide different results at the individual level, there is no reason to think that they would be equivalent at the global level. In the empirical study in Section \ref{Emp_App}, we confirm that the differences between XPER and SHAP can be substantial in practice.

    \clearpage

    \section{Proofs}
    
    \subsection{Proof of Equation \eqref{R2_theory}} \label{R2_global_proof}
    \begin{lemma} \label{sum_weight}
    The sum of the weights across all coalitions $S$ is equal to 1, %i.e.,  
 $\sum_{S \subseteq \mathcal{P}(\{\mathbf{x}\} \setminus
				\left\{x_{j}\right\})}^{}\omega_S = 1$.
\end{lemma}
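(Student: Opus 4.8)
The plan is to partition the powerset $\mathcal{P}(\{\mathbf{x}\}\setminus\{x_j\})$ according to cardinality and thereby reduce the claim to an elementary binomial identity. First I would observe that the ground set $\{\mathbf{x}\}\setminus\{x_j\}$ has $q-1$ elements, so every coalition $S$ appearing in the sum satisfies $0\le|S|\le q-1$, and for each fixed $s$ there are exactly $\binom{q-1}{s}$ coalitions with $|S|=s$. Because the weight $\omega_S$ in Equation \ref{Equation_weight} depends on $S$ only through $|S|$, I can regroup the sum as
\begin{equation*}
\sum_{S\subseteq\mathcal{P}(\{\mathbf{x}\}\setminus\{x_j\})}\omega_S=\sum_{s=0}^{q-1}\binom{q-1}{s}\,\frac{s!\,(q-s-1)!}{q!}.
\end{equation*}

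Next I would simplify each summand: writing $\binom{q-1}{s}=\dfrac{(q-1)!}{s!\,(q-1-s)!}$ and cancelling $s!$ against $s!$ and $(q-1-s)!=(q-s-1)!$ against $(q-s-1)!$, every term collapses to $\dfrac{(q-1)!}{q!}=\dfrac{1}{q}$, independently of $s$. Summing the $q$ terms $s=0,\dots,q-1$ then yields $q\cdot\frac1q=1$, which is exactly the asserted identity.

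I do not expect any genuine obstacle here: this is the standard normalisation property of the Shapley weights, and the only point requiring care is the bookkeeping that the relevant ground set has $q-1$ elements (since $x_j$ is excluded), not $q$. As an alternative one could invoke the permutation interpretation — $q!\,\omega_S$ is the number of orderings of the $q$ features in which precisely the members of $S$ precede $x_j$ — so that summing over all $S\subseteq\{\mathbf{x}\}\setminus\{x_j\}$ counts each of the $q!$ permutations exactly once, again giving $\sum_S\omega_S=1$. I would present the direct computation as the proof and mention the permutation interpretation only as a remark.
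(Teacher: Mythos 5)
Your proposal is correct and follows essentially the same route as the paper's proof: both group the coalitions by cardinality, use that the weight depends only on $|S|$, observe that each of the $q$ size classes contributes $\binom{q-1}{s}\cdot\frac{1}{q\binom{q-1}{s}}=\frac{1}{q}$, and sum to $1$. The permutation interpretation you mention as a remark is a nice aside but not needed.
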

\begin{proof} According to the definition of $\omega_{S}$ in Equation \eqref{Equation_weight} and knowing that $\mathcal{P}(\{\mathbf{x}\} \setminus
				\left\{x_{j}\right\}) = \bigcup_{k=0}^{q-1} \mathcal{P}_k(\{\mathbf{x}\} \setminus
				\left\{x_{j}\right\})$, where $\mathcal{P}_k(\{\mathbf{x}\} \setminus
				\left\{x_{j}\right\})$ refers to the collection of all subsets of size $k$ that can be formed from the powerset $\mathcal{P}(\{\mathbf{x}\} \setminus
				\left\{x_{j}\right\})$, we have: 
\begin{align*}
    \sum_{S \subseteq \mathcal{P}(\{\mathbf{x}\} \setminus
				\left\{x_{j}\right\})}^{}\omega_{S} &= \sum_{S \subseteq \bigcup_{k=0}^{q-1} \mathcal{P}_k(\{\mathbf{x}\} \setminus
				\left\{x_{j}\right\})}^{}\frac{1}{q \times C^{|S|}_{q-1}}.
\end{align*}
with $C^{|S|}_{q-1}$ the number of $|S|\mbox{-combinations}$ of a set with $q-1$ elements.
As $\mathcal{P}_k(\{\mathbf{x}\} \setminus
				\left\{x_{j}\right\}) \cap \mathcal{P}_l(\{\mathbf{x}\} \setminus
				\left\{x_{j}\right\}) = \emptyset, ~ \forall k \neq l$ we derive that:
\begin{align*}
    \sum_{S \subseteq \mathcal{P}(\{\mathbf{x}\} \setminus
				\left\{x_{j}\right\})}^{}\omega_{S} &= \sum_{k=0}^{q-1} \sum_{S \subseteq \mathcal{P}_k(\{\mathbf{x}\} \setminus
				\left\{x_{j}\right\})}^{} \frac{1}{q \times C^{|S|}_{q-1}}.
\end{align*}
For each $k$, $\mathcal{P}_k(\{\mathbf{x}\} \setminus
				\left\{x_{j}\right\})$ is composed of $C^{k}_{q-1}$ subsets of size $k$. As $S \subseteq \mathcal{P}_k(\{\mathbf{x}\} \setminus
				\left\{x_{j}\right\}$, we know that $|S| = k$, which implies that $C^{|S|}_{q-1} = C^{k}_{q-1}$. Thus,
\begin{align*}
    \sum_{S \subseteq \mathcal{P}(\{\mathbf{x}\} \setminus
				\left\{x_{j}\right\})}^{}\omega_{S} = \sum_{k=0}^{q-1} C^{k}_{q-1} \frac{1}{q \times C^{k}_{q-1}} = \sum_{k=0}^{q-1}\frac{1}{q} = 1.
\end{align*}    
\end{proof}
\vspace{-1cm}
\begin{lemma} \label{lemma_R2}
    Consider a linear regression model $\hat{f}(\mathbf{x})= \sum_{j=1}^{q}\hat{\beta}_jx_{j}$ where we assume that the DGP
	of the test sample $S_n=\{\mathbf{x}_i,y_i,\hat{f}(\mathbf{x}%
	_i)\}^n_{i=1}$ satisfies $	\mathbb{E}\left(\mathbf{x}\right) = \mu_q 
	\mbox{ and } \mathbb{V}(\mathbf{x}) = \Sigma$ a positive semi-definite matrix, with $\Sigma_{k,j} = \sigma_{x_k,x_j}$ the covariance between feature $x_k$ and $x_j$. The individual contribution to the $R^2$, for a coalition $S \subseteq \mathcal{P}(\{\mathbf{x}\} \setminus\left\{x_{j}\right\})$, can be expressed as:  
        \begin{align*}
    G(y;\mathbf{x}) &= 1 - \sigma^{-2}_y\left[y^2 + \sum_{\substack{l=1 \\ l \in S}}^qx_l^2\hat{\beta}_l^2 + \sum_{\substack{k=1 \\ k \in \overline{S}}}^qx_k^2\hat{\beta}_k^2 + x_j^2\hat{\beta}_j^2 + 2\sum_{\substack{1 \le k < l \le q\\ k,l \in S}}\hat{\beta}_k\hat{\beta}_lx_kx_l + 2\sum_{\substack{1 \le k < l \le q\\ k,l \in \overline{S}}}\hat{\beta}_k\hat{\beta}_lx_kx_l\right] \\
    &- \sigma^{-2}_y\left[2\sum_{\substack{k=1\\ k \in S}}^q\hat{\beta}_k\hat{\beta}_jx_kx_j + 2\sum_{\substack{k=1\\ k \in \overline{S}}}^q\hat{\beta}_k\hat{\beta}_jx_kx_j + 2 \sum_{\substack{k=1 \\ k \in S}}^q \sum_{\substack{l=1 \\ l \in \overline{S}}}^q\hat{\beta}_k\hat{\beta}_lx_kx_l\right] \\ 
    &- \sigma^{-2}_y\left[- 2\sum_{\substack{k=1\\ k \in S}}^qyx_k\hat{\beta}_k - 2\sum_{\substack{k=1\\ k \in \overline{S}}}^qyx_k\hat{\beta}_k - 2yx_j\hat{\beta}_j \right].
\end{align*}
\end{lemma}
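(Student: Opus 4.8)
\textbf{Overall approach.} The statement is purely algebraic: it simply expands the squared residual $(y - \hat f(\mathbf{x}))^2$ appearing in the individual $R^2$ contribution $G(y;\mathbf{x}) = 1 - \sigma_y^{-2}(y-\hat f(\mathbf{x}))^2$, and then sorts the resulting cross-terms according to whether each feature index lies in the coalition $S$, in its complement $\overline S$, or equals the feature of interest $j$. So the plan is a bookkeeping exercise: start from $G(y;\mathbf{x}) = 1 - \sigma_y^{-2}\bigl(y - \sum_{k=1}^q \hat\beta_k x_k\bigr)^2$, expand the square, and then partition the index set $\{1,\dots,q\}$ as the disjoint union $S \cup \overline S \cup \{j\}$ (using that $S \subseteq \mathcal{P}(\{\mathbf{x}\}\setminus\{x_j\})$, so $j\notin S$ and $j\notin\overline S$).

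\textbf{Key steps in order.} First I would write $\bigl(y - \sum_{k=1}^q \hat\beta_k x_k\bigr)^2 = y^2 - 2y\sum_{k=1}^q \hat\beta_k x_k + \bigl(\sum_{k=1}^q \hat\beta_k x_k\bigr)^2$. Second, I would expand the last term as $\sum_{k=1}^q \hat\beta_k^2 x_k^2 + 2\sum_{1\le k<l\le q}\hat\beta_k\hat\beta_l x_k x_l$. Third, I would split each of the three sums over $\{1,\dots,q\}$ according to the partition $S \cup \overline S \cup \{j\}$: the single sums $\sum_k \hat\beta_k^2 x_k^2$ and $\sum_k y x_k \hat\beta_k$ each break into a piece over $S$, a piece over $\overline S$, and the isolated $j$-term; the double sum $\sum_{k<l}\hat\beta_k\hat\beta_l x_k x_l$ breaks into pairs both in $S$, pairs both in $\overline S$, pairs straddling $S$ and $\overline S$, pairs with one index equal to $j$ and the other in $S$, and pairs with one index equal to $j$ and the other in $\overline S$ (the $j$-$j$ "pair" does not occur since $k<l$). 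Matching these pieces to the three bracketed groups in the claimed display — the first bracket collecting the within-$S$, within-$\overline S$, and pure-$j$ quadratic terms, the second bracket collecting the three types of cross terms involving distinct features, and the third bracket collecting the linear-in-$y$ terms — completes the proof. Lemma \ref{sum_weight} is not needed here; it is stated for the subsequent aggregation step.

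\textbf{Main obstacle.} There is no real mathematical difficulty; the only thing to be careful about is the combinatorial bookkeeping of the cross terms, specifically making sure that every unordered pair $\{k,l\}$ with $k\neq l$ is counted exactly once and lands in exactly one of the five categories, and that the factor of $2$ in front of each double-sum category is handled consistently with the $2\sum_{k<l}$ convention. A secondary subtlety is that the terms $2\sum_{k\in S}\hat\beta_k\hat\beta_j x_k x_j$ and $2\sum_{k\in \overline S}\hat\beta_k\hat\beta_j x_k x_j$ in the claimed expression come from pairs $\{k,j\}$ where I must be agnostic about whether $k<j$ or $k>j$; writing the double sum symmetrically (or simply as $\sum_{k\neq l}$ with a compensating $\tfrac12$) before re-splitting avoids any ordering headache. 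Once the partition is applied termwise, collecting the pieces into the three brackets is immediate.
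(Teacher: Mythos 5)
Your proposal is correct and follows essentially the same route as the paper's proof: expand $\bigl(y-\sum_{k=1}^q\hat\beta_k x_k\bigr)^2$ in the individual $R^2$ contribution and regroup the terms according to the partition of the index set into $S$, $\overline{S}$, and $\{j\}$. Your remark that Lemma \ref{sum_weight} plays no role at this stage is also consistent with the paper, where it is only invoked in the subsequent aggregation over coalitions.
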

\begin{proof} Consider a linear regression model $\hat{f}(\mathbf{x})= \sum_{j=1}^{q}\hat{\beta}_jx_{j}$ where we assume that the DGP of the test sample $S_n=\{\mathbf{x}_i,y_i,\hat{f}(\mathbf{x}%
	_i)\}^n_{i=1}$ satisfies $	\mathbb{E}\left(\mathbf{x}\right) = \mu_q 
	\mbox{ and } \mathbb{V}(\mathbf{x}) = \Sigma$ a positive semi-definite matrix, with $\Sigma_{k,j} = \sigma_{x_k,x_j}$ the covariance between feature $x_k$ and $x_j$.
 
    Reminds that for a linear regression model $\hat{f}(\mathbf{x})= \sum_{j=1}^{q}\hat{\beta}_jx_{j}$, the individual contribution to the $R^2$ is defined as (see Equation \eqref{R2_def}):
    \begin{align}
        G(y;\mathbf{x}) &= 1 - \frac{(y - \mathbf{x}\hat{\beta})^2}{\sigma^2_y}\nonumber \\
        & = 1 - \sigma^{-2}_y\left[y^2 + \sum_{k=1}^qx_k^2\hat{\beta}_k^2 + 2\sum_{1 \le k < l \le q}\hat{\beta}_k\hat{\beta}_lx_kx_l - 2\sum_{k=1}^qyx_k\hat{\beta}_k\right]. \label{R2_individual}
    \end{align}
    Considering a coalition $S \subseteq \mathcal{P}(\{\mathbf{x}\} \setminus\left\{x_{j}\right\})$ of features, the vector of features $\mathbf{x}$ is composed of three sub-vectors: $\mathbf{x}^S$ the vector of features in the coalition $S$, $\mathbf{x}^{\overline{S}}$ the vector of features apart from the coalition, and $x_j$ the remaining feature of interest, such that $\mathbf{x} = (\mathbf{x}^S,\mathbf{x}^{\overline{S}},x_j)$.  Therefore, we can rewrite Equation \eqref{R2_individual} as:
    \begin{align*}
    G(y;\mathbf{x}) &= 1 - \sigma^{-2}_y\left[y^2 + \sum_{\substack{l=1 \\ l \in S}}^qx_l^2\hat{\beta}_l^2 + \sum_{\substack{k=1 \\ k \in \overline{S}}}^qx_k^2\hat{\beta}_k^2 + x_j^2\hat{\beta}_j^2 + 2\sum_{\substack{1 \le k < l \le q\\ k,l \in S}}\hat{\beta}_k\hat{\beta}_lx_kx_l + 2\sum_{\substack{1 \le k < l \le q\\ k,l \in \overline{S}}}\hat{\beta}_k\hat{\beta}_lx_kx_l\right] \\
    &- \sigma^{-2}_y\left[2\sum_{\substack{k=1\\ k \in S}}^q\hat{\beta}_k\hat{\beta}_jx_kx_j + 2\sum_{\substack{k=1\\ k \in \overline{S}}}^q\hat{\beta}_k\hat{\beta}_jx_kx_j + 2 \sum_{\substack{k=1 \\ k \in S}}^q \sum_{\substack{l=1 \\ l \in \overline{S}}}^q\hat{\beta}_k\hat{\beta}_lx_kx_l\right] \\ 
    &- \sigma^{-2}_y\left[- 2\sum_{\substack{k=1\\ k \in S}}^qyx_k\hat{\beta}_k - 2\sum_{\substack{k=1\\ k \in \overline{S}}}^qyx_k\hat{\beta}_k - 2yx_j\hat{\beta}_j \right].
\end{align*}
\end{proof}
\vspace{-1cm}
\begin{lemma} \label{big_summation}
For a given set of features $\{\mathbf{x}\} \setminus
				\left\{x_{j}\right\}$, we have:
\begin{align*}
    2\sum_{S \subseteq \mathcal{P}(\{\mathbf{x}\} \setminus
				\left\{x_{j}\right\})}^{}\omega_{S}\left(\sum_{\substack{k = 1\\ k \in S}}^qh(x_k) + \sum_{\substack{k=1\\ k \in \overline{S}}}^qg(x_k)\right) = \sum_{\substack{k=1\\ k \neq j}}\left(h(x_k) +g(x_k)\right),
\end{align*}
with $h(.) \mbox{ and } g(.)$ some unknown linear or non-linear functions.
\end{lemma}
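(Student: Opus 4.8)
The plan is to interchange the order of summation, isolate the coefficient multiplying each feature term $h(x_k)$ and $g(x_k)$, and then exploit a complementation symmetry of the Shapley weights together with Lemma \ref{sum_weight}. Throughout, $\overline{S}$ denotes the complement of $S$ \emph{within} the $(q-1)$-element set $\{\mathbf{x}\}\setminus\{x_j\}$, so that $x_j$ belongs to neither $S$ nor $\overline{S}$; consequently, for a fixed $k\neq j$ the feature $x_k$ lies in exactly one of $S$ and $\overline{S}$, which is what makes the bookkeeping work.

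First I would swap the two finite sums. Since for each $k\neq j$ the term $h(x_k)$ is picked up precisely when $k\in S$ and $g(x_k)$ precisely when $k\notin S$,
\[
2\sum_{S\subseteq\mathcal{P}(\{\mathbf{x}\}\setminus\{x_j\})}\omega_{S}\left(\sum_{k\in S}h(x_k)+\sum_{k\in\overline{S}}g(x_k)\right)=2\sum_{k\neq j}\left(h(x_k)\sum_{S:\,k\in S}\omega_{S}+g(x_k)\sum_{S:\,k\notin S}\omega_{S}\right).
\]
It then remains to prove that for every $k\neq j$ one has $\sum_{S:\,k\in S}\omega_{S}=\sum_{S:\,k\notin S}\omega_{S}=\tfrac12$.

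The key step is the observation that $S\mapsto S^{c}:=(\{\mathbf{x}\}\setminus\{x_j\})\setminus S$ is a weight-preserving involution on $\mathcal{P}(\{\mathbf{x}\}\setminus\{x_j\})$: indeed $|S^{c}|=q-1-|S|$, so by Equation \ref{Equation_weight} $\omega_{S^{c}}=\frac{(q-1-|S|)!\,|S|!}{q!}=\omega_{S}$, while $k\in S\iff k\notin S^{c}$. Hence this involution is a bijection between the coalitions containing $k$ and those not containing $k$ under which $\omega_{S}$ is unchanged, so the two sums are equal; since by Lemma \ref{sum_weight} they add up to $\sum_{S}\omega_{S}=1$, each equals $\tfrac12$. (Alternatively, $\sum_{S:\,k\in S}\omega_{S}$ can be evaluated directly by grouping coalitions by their size, exactly as in the proof of Lemma \ref{sum_weight}.) Substituting $\tfrac12$ for both inner sums in the displayed identity yields $2\sum_{k\neq j}(\tfrac12 h(x_k)+\tfrac12 g(x_k))=\sum_{k\neq j}(h(x_k)+g(x_k))$, which is the claim. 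There is no real obstacle here: the argument is purely combinatorial and uses no property of $h$ or $g$; the only points requiring care are that $x_j$ is excluded from $S\cup\overline{S}$ and the verification $\omega_{S^{c}}=\omega_{S}$, both immediate from the definitions.
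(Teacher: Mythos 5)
Your proof is correct, but it takes a genuinely different route from the paper's. You interchange the order of summation, reducing the identity to the claim that for each fixed $k\neq j$ the total weight of coalitions containing $x_k$ equals the total weight of coalitions excluding it, and you prove this via the weight-preserving complementation involution $S\mapsto S^{c}$ on $\mathcal{P}(\{\mathbf{x}\}\setminus\{x_j\})$ (the identity $\omega_{S^{c}}=\omega_{S}$, which is exactly the content of Lemma \ref{weight_lemma}), combined with Lemma \ref{sum_weight} to conclude that each inner sum equals $1/2$. The paper instead partitions the powerset by coalition size, pairs each size $l$ with its complementary size $q-1-l$, treats even and odd $q$ separately (with special bookkeeping for the middle size $l=(q-1)/2$ in the odd case), counts that each feature $x_k$ lies in $C^{l}_{q-1}$ of the $2C^{l}_{q-1}$ coalitions in each paired class, and then sums over $l$ using $\omega_{l}C^{l}_{q-1}=1/q$. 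Your argument buys a shorter, case-free proof that isolates the structural fact doing all the work — under the Shapley weights, every feature other than $x_j$ has total inclusion weight exactly $1/2$ — whereas the paper's computation is more explicit about the size-class combinatorics at the cost of the even/odd split and the middle-size correction. Both proofs ultimately rest on the same symmetry $\omega_{S}=\omega_{S^{c}}$; yours just exploits it directly, and your handling of the one delicate point (that $x_j$ belongs to neither $S$ nor $\overline{S}$, so each $k\neq j$ lies in exactly one of them) is stated correctly.
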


\begin{proof} Let consider a quantity $A$ defined as follows:
\begin{align} \label{big_sum}
    A = 2\sum_{S \subseteq \mathcal{P}(\{\mathbf{x}\} \setminus
				\left\{x_{j}\right\})}^{}\omega_{S}\left(\sum_{\substack{k = 1\\ k \in S}}^qh(x_k) + \sum_{\substack{k=1\\ k \in \overline{S}}}^qg(x_k)\right), 
\end{align}
with $h(.) \mbox{ and } g(.)$ some unknown functions.
As $\mathcal{P}(\{\mathbf{x}\} \setminus
				\left\{x_{j}\right\}) = \bigcup_{l=0}^{q-1} \mathcal{P}_l(\{\mathbf{x}\} \setminus
				\left\{x_{j}\right\})$, where $\mathcal{P}_l(\{\mathbf{x}\} \setminus
				\left\{x_{j}\right\})$ refers to the collection of all subsets of size $l$ that can be formed from the powerset $\mathcal{P}(\{\mathbf{x}\} \setminus
				\left\{x_{j}\right\})$, we obtain from Equation \eqref{big_sum}:
\begin{align} \label{sum_1}
    A  = 2\sum_{S \subseteq \bigcup_{l=0}^{q-1} \mathcal{P}_l(\{\mathbf{x}\} \setminus
				\left\{x_{j}\right\})}^{}\omega_{S}\left(\sum_{\substack{k = 1\\ k \in S}}^qh(x_k) + \sum_{\substack{k=1\\ k \in \overline{S}}}^qg(x_k)\right).
\end{align}
    Note that for an even number of features $q$, we have:
    \begin{align} \label{even}
        \bigcup_{l=0}^{q-1} \mathcal{P}_l(\{\mathbf{x}\} \setminus
				\left\{x_{j}\right\})=\bigcup_{l=0}^{(q-2)/2}  \mathcal{P}_{q-1-l}(\{\mathbf{x}\} \setminus
				\left\{x_{j}\right\}) \cup \mathcal{P}_{l}(\{\mathbf{x}\} \setminus
				\left\{x_{j}\right\}),
    \end{align}
    whereas for an odd number of features, we end up with:
    \begin{align}
        \bigcup_{l=0}^{q-1} \mathcal{P}_l(\{\mathbf{x}\} \setminus
				\left\{x_{j}\right\}) = \bigcup_{l=0}^{(q-1)/2}  \mathcal{P}_{q-1-l}(\{\mathbf{x}\} \setminus
				\left\{x_{j}\right\}) \cup \mathcal{P}_{l}(\{\mathbf{x}\} \setminus
				\left\{x_{j}\right\}).
    \end{align}
    Therefore, we now distinguish between the two cases to complete the proof.
    \medskip
    
    \noindent When $q$ is even, substituting the expression of $\bigcup_{l=0}^{q-1} \mathcal{P}_l(\{\mathbf{x}\} \setminus
				\left\{x_{j}\right\})$ from Equation \eqref{even} into Equation \eqref{sum_1} renders Equation \eqref{sum_1} equivalent to:
    \begin{align} \label{intermediate_even}
        A = 2\sum_{S \subseteq \bigcup_{l=0}^{(q-2)/2}  \mathcal{P}_{q-1-l}(\{\mathbf{x}\} \setminus
				\left\{x_{j}\right\}) \cup \mathcal{P}_{l}(\{\mathbf{x}\} \setminus
				\left\{x_{j}\right\})}^{}\omega_{S}\left(\sum_{\substack{k = 1\\ k \in S}}^qh(x_k) + \sum_{\substack{k=1\\ k \in \overline{S}}}^qg(x_k)\right).
    \end{align}
    As $\bigcap_{l=0}^{(q-2)/2}  \mathcal{P}_{q-1-l}(\{\mathbf{x}\} \setminus
				\left\{x_{j}\right\}) \cup \mathcal{P}_{l}(\{\mathbf{x}\} \setminus
				\left\{x_{j}\right\}) = \emptyset$, we can rewrite Equation \eqref{intermediate_even} as follows:
    \begin{align} \label{intermediate_even_2}
        A = 2\sum_{l=0}^{(q-2)/2}\sum_{S \subseteq \mathcal{P}_{q-1-l}(\{\mathbf{x}\} \setminus
				\left\{x_{j}\right\}) \cup \mathcal{P}_{l}(\{\mathbf{x}\} \setminus
				\left\{x_{j}\right\})}^{}\omega_{S}\left(\sum_{\substack{k = 1\\ k \in S}}^qh(x_k) + \sum_{\substack{k=1\\ k \in \overline{S}}}^qg(x_k)\right).
    \end{align}
    Note that each coalition $S \subseteq \mathcal{P}_{q-1-l}(\{\mathbf{x}\} \setminus
				\left\{x_{j}\right\}) \cup \mathcal{P}_{l}(\{\mathbf{x}\} \setminus
				\left\{x_{j}\right\})$ is either composed of $|S| =q-1-l$ or $|S| = l$ elements obtained from the set of features $\{\mathbf{x}\} \setminus
				\left\{x_{j}\right\}$ of size $q-1$. Therefore, according to Lemma \ref{weight_lemma}, all of the coalitions $S \subseteq \mathcal{P}_{q-1-l}(\{\mathbf{x}\} \setminus
				\left\{x_{j}\right\}) \cup \mathcal{P}_{l}(\{\mathbf{x}\} \setminus
				\left\{x_{j}\right\})$ have the same weight.  We refer to this weight as $\omega_{l}$. Thus, Equation \eqref{intermediate_even_2} simplifies to:
        \begin{align} 
        A = 2\sum_{l=0}^{(q-2)/2}\omega_{l}\sum_{S \subseteq \mathcal{P}_{q-1-l}(\{\mathbf{x}\} \setminus
				\left\{x_{j}\right\}) \cup \mathcal{P}_{l}(\{\mathbf{x}\} \setminus
				\left\{x_{j}\right\})}^{}\left(\sum_{\substack{k = 1\\ k \in S}}^qh(x_k) + \sum_{\substack{k=1\\ k \in \overline{S}}}^qg(x_k)\right).
    \end{align}
    By construction, each feature $x_k \in \{\mathbf{x}\} \setminus
				\left\{x_{j}\right\}$ is included in half of the coalitions $S \subseteq \mathcal{P}_{q-1-l}(\{\mathbf{x}\} \setminus
				\left\{x_{j}\right\}) \cup \mathcal{P}_{l}(\{\mathbf{x}\} \setminus
				\left\{x_{j}\right\})$. Note that the set $\mathcal{P}_{q-1-l}(\{\mathbf{x}\} \setminus
				\left\{x_{j}\right\}) \cup \mathcal{P}_{l}(\{\mathbf{x}\} \setminus
				\left\{x_{j}\right\})$ is composed of $2\times C^{q-1-l}_{q-1} = 2\times C^{l}_{q-1}$ coalitions. Therefore, each feature $x_k$ is included in $C^{l}_{q-1}$ coalitions. Thus, we can write that: %\vspace{-0.5cm}
    \begin{align}
        \sum_{S \subseteq \mathcal{P}_{q-1-l}(\{\mathbf{x}\} \setminus
				\left\{x_{j}\right\}) \cup \mathcal{P}_{l}(\{\mathbf{x}\} \setminus
				\left\{x_{j}\right\})}^{}\sum_{\substack{k = 1\\ k \in S}}^qh({x}_k) = \sum_{\substack{k = 1\\ k \neq j}}^qC^{l}_{q-1}h({x}_k).
    \end{align}
    As each feature $x_k \in \{\mathbf{x}\} \setminus
				\left\{x_{j}\right\}$ is included in half of the coalitions $S \subseteq \mathcal{P}_{q-1-l}(\{\mathbf{x}\} \setminus
				\left\{x_{j}\right\}) \cup \mathcal{P}_{l}(\{\mathbf{x}\} \setminus
				\left\{x_{j}\right\})$, each of them is excluded from the coalitions $S$ half of the time. Therefore, each feature $x_k$ is included in $C^{l}_{q-1}$ coalitions $\overline{S} \subseteq \mathcal{P}_{q-1-l}(\{\mathbf{x}\} \setminus
				\left\{x_{j}\right\}) \cup \mathcal{P}_{l}(\{\mathbf{x}\} \setminus
				\left\{x_{j}\right\})$, such that: 
    \begin{align}
        \sum_{S \subseteq \mathcal{P}_{q-1-l}(\{\mathbf{x}\} \setminus
				\left\{x_{j}\right\}) \cup \mathcal{P}_{l}(\{\mathbf{x}\} \setminus
				\left\{x_{j}\right\})}^{}\sum_{\substack{k = 1\\ k \in \overline{S}}}^qg({x}_k) = \sum_{\substack{k = 1\\ k \neq j}}^qC^{l}_{q-1}g({x}_k).
    \end{align}
    As a consequence, Equation \eqref{intermediate_even_2} simplifies to:
    \begin{align}
        A = 2\sum_{l=0}^{(q-2)/2}\omega_{l}C^l_{q-1}\left(\sum_{\substack{k=1\\ k \neq j}}\left(h(x_k) +g(x_k)\right)\right).
    \end{align}
    Moreover, according to the definition of $\omega_{l}$ in Equation \eqref{Equation_weight}, we then have:
        \begin{align}
        A = 2\sum_{l=0}^{(q-2)/2}\frac{1}{q}\left(\sum_{\substack{k=1\\ k \neq j}}\left(h(x_k) +g(x_k)\right)\right).
    \end{align}
    Finally, for an even number of features $q$, we obtain:
            \begin{align}
        2\sum_{S \subseteq \mathcal{P}(\{\mathbf{x}\} \setminus
				\left\{x_{j}\right\})}^{}\omega_{S}\left(\sum_{\substack{k = 1\\ k \in S}}^qh(x_k) + \sum_{\substack{k=1\\ k \in \overline{S}}}^qg(x_k)\right) = \sum_{\substack{k=1\\ k \neq j}}\left(h(x_k) +g(x_k)\right).
    \end{align}
    \noindent Similarly, when $q$ is odd, we can write that:
    \begin{align} \label{odd_sum}
            A = 2\sum_{l=0}^{(q-1)/2}\omega_{l}\sum_{S \subseteq \mathcal{P}_{q-1-l}(\{\mathbf{x}\} \setminus
    				\left\{x_{j}\right\}) \cup \mathcal{P}_{l}(\{\mathbf{x}\} \setminus
    				\left\{x_{j}\right\})}^{}\left(\sum_{\substack{k = 1\\ k \in S}}^qh(x_k) + \sum_{\substack{k=1\\ k \in \overline{S}}}^qg(x_k)\right).
        \end{align}
    However, when $l=(q-1)/2$,  
    the set $\mathcal{P}_{q-1-l}(\{\mathbf{x}\} \setminus
				\left\{x_{j}\right\}) \cup \mathcal{P}_{l}(\{\mathbf{x}\} \setminus
				\left\{x_{j}\right\})$ is only composed of $C^{q-1-l}_{q-1} = C^{l}_{q-1}$ coalitions as $\mathcal{P}_{(q-1)/2}(\{\mathbf{x}\} \setminus
				\left\{x_{j}\right\}) \cup \mathcal{P}_{(q-1)/2}(\{\mathbf{x}\} \setminus
				\left\{x_{j}\right\}) = \mathcal{P}_{(q-1)/2}(\{\mathbf{x}\} \setminus
				\left\{x_{j}\right\}) $. Therefore, for $l=(q-1)/2$, each feature $x_k \in S \subseteq \mathcal{P}_{q-1-l}(\{\mathbf{x}\} \setminus
				\left\{x_{j}\right\}) \cup \mathcal{P}_{l}(\{\mathbf{x}\} \setminus
				\left\{x_{j}\right\})$ is included in $C^{l}_{q-1}/2$ coalitions.
    To take into account this specificity, we rewrite Equation \eqref{odd_sum} as follows:
    \begin{align} \label{odd_sum_1}
            A &= 2\sum_{l=0}^{(q-3)/2}\omega_{l}\sum_{S \subseteq \mathcal{P}_{q-1-l}(\{\mathbf{x}\} \setminus
    				\left\{x_{j}\right\}) \cup \mathcal{P}_{l}(\{\mathbf{x}\} \setminus
    				\left\{x_{j}\right\})}^{}\left(\sum_{\substack{k = 1\\ k \in S}}^qh(x_k) + \sum_{\substack{k=1\\ k \in \overline{S}}}^qg(x_k)\right) \\
         &-2\omega_{(q-1)/2}\sum_{S \subseteq \mathcal{P}_{(q-1)/2}(\{\mathbf{x}\} \setminus
    				\left\{x_{j}\right\}) }^{}\left(\sum_{\substack{k = 1\\ k \in S}}^qh(x_k) + \sum_{\substack{k=1\\ k \in \overline{S}}}^qg(x_k)\right),
        \end{align}
        which is equal to:
        \begin{align} 
            A &= 2\sum_{l=0}^{(q-3)/2}\omega_{l}C^l_{q-1}\left(\sum_{\substack{k=1\\ k \neq j}}\left(h(x_k) +g(x_k)\right)\right) -2w_{|(q-1)/2|}\frac{C^{(q-1)/2}_{q-1}}{2}\left(\sum_{\substack{k=1\\ k \neq j}}\left(h(x_k) +g(x_k)\right)\right).
        \end{align}
        According to the definition of $\omega_{l}$ in Equation \eqref{Equation_weight}, we then have:
       \begin{align} 
            A &= 2\sum_{l=0}^{(q-3)/2}\frac{1}{q}\left(\sum_{\substack{k=1\\ k \neq j}}\left(h(x_k) +g(x_k)\right)\right) -\left(\sum_{\substack{k=1\\ k \neq j}}\left(h(x_k) +g(x_k)\right)\right).
        \end{align}
         Finally, for an odd number of features $q$, we obtain:
            \begin{align}
        A = 2\sum_{S \subseteq \mathcal{P}(\{\mathbf{x}\} \setminus
				\left\{x_{j}\right\})}^{}\omega_{S}\left(\sum_{\substack{k = 1\\ k \in S}}^qh(x_k) + \sum_{\substack{k=1\\ k \in \overline{S}}}^qg(x_k)\right) = \sum_{\substack{k=1\\ k \neq j}}\left(h(x_k) +g(x_k)\right).
    \end{align}
    As we obtain the same expression for $A$ with an odd and an even number of features $q$, we conclude that for all $q$: 
    \begin{align}
        2\sum_{S \subseteq \mathcal{P}(\{\mathbf{x}\} \setminus
				\left\{x_{j}\right\})}^{}\omega_{S}\left(\sum_{\substack{k = 1\\ k \in S}}^qh(x_k) + \sum_{\substack{k=1\\ k \in \overline{S}}}^qg(x_k)\right) = \sum_{\substack{k=1\\ k \neq j}}\left(h(x_k) +g(x_k)\right),
    \end{align}    
    with $h(.) \mbox{ and } g(.)$ some unknown functions.
\end{proof}

\begin{proposition} \label{R2_prop}
    Consider a linear regression model $\hat{f}(\mathbf{x})= \sum_{j=1}^{q}\hat{\beta}_jx_{j}$ where we assume that the DGP
	of the test sample $S_n=\{\mathbf{x}_i,y_i,\hat{f}(\mathbf{x}%
	_i)\}^n_{i=1}$ satisfies $	\mathbb{E}\left(\mathbf{x}\right) = \mu_q 
	\mbox{ and } \mathbb{V}(\mathbf{x}) = \Sigma$ a positive semi-definite matrix, with $\Sigma_{k,j} = \sigma_{x_k,x_j}$ the covariance between feature $x_k$ and $x_j$. Then, the XPER contribution $\phi_{j}$ of feature $x_j$ to the $R^2$ is:
\begin{equation} 
    \phi_{j}=\frac{2\hat{\beta}_j\sigma_{y,x_j}}{\sigma^2_y},  \qquad  \quad  \forall j=1,...,q,
\end{equation}
with $\sigma_y^2$ the variance of the target variable and $\sigma_{y,x_j}$ its covariance with feature $x_j$.
\end{proposition}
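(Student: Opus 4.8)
The plan is to reduce the computation of $\phi_j$ to a term-by-term evaluation based on the expansion of the individual $R^2$ contribution in Lemma~\ref{lemma_R2}. Recall that here $G(y;\mathbf{x};\delta_0)=1-\sigma_y^{-2}(y-\mathbf{x}\hat\beta)^2$ with $\delta_0=\sigma_y^{2}$, and that Lemma~\ref{lemma_R2} writes $G$ as the sum of a constant, the term $-\sigma_y^{-2}y^{2}$, the squared terms $-\sigma_y^{-2}\hat\beta_k^{2}x_k^{2}$ (indexed by $k\in S$, $k\in\overline{S}$, and $k=j$), the cross products $-2\sigma_y^{-2}\hat\beta_k\hat\beta_\ell x_kx_\ell$, and the linear terms $2\sigma_y^{-2}\hat\beta_k yx_k$. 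For a fixed coalition $S\subseteq\mathcal{P}(\{\mathbf{x}\}\setminus\{x_j\})$ I would then evaluate the bracketed marginal contribution
\[
\Delta_S=\mathbb{E}_{y,x_j,\mathbf{x}^{S}}\mathbb{E}_{\mathbf{x}^{\overline{S}}}\!\left(G(y;\mathbf{x};\delta_0)\right)-\mathbb{E}_{y,\mathbf{x}^{S}}\mathbb{E}_{x_j,\mathbf{x}^{\overline{S}}}\!\left(G(y;\mathbf{x};\delta_0)\right)
\]
monomial by monomial, using that the marginalising blocks $\mathbf{x}^{\overline{S}}$ and $(x_j,\mathbf{x}^{\overline{S}})$ are drawn jointly within themselves but independently of the averaging block. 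The guiding observation is that the two expectations differ only in whether $x_j$ sits in the averaging block (jointly with $y$ and $\mathbf{x}^{S}$) or in the marginalising block; hence every monomial not containing $x_j$ contributes $0$ to $\Delta_S$, and so does $x_j^{2}$, whose expectation equals $\mathbb{E}(x_j^{2})$ regardless of which block it lies in.

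Consequently only the monomials $2\sigma_y^{-2}\hat\beta_j yx_j$ and $-2\sigma_y^{-2}\hat\beta_k\hat\beta_j x_kx_j$ (for $k\neq j$) survive. For the first, the first expectation contributes $2\sigma_y^{-2}\hat\beta_j\mathbb{E}(yx_j)$ ($y$ and $x_j$ jointly drawn) while the second contributes $2\sigma_y^{-2}\hat\beta_j\mathbb{E}(y)\mathbb{E}(x_j)$ (now independent), for a net $2\sigma_y^{-2}\hat\beta_j\sigma_{y,x_j}$, independent of $S$. For the cross terms I would split on membership of $k$: when $k\in S$, $x_k$ stays in the averaging block while $x_j$ moves to the marginalising block, so the contribution goes from $\mathbb{E}(x_kx_j)$ to $\mathbb{E}(x_k)\mathbb{E}(x_j)$, giving $-2\sigma_y^{-2}\hat\beta_k\hat\beta_j\sigma_{x_k,x_j}$; when $k\in\overline{S}$ the two blocks exchange roles and the sign flips to $+2\sigma_y^{-2}\hat\beta_k\hat\beta_j\sigma_{x_k,x_j}$. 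Collecting the surviving pieces gives
\[
\Delta_S=\frac{2\hat\beta_j}{\sigma_y^{2}}\left(\sigma_{y,x_j}-\sum_{k\in S}\hat\beta_k\sigma_{x_k,x_j}+\sum_{k\in\overline{S}}\hat\beta_k\sigma_{x_k,x_j}\right).
\]

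Finally I would average $\Delta_S$ over coalitions with the Shapley weights $\omega_S$. The first piece yields $\frac{2\hat\beta_j}{\sigma_y^{2}}\sigma_{y,x_j}\sum_S\omega_S=\frac{2\hat\beta_j\sigma_{y,x_j}}{\sigma_y^{2}}$ by Lemma~\ref{sum_weight}. For the remaining piece I would invoke Lemma~\ref{big_summation} with $h(x_k)=-\hat\beta_k\sigma_{x_k,x_j}$ and $g(x_k)=\hat\beta_k\sigma_{x_k,x_j}$: since $h(x_k)+g(x_k)=0$ for every $k\neq j$, the identity gives $2\sum_S\omega_S\bigl(\sum_{k\in S}h(x_k)+\sum_{k\in\overline{S}}g(x_k)\bigr)=\sum_{k\neq j}\bigl(h(x_k)+g(x_k)\bigr)=0$, so the entire cross-covariance contribution cancels. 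This leaves $\phi_j=2\hat\beta_j\sigma_{y,x_j}/\sigma_y^{2}$, and it also makes transparent that the off-diagonal entries of $\Sigma$ drop out, generalising Equation~\ref{R2_theory}. I expect the main obstacle to be the bookkeeping in the monomial-by-monomial evaluation of $\Delta_S$ — keeping straight, for each term of Lemma~\ref{lemma_R2}, which variables become independent and which remain jointly distributed once $x_j$ is shuffled between the averaging and marginalising blocks — after which Lemmas~\ref{sum_weight} and~\ref{big_summation} collapse the weighted sum at once.
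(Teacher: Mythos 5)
Your proposal is correct and follows essentially the same route as the paper's own proof: it uses the monomial expansion of the individual $R^2$ contribution (Lemma \ref{lemma_R2}) to obtain exactly the per-coalition difference in Equation \ref{equation_R2}, then collapses the weighted sum via Lemma \ref{sum_weight} for the $\sigma_{y,x_j}$ term and Lemma \ref{big_summation} (with $h+g=0$) for the cross-covariance terms. The only difference is that you spell out the term-by-term bookkeeping behind Equation \ref{equation_R2}, which the paper states without detail.
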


\begin{proof} Consider a linear regression model $\hat{f}(\mathbf{x})= \sum_{j=1}^{q}\hat{\beta}_jx_{j}$ where we assume that the DGP	of the test sample $S_n=\{\mathbf{x}_i,y_i,\hat{f}(\mathbf{x}%
	_i)\}^n_{i=1}$ satisfies $	\mathbb{E}\left(\mathbf{x}\right) = \mu_q 
	\mbox{ and } \mathbb{V}(\mathbf{x}) = \Sigma$, a positive semi-definite matrix, with $\Sigma_{k,j} = \sigma_{x_k,x_j}$ the covariance between feature $x_k$ and $x_j$.
 
 From Lemma \ref{lemma_R2}, we can derive that, for a coalition $S \subseteq \mathcal{P}(\{\mathbf{x}\} \setminus\left\{x_{j}\right\})$, we have:
\begin{align} \label{equation_R2}
\mathbb{E}_{y,\mathbf{x}^S,x_j}\mathbb{E}_{\mathbf{x}^{\overline{S}}}\left(G(y;\mathbf{x};\delta_0)\right) - \mathbb{E}_{y,\mathbf{x}^S}\mathbb{E}_{\mathbf{x}^{\overline{S}},x_j}\left(G(y;\mathbf{x};\delta_0)\right) &= \sigma^{-2}_y\left[-2\sum_{\substack{k=1\\ k \in S}}^q\hat{\beta}_k\hat{\beta}_j\sigma_{x_k,x_j} + 2\sum_{\substack{k=1\\ k \in \overline{S}}}^q\hat{\beta}_k\hat{\beta}_j\sigma_{x_k,x_j}\right]  \nonumber \\
&+ \sigma^{-2}_y\left[2\hat{\beta}_j\sigma_{y,x_j}\right],
\end{align}
with $\sigma_y^2$ the variance of the target variable and $\sigma_{y,x_j}$ its covariance with feature $x_j$.
Thus, according to Definition \ref{Equation_weight} and Equation \eqref{equation_R2}, the XPER contribution $\phi_{j}$ to the $R^2$ is equal to:
\begin{align}
\phi_{j} &= \sum_{S \subseteq \mathcal{P}(\{\mathbf{x}\} \setminus
				\left\{x_{j}\right\})}^{}\omega_S\left(\mathbb{E}_{y,\mathbf{x}^S,x_j}\mathbb{E}_{\mathbf{x}^{\overline{S}}}\left(G(y;\mathbf{x};\delta_0)\right) - \mathbb{E}_{y,\mathbf{x}^S}\mathbb{E}_{\mathbf{x}^{\overline{S}},x_j}\left(G(y;\mathbf{x};\delta_0)\right)\right) \nonumber \\
    &= \sum_{S \subseteq \mathcal{P}(\{\mathbf{x}\} \setminus
				\left\{x_{j}\right\})}^{}\omega_S\left(\sigma^{-2}_y\left[-2\sum_{\substack{k=1\\ k \in S}}^q\hat{\beta}_k\hat{\beta}_j\sigma_{x_k,x_j} + 2\sum_{\substack{k=1\\ k \in \overline{S}}}^q\hat{\beta}_k\hat{\beta}_j\sigma_{x_k,x_j}\right]\right) \nonumber \\
&+ \sum_{S \subseteq \mathcal{P}(\{\mathbf{x}\} \setminus
				\left\{x_{j}\right\})}^{}\omega_S\sigma^{-2}_y\left[2\hat{\beta}_j\sigma_{y,x_j}\right]. 
\end{align}
As according to Lemma \ref{sum_weight}, we know that $\sum_{S \subseteq \mathcal{P}(\{\mathbf{x}\} \setminus
				\left\{x_{j}\right\})}^{}\omega_S = 1$,  we obtain:
\begin{align}
\phi_{j} &= \sum_{S \subseteq \mathcal{P}(\{\mathbf{x}\} \setminus
				\left\{x_{j}\right\})}^{}\omega_S\left(\sigma^{-2}_y\left[-2\sum_{\substack{k=1\\ k \in S}}^q\hat{\beta}_k\hat{\beta}_j\sigma_{x_k,x_j} + 2\sum_{\substack{k=1\\ k \in \overline{S}}}^q\hat{\beta}_k\hat{\beta}_j\sigma_{x_k,x_j}\right]\right) \nonumber \\
&+ \sigma^{-2}_y\left[2\hat{\beta}_j\sigma_{y,x_j}\right]. \label{phi_j_inter}
\end{align}
\noindent According to Lemma \ref{big_summation}, for $h(x_k)=-\hat{\beta}_k\hat{\beta}_j\sigma_{x_k,x_j}$ and $g(x_k)=\hat{\beta}_k\hat{\beta}_j\sigma_{x_k,x_j}$ we obtain:
\begin{align} \label{lemma_2_final}
    2\sum_{S \subseteq \mathcal{P}(\{\mathbf{x}\} \setminus
				\left\{x_{j}\right\})}^{}\omega_{S}\left(\sum_{\substack{k = 1\\ k \in S}}^q-\hat{\beta}_k\hat{\beta}_j\sigma_{x_k,x_j} + \sum_{\substack{k=1\\ k \in \overline{S}}}^q\hat{\beta}_k\hat{\beta}_j\sigma_{x_k,x_j}\right) = \sum_{\substack{k=1\\ k \neq j}}\left(-\hat{\beta}_k\hat{\beta}_j\sigma_{x_k,x_j} + \hat{\beta}_k\hat{\beta}_j\sigma_{x_k,x_j}\right) = 0
\end{align}
Therefore, from Equations \eqref{phi_j_inter} and \eqref{lemma_2_final}, we deduce that the XPER contribution $\phi_{j}$ of feature $x_j$ to the $R^2$ is:
\begin{align*}
    \phi _{j} = \frac{2\hat{\beta}_j\sigma_{y,x_j}}{\sigma^{2}_y}.
\end{align*}

\end{proof}
    \vspace{-1cm}
    \subsection{Proof of Equation \eqref{R2_ind_theory}}
    \label{proof_R2_ind_theory}
    \begin{lemma} \label{weight_lemma}
    The weight associated with a coalition $S$ built from a set of features of size $q-1$ is equal to the weight of the coalition $\Tilde{S}$, where $|\Tilde{S}| = q-1- |S|$, i.e., $\omega_{S} = \omega_{\Tilde{S}}$.
\end{lemma}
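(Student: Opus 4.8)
The plan is to observe that the coalition weight $\omega_S$ given in Equation \ref{Equation_weight} depends on the coalition $S$ only through its cardinality $|S|$, and that the resulting expression is invariant under the substitution $|S| \mapsto q-1-|S|$. Concretely, I would first recall that for any coalition $S \subseteq \mathcal{P}(\{\mathbf{x}\}\setminus\{x_j\})$ built from the $q-1$ remaining features, the weight is
\[
\omega_{S} = \frac{|S|!\,(q-|S|-1)!}{q!}.
\]
Writing $s = |S|$, the coalition $\tilde S$ with $|\tilde S| = q-1-s$ therefore receives weight
\[
\omega_{\tilde S} = \frac{|\tilde S|!\,(q-|\tilde S|-1)!}{q!} = \frac{(q-1-s)!\,\bigl(q-(q-1-s)-1\bigr)!}{q!}.
\]

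The one computational point to verify carefully is the simplification of the second factorial's argument: $q-(q-1-s)-1 = s$, so that
\[
\omega_{\tilde S} = \frac{(q-1-s)!\,s!}{q!} = \frac{s!\,(q-1-s)!}{q!} = \frac{|S|!\,(q-|S|-1)!}{q!} = \omega_{S},
\]
which is the claim. I would also note in passing that this requires $0 \le s \le q-1$, which holds since $S$ is a subset of a $(q-1)$-element set, so both factorials are well defined.

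There is no real obstacle here; the lemma is an immediate symmetry of the binomial-type weight, and the only thing that warrants attention is matching the factorial arguments correctly (in particular that $q - |\tilde S| - 1 = |S|$ rather than, say, $|S|-1$ or $|S|+1$). If one prefers, the same fact can be phrased via binomial coefficients: $\omega_S = \bigl(q\binom{q-1}{|S|}\bigr)^{-1}$ and $\binom{q-1}{|S|} = \binom{q-1}{q-1-|S|} = \binom{q-1}{|\tilde S|}$, which makes the symmetry transparent and is the form used in Lemma \ref{sum_weight} and Lemma \ref{big_summation}.
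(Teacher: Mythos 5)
Your proof is correct and follows essentially the same route as the paper: both verify that the weight in Equation \ref{Equation_weight} depends only on $|S|$ and is invariant under $|S| \mapsto q-1-|S|$, the paper writing it via the symmetry of $C^{|S|}_{q-1}$ and you via the factorials directly (with the binomial phrasing noted as an aside). No gaps.
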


\begin{proof} According to Equation \eqref{Equation_weight}, $\omega_{S}$ is defined as:
\begin{align*}
    \omega_{S} = \frac{1}{q \times C^{|S|}_{q-1}} = \frac{1}{q \times \frac{(q-1)!}{|S|! (q-1-|S|)!}}.
\end{align*}
Similarly, as $|\Tilde{S}| = q - 1 - |S|$, $\omega_{\Tilde{S}}$ is expressed as:
\begin{align*}
    \omega_{\Tilde{S}}  = \frac{1}{q \times C^{|\Tilde{S}|}_{q-1}} =  \frac{1}{q \times C^{q - 1 - |S|}_{q-1}} = \frac{1}{q \times \frac{(q-1)!}{(q - 1 - |S|)! (q-1-(q - 1 - |S|))!}} = \frac{1}{q \times \frac{(q-1)!}{|S|! (q-1-|S|)!}} = \omega_{S}.
\end{align*}
\end{proof}
\vspace{-1cm}
\begin{proposition}
    Consider a linear regression model $\hat{f}(\mathbf{x})= \sum_{j=1}^{q}\hat{\beta}_jx_{j}$ where we assume that the DGP
	of the test sample $S_n=\{\mathbf{x}_i,y_i,\hat{f}(\mathbf{x}%
	_i)\}^n_{i=1}$ satisfies $	\mathbb{E}\left(\mathbf{x}\right) = \mu_q 
	\mbox{ and } \mathbb{V}(\mathbf{x}) = \Sigma$, a positive semi-definite matrix, with $\Sigma_{k,j} = \sigma_{x_k,x_j}$ the covariance between feature $x_k$ and $x_j$. The individual XPER contribution $\phi_{i,j}$ to the $R^2$ is:
\begin{equation*}  
    \phi_{i,j} = \sigma^{-2}_y\left[\hat{\beta}_j(x_{i,j} - \mathbb{E}(x_{j}))A -\hat{\beta}_j^2(x_{i,j}^2 - \mathbb{E}(x_{j}^2))  + \sum_{\substack{k=1\\ k \neq j}}^q\hat{\beta}_k\hat{\beta}_j\sigma_{x_k,x_j}   \right].
\end{equation*}
with $A=\left( 2y_i -\sum_{\substack{k=1\\ k \neq j}}^q\hat{\beta}_k(x_{i,k} +\mathbb{E}(x_{k})) \right)$, $\sigma_y^2$ the variance of the target variable, and $\sigma_{x_k,x_j}$ the covariance between the feature $x_k$ and $x_j$.
\end{proposition}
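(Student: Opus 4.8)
The plan is to follow the proof of Proposition~\ref{R2_prop}, but transposed to the individual level: we hold $(y_i,\mathbf{x}_i)$ fixed, compute for an arbitrary coalition $S\subseteq\mathcal{P}(\{\mathbf{x}\}\setminus\{x_j\})$ the bracketed difference $D_S$ appearing in Definition~\ref{def_ind_XPER}, and then carry out the $\omega_S$-weighted sum over coalitions by means of Lemma~\ref{sum_weight} and Lemma~\ref{big_summation}. Recall that for the $R^2$ of a linear model the individual contribution is $G(y;\mathbf{x};\delta_0)=1-\sigma_y^{-2}\big(y-\sum_{k=1}^q\hat{\beta}_k x_k\big)^2$, so that $\phi_{i,j}=\sum_{S}\omega_S\,D_S$ with $D_S=\mathbb{E}_{\mathbf{x}^{\overline{S}}}\big(G(y_i;x_{i,j},\mathbf{x}_i^S,\mathbf{x}^{\overline{S}};\delta_0)\big)-\mathbb{E}_{x_j,\mathbf{x}^{\overline{S}}}\big(G(y_i;x_j,\mathbf{x}_i^S,\mathbf{x}^{\overline{S}};\delta_0)\big)$.

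First I would evaluate $D_S$. Starting from the expansion of $G(y;\mathbf{x})$ in Lemma~\ref{lemma_R2}, which already sorts the monomials according to whether their indices lie in $S$, in $\overline{S}$, or equal $j$, I would note that the two expectations defining $D_S$ differ only in the treatment of $x_j$; hence every monomial free of $x_j$ contributes identically to both and cancels, leaving only the $x_j$-part $-\sigma_y^{-2}\big[\hat{\beta}_j^2 x_j^2+2\hat{\beta}_j\sum_{k\in S}\hat{\beta}_k x_{i,k}x_j+2\hat{\beta}_j\sum_{k\in\overline{S}}\hat{\beta}_k x_k x_j-2\hat{\beta}_j y_i x_j\big]$. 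In the first expectation I set $x_j=x_{i,j}$ and integrate only $\mathbf{x}^{\overline{S}}$; in the second I integrate $(x_j,\mathbf{x}^{\overline{S}})$ jointly, which turns $x_k x_j$ into $\mathbb{E}(x_k x_j)=\sigma_{x_k,x_j}+\mathbb{E}(x_k)\mathbb{E}(x_j)$ for $k\in\overline{S}$ and turns $x_j$, $x_j^2$ into $\mathbb{E}(x_j)$, $\mathbb{E}(x_j^2)$. Subtracting and regrouping yields
\begin{equation*}
D_S=\sigma_y^{-2}\left[\,2\hat{\beta}_j\big(x_{i,j}-\mathbb{E}(x_j)\big)\left(y_i-\sum_{k\in S}\hat{\beta}_k x_{i,k}-\sum_{k\in\overline{S}}\hat{\beta}_k\mathbb{E}(x_k)\right)-\hat{\beta}_j^2\big(x_{i,j}^2-\mathbb{E}(x_j^2)\big)+2\hat{\beta}_j\sum_{k\in\overline{S}}\hat{\beta}_k\sigma_{x_k,x_j}\,\right].
\end{equation*}

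Next I would form $\phi_{i,j}=\sum_S\omega_S D_S$ and dispatch the three groups of terms. The pieces $-\hat{\beta}_j^2(x_{i,j}^2-\mathbb{E}(x_j^2))$ and $2\hat{\beta}_j(x_{i,j}-\mathbb{E}(x_j))y_i$ do not depend on $S$, so Lemma~\ref{sum_weight} ($\sum_S\omega_S=1$) passes them through unchanged. The piece $2\hat{\beta}_j(x_{i,j}-\mathbb{E}(x_j))\big(-\sum_{k\in S}\hat{\beta}_k x_{i,k}-\sum_{k\in\overline{S}}\hat{\beta}_k\mathbb{E}(x_k)\big)$ has exactly the $\sum_{k\in S}h(x_k)+\sum_{k\in\overline{S}}g(x_k)$ form covered by Lemma~\ref{big_summation}; applying it with $h(x_k)=-\hat{\beta}_k x_{i,k}$ and $g(x_k)=-\hat{\beta}_k\mathbb{E}(x_k)$ collapses its $\omega_S$-weighted sum to $\hat{\beta}_j(x_{i,j}-\mathbb{E}(x_j))\big(-\sum_{k\ne j}\hat{\beta}_k(x_{i,k}+\mathbb{E}(x_k))\big)$, which merges with the $y_i$ piece into $\hat{\beta}_j(x_{i,j}-\mathbb{E}(x_j))A$. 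Finally the covariance piece $2\hat{\beta}_j\sum_{k\in\overline{S}}\hat{\beta}_k\sigma_{x_k,x_j}$, again of the form of Lemma~\ref{big_summation} with $h\equiv 0$ and $g(x_k)=\hat{\beta}_k\sigma_{x_k,x_j}$, sums to $\hat{\beta}_j\sum_{k\ne j}\hat{\beta}_k\sigma_{x_k,x_j}$. Adding the three contributions and factoring out $\sigma_y^{-2}$ reproduces the claimed expression; as a sanity check, taking $\mathbb{E}_{y,\mathbf{x}}$ of it (so that $x_{i,j}\to x_j$, $y_i\to y$ become random) kills the $\hat{\beta}_j^2$ term and the covariance contribution coming from $A$ cancels the explicit $\sum_{k\ne j}\hat{\beta}_k\hat{\beta}_j\sigma_{x_k,x_j}$, leaving $\phi_j=2\hat{\beta}_j\sigma_{y,x_j}/\sigma_y^{2}$ in agreement with Proposition~\ref{R2_prop}.

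The main obstacle is the bookkeeping in the computation of $D_S$: one must carefully distinguish the deterministic substitution $x_j=x_{i,j}$ in the first expectation from the genuine integration over $x_j$ in the second — which is precisely what produces the covariance terms $\sigma_{x_k,x_j}$, absent from the first expectation because $x_{i,j}$ is a fixed number — and one must keep the factor $2$ attached to the $\sum_{k\in S}\,\cdot\;+\;\sum_{k\in\overline{S}}\,\cdot$ structure so that Lemma~\ref{big_summation} applies verbatim. Once $D_S$ is in the displayed form, the rest is a mechanical invocation of the two combinatorial lemmas already proven.
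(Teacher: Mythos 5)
Your proposal is correct and follows essentially the same route as the paper: you compute the same coalition-level difference (the paper's Equation \ref{R2_ind_1}, obtained there via Lemma \ref{lemma_R2}), then collapse the $\omega_S$-weighted sum using Lemma \ref{sum_weight} for the $S$-free terms and Lemma \ref{big_summation} with the same (up to sign placement) choices of $h$ and $g$. Your explicit cancellation of the $x_j$-free monomials and the closing consistency check against Proposition \ref{R2_prop} are only presentational refinements of the paper's argument.
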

\begin{proof} Consider a linear regression model $\hat{f}(\mathbf{x})= \sum_{j=1}^{q}\hat{\beta}_jx_{j}$ where we assume that the DGP
	of the test sample $S_n=\{\mathbf{x}_i,y_i,\hat{f}(\mathbf{x}%
	_i)\}^n_{i=1}$ satisfies $	\mathbb{E}\left(\mathbf{x}\right) = \mu_q 
	\mbox{ and } \mathbb{V}(\mathbf{x}) = \Sigma$, a positive semi-definite matrix, with $\Sigma_{k,j} = \sigma_{x_k,x_j}$ the covariance between feature $x_k$ and $x_j$.
 
    From Lemma \ref{lemma_R2}, we can derive that for a coalition $S \subseteq \mathcal{P}(\{\mathbf{x}\} \setminus\left\{x_{j}\right\})$ and for an individual $i$, we have: %\vspace{-0.75cm}
\begin{align}
\mathbf{E}_{\mathbf{x}^{\overline{S}}}\left(G(y_i;\mathbf{x}_i;\delta_0)\right) - \mathbf{E}_{\mathbf{x}^{\overline{S}},x_j}\left(G(y_i;\mathbf{x}_i;\delta_0)\right) &= \sigma^{-2}_y\left[2y_i\hat{\beta}_j(x_{i,j}- \mathbb{E}(x_j)) -\hat{\beta}_j^2(x_{i,j}^2 - \mathbb{E}(x_j^2)) \right]\nonumber \\ 
&+ \sigma^{-2}_y\left[2\sum_{\substack{k=1\\k \in \overline{S}}}\hat{\beta}_k\hat{\beta}_j\sigma_{x_k,x_j} \right] \nonumber \\ 
    &- \sigma^{-2}_y\left[ 2\hat{\beta}_j(x_{i,j}- \mathbb{E}(x_j))\left(\sum_{\substack{k=1\\k \in S}}\hat{\beta}_kx_{i,k} + \sum_{\substack{k=1\\k \in \overline{S}}}\hat{\beta}_k\mathbb{E}(x_k)\right)  \right],
    \label{R2_ind_1}
\end{align}
with $\sigma_y^2$ the variance of the target variable and $\sigma_{x_k,x_j}$ the covariance between the feature $x_k$ and $x_j$.
Thus, according to Definition \ref{def_ind_XPER} and Equation \eqref{R2_ind_1}, the XPER contribution $\phi_{i,j}$ to the $R^2$ is equal to:
\begin{align}
\phi_{i,j} &= \sum_{S \subseteq \mathcal{P}(\{\mathbf{x}\} \setminus
				\left\{x_{j}\right\})}^{}\omega_S\left(\mathbf{E}_{\mathbf{x}^{\overline{S}}}\left(G(y_i;\mathbf{x}_i;\delta_0)\right) - \mathbf{E}_{\mathbf{x}^{\overline{S}},x_j}\left(G(y_i;\mathbf{x}_i;\delta_0)\right)\right) \nonumber \\
    &= \sum_{S \subseteq \mathcal{P}(\{\mathbf{x}\} \setminus
				\left\{x_{j}\right\})}^{}\omega_S\sigma^{-2}_y\left[2y_i\hat{\beta}_j(x_{i,j}- \mathbb{E}(x_j)) -\hat{\beta}_j^2(x_{i,j}^2 - \mathbb{E}(x_{j}^2)) + 2\sum_{\substack{k=1\\k \in \overline{S}}}\hat{\beta}_k\hat{\beta}_j\sigma_{x_k,x_j} \right] \nonumber \\ 
    &- \sum_{S \subseteq \mathcal{P}(\{\mathbf{x}\} \setminus
				\left\{x_{j}\right\})}^{}\omega_S\sigma^{-2}_y\left[2\hat{\beta}_j(x_{i,j}- \mathbb{E}(x_j))\left(\sum_{\substack{k=1\\k \in S}}\hat{\beta}_kx_{i,k} + \sum_{\substack{k=1\\k \in \overline{S}}}\hat{\beta}_k\mathbb{E}(x_k)\right)  \right]. 
\end{align}
As according to Lemma \ref{sum_weight}, we know that $\sum_{S \subseteq \mathcal{P}(\{\mathbf{x}\} \setminus
				\left\{x_{j}\right\})}^{}\omega_S = 1$,  we obtain:
\begin{align}
\phi_{i,j} &=  \sigma^{-2}_y\left[2y_i\hat{\beta}_j(x_{i,j}- \mathbb{E}(x_j)) -\hat{\beta}_j^2(x_{i,j}^2 - \mathbb{E}(x_j^2)) + 2\sum_{S \subseteq \mathcal{P}(\{\mathbf{x}\} \setminus
				\left\{x_{j}\right\})}^{}\omega_S\sum_{\substack{k=1\\k \in \overline{S}}}\hat{\beta}_k\hat{\beta}_j\sigma_{x_k,x_j} \right] \nonumber \\ 
    &- \sigma^{-2}_y\left[\hat{\beta}_j(x_{i,j}- \mathbb{E}(x_j))\times 2\sum_{S \subseteq \mathcal{P}(\{\mathbf{x}\} \setminus
				\left\{x_{j}\right\})}^{}\omega_S\left(\sum_{\substack{k=1\\k \in S}}\hat{\beta}_kx_{i,k} + \sum_{\substack{k=1\\k \in \overline{S}}}\hat{\beta}_k\mathbb{E}(x_k)\right)  \right].  \label{r2_ind_finall}
\end{align} 
\noindent According to Lemma \ref{big_summation}, for $h(x_k)=0$ and $g(x_k)=\hat{\beta}_k\hat{\beta}_j\sigma_{x_k,x_j}$ we find that:
\begin{align}  \label{a}
    2\sum_{S \subseteq \mathcal{P}(\{\mathbf{x}\} \setminus
				\left\{x_{j}\right\})}^{}\omega_S\sum_{\substack{k=1\\k \in \overline{S}}}\hat{\beta}_k\hat{\beta}_j\sigma_{x_k,x_j} = \sum_{\substack{k=1\\ k \neq j}}^q\hat{\beta}_k\hat{\beta}_j\sigma_{x_k,x_j}
\end{align}
Similarly, for $h(x_k)=\hat{\beta}_kx_{i,k}$ and $g(x_k)=\hat{\beta}_k\mathbb{E}(x_k)$:
\begin{align} \label{b}
    2\sum_{S \subseteq \mathcal{P}(\{\mathbf{x}\} \setminus
				\left\{x_{j}\right\})}^{}\omega_S\left(\sum_{\substack{k=1\\k \in S}}\hat{\beta}_kx_{i,k} + \sum_{\substack{k=1\\k \in \overline{S}}}\hat{\beta}_k\mathbb{E}(x_{i,k})\right) = \sum_{\substack{k=1\\ k \neq j}}^q\hat{\beta}_k(x_{i,k} +\mathbb{E}(x_k))  
\end{align}
From Equations \eqref{r2_ind_finall}, \eqref{a}, and \eqref{b}, we obtain:
\begin{align}
\phi_{i,j} &=  \sigma^{-2}_y\left[2y_i\hat{\beta}_j(x_{i,j}- \mathbb{E}(x_j)) -\hat{\beta}_j^2(x_{i,j}^2 - \mathbb{E}(x_j^2)) + \sum_{\substack{k=1\\ k \neq j}}^q\hat{\beta}_k\hat{\beta}_j\sigma_{x_k,x_j} \right] \nonumber \\ 
    &- \sigma^{-2}_y\left[\hat{\beta}_j(x_{i,j}- \mathbb{E}(x_j))\sum_{\substack{k=1\\ k \neq j}}^q\hat{\beta}_k(x_{i,k} +\mathbb{E}(x_k))    \right].  
\end{align} 
Finally, after rearranging the terms, we obtain:
\begin{equation*}  
    \phi_{i,j} = \sigma^{-2}_y\left[\hat{\beta}_j(x_{i,j} - \mathbb{E}(x_{j}))A -\hat{\beta}_j^2(x_{i,j}^2 - \mathbb{E}(x_{j}^2))  + \sum_{\substack{k=1\\ k \neq j}}^q\hat{\beta}_k\hat{\beta}_j\sigma_{x_k,x_j}   \right].
\end{equation*}
with $A=\left( 2y_i -\sum_{\substack{k=1\\ k \neq j}}^q\hat{\beta}_k(x_{i,k} +\mathbb{E}(x_{k})) \right)$, $\sigma_y^2$ the variance of the target variable, and $\sigma_{x_k,x_j}$ the covariance between the feature $x_k$ and $x_j$.
\end{proof}
\vspace{-0.5cm}
\subsection{Proof of the MSE example in Table \ref{all_perfs}}
\label{proof_MSE_example}
\begin{proposition}
    Consider a linear regression model $\hat{f}(\mathbf{x})= \sum_{j=1}^{q}\hat{\beta}_jx_{j}$, where $\mathbb{E}\left(\mathbf{x}\right) = 0_q
\mbox{ and } \mathbb{V}(\mathbf{x}) = diag(\sigma^2_{x_j}), \forall j=1,...,q$, and $\mathbb{E}(y)=0$. The contributions $\phi_{j}$ of features $x_j$ to the (opposite of the) MSE satisfy the efficiency axiom such that:
	\begin{equation}
		%\label{MSE_example}
		\underbrace{2\sum_{j=1}^{q}\hat{\beta}_j\sigma_{y,x_j} - \sum_{j=1}^{q}\hat{\beta}_j^2\sigma^2_{x_j} - \sigma_y^2}_{\mathbb{E}_{y,\mathbf{x}}(G(y;\mathbf{x};\delta_0))} = \underbrace{- \sum_{j=1}^{q}\hat{\beta}_j^2\sigma^2_{x_j} - \sigma_y^2}_{\phi_0} + \sum_{j=1}^{q}\underbrace{2\hat{\beta}_j\sigma_{y,x_j}}_{\phi_j}.
	\end{equation}
\end{proposition}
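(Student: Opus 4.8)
The plan is to verify separately the three quantities in the efficiency identity — the population performance metric, the benchmark $\phi_0$, and each XPER value $\phi_j$ — by direct computation, and then simply add them. Since Assumption 1 requires the metric to increase with predictive performance, I work with the opposite of the MSE, so the individual contribution is $G(y;\mathbf{x};\delta_0) = -(y-\mathbf{x}\hat\beta)^2$ with $\delta_0 = \emptyset$ (cf.\ Table \ref{all_perfs}). First I would expand the square:
\begin{equation*}
G(y;\mathbf{x};\delta_0) = -y^2 - \sum_{j=1}^q \hat\beta_j^2 x_j^2 - 2\sum_{1\le j<k\le q}\hat\beta_j\hat\beta_k x_jx_k + 2\sum_{j=1}^q \hat\beta_j\, y x_j .
\end{equation*}

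\emph{Population metric and benchmark.} Taking $\mathbb{E}_{y,\mathbf{x}}$ and using $\mathbb{E}(y)=0$, $\mathbb{E}(\mathbf{x})=0_q$ and $\mathbb{V}(\mathbf{x})=\mathrm{diag}(\sigma^2_{x_j})$ — so that $\mathbb{E}(x_j^2)=\sigma^2_{x_j}$, $\mathbb{E}(x_jx_k)=0$ for $j\ne k$, $\mathbb{E}(y^2)=\sigma_y^2$ and $\mathbb{E}(yx_j)=\sigma_{y,x_j}$ — gives $\mathbb{E}_{y,\mathbf{x}}(G(y;\mathbf{x};\delta_0)) = -\sigma_y^2 - \sum_j \hat\beta_j^2\sigma^2_{x_j} + 2\sum_j \hat\beta_j\sigma_{y,x_j}$, the left-hand side of the claimed identity. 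For the benchmark $\phi_0 = \mathbb{E}_y\mathbb{E}_{\mathbf{x}}(G(y;\mathbf{x};\delta_0))$, the same computation applies except that $y$ and $\mathbf{x}$ are now independent, so $\mathbb{E}(yx_j)=\mathbb{E}(y)\mathbb{E}(x_j)=0$; the cross term with $y$ drops out and $\phi_0 = -\sigma_y^2 - \sum_j\hat\beta_j^2\sigma^2_{x_j}$, matching the stated benchmark (and consistent with Axiom \ref{efficiency}).

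\emph{XPER values.} For a coalition $S\subseteq\mathcal{P}(\{\mathbf{x}\}\setminus\{x_j\})$, the marginal contribution in Definition \ref{XPER_definition} is $\mathbb{E}_{y,x_j,\mathbf{x}^S}\mathbb{E}_{\mathbf{x}^{\overline S}}(G) - \mathbb{E}_{y,\mathbf{x}^S}\mathbb{E}_{x_j,\mathbf{x}^{\overline S}}(G)$. The key observation is that the only monomial whose value changes when $x_j$ is moved from the averaging block to the marginalisation block is $2\hat\beta_j\, yx_j$: in the first case it contributes $2\hat\beta_j\mathbb{E}(yx_j)=2\hat\beta_j\sigma_{y,x_j}$, in the second case $2\hat\beta_j\, y\,\mathbb{E}(x_j)=0$. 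Every other term either does not involve $x_j$, or involves $x_j$ only through $\hat\beta_j^2 x_j^2$ (whose expectation is $\hat\beta_j^2\sigma^2_{x_j}$ regardless of which block contains $x_j$) or through a cross product $x_jx_k$ with $k\ne j$ (which vanishes in both cases, since either $\mathbb{E}(x_jx_k)=0$ by the diagonal covariance or $\mathbb{E}(x_j)=0$ factors out). Hence the marginal contribution equals $2\hat\beta_j\sigma_{y,x_j}$ for \emph{every} coalition $S$, and since $\sum_S\omega_S=1$ by Lemma \ref{sum_weight}, $\phi_j = 2\hat\beta_j\sigma_{y,x_j}$. (Alternatively this follows from Proposition \ref{R2_prop} together with linearity of $\phi_j$ in $G$, writing the opposite of the MSE as $\sigma_y^2(R^2-1)$ with $\sigma_y^2$ a constant.) Adding $\phi_0 + \sum_{j=1}^q\phi_j$ then reproduces $\mathbb{E}_{y,\mathbf{x}}(G(y;\mathbf{x};\delta_0))$, which is the efficiency identity.

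\emph{Main obstacle.} There is no real difficulty here; the only step requiring care is the bookkeeping in the XPER-value computation, namely verifying that all monomials other than $2\hat\beta_j yx_j$ give identical contributions whether or not $x_j$ is in the coalition. This is precisely where the zero-mean and diagonal-covariance assumptions are used, and it is what collapses the coalition-weighted average to a single term.
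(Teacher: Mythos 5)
Your proposal is correct. The computations of the population metric $\mathbb{E}_{y,\mathbf{x}}(G)$ and of the benchmark $\phi_0$ are exactly those in the paper's proof (expand the square, use $\mathbb{E}(y)=0$, $\mathbb{E}(\mathbf{x})=0_q$ and the diagonal covariance, and for $\phi_0$ replace $\mathbb{E}(yx_j)$ by $\mathbb{E}(y)\mathbb{E}(x_j)=0$). Where you genuinely diverge is the derivation of $\phi_j$. The paper does not recompute coalition-wise marginal contributions for the MSE: it observes that the individual $R^2$ contribution is an affine transformation of the MSE contribution, $G^{R^2}=1+\sigma_y^{-2}G^{\mathrm{MSE}}$, so the constant drops out of every marginal contribution and the scale factor passes through the Shapley sum, giving $\phi_j^{R^2}=\phi_j^{\mathrm{MSE}}/\sigma_y^2$; it then imports $\phi_j^{R^2}=2\hat{\beta}_j\sigma_{y,x_j}/\sigma_y^2$ from Proposition \ref{R2_prop}, whose proof rests on the combinatorial Lemma \ref{big_summation} and holds under a general covariance matrix $\Sigma$. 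You instead compute each marginal contribution directly under the zero-mean, diagonal-covariance assumptions, verify term by term that only the monomial $2\hat{\beta}_j y x_j$ changes when $x_j$ moves between the averaging and marginalisation blocks, conclude that every coalition contributes the same constant $2\hat{\beta}_j\sigma_{y,x_j}$, and invoke only Lemma \ref{sum_weight} ($\sum_S\omega_S=1$). Your bookkeeping is sound: the $y^2$ and $x_k^2$ terms have identical expectations in both blocks, cross terms $x_kx_l$ with $k,l\neq j$ are unaffected by the position of $x_j$, and cross terms involving $x_j$ vanish either through $\mathbb{E}(x_jx_k)=0$ or through $\mathbb{E}(x_j)=0$. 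The trade-off is clear: your argument is more elementary and self-contained (it bypasses Lemma \ref{big_summation} entirely) but exploits the special DGP assumptions of this proposition, whereas the paper's reduction to the $R^2$ case buys generality, since the underlying $R^2$ result is established for correlated features; your parenthetical alternative via $\phi_j(\sigma_y^2(R^2-1))$ is essentially the paper's route.
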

\begin{proof} Consider a linear regression model $\hat{f}(\mathbf{x})= \sum_{j=1}^{q}\hat{\beta}_jx_{j}$, where $\mathbb{E}\left(\mathbf{x}\right) = 0_q
\mbox{ and } \mathbb{V}(\mathbf{x}) = diag(\sigma^2_{x_j}), \forall j=1,...,q$.
As shown in Table \ref{all_perfs}, the individual contribution to the (opposite) MSE is defined as $G(y;\mathbf{x};\delta_0)) = -(y - \mathbf{x}\hat{\beta})^2$, where $\mathbf{x}\hat{\beta} = \hat{f}(\mathbf{x})$. Similarly, $G(y;\mathbf{x};\delta_0))$ can be expressed as:
\begin{align}
    G(y;\mathbf{x};\delta_0)) = - \left[y^2 + \sum_{j=1}^qx_j^2\hat{\beta}_j^2 + 2\sum_{1 \le j < l \le q}\hat{\beta}_j\hat{\beta}_lx_jx_l - 2\sum_{j=1}^qyx_j\hat{\beta}_j \right]. \label{MSE_proof}
\end{align}

\noindent To complete the proof, we first start by proving that the (opposite) MSE can be written as:
\begin{equation}
    \mathbb{E}_{y,\mathbf{x}}(G(y;\mathbf{x};\delta_0)) = 2\sum_{j=1}^{q}\hat{\beta}_j\sigma_{y,x_j} - \sum_{j=1}^{q}\hat{\beta}_j^2\sigma^2_{x_j} - \sigma_y^2.
\end{equation}

\noindent Indeed, by taking the expected value of Equation \eqref{MSE_proof} with respect to the joint distribution of the target variable and the features, we obtain:
\begin{align}
    \mathbb{E}_{y,\mathbf{x}}(G(y;\mathbf{x};\delta_0)) &= -\mathbb{E}(y^2) - \sum_{j=1}^q\hat{\beta}_j^2\mathbb{E}(x_j^2) - 2\sum_{1 \le j < l \le q}\hat{\beta}_j\hat{\beta}_l\mathbb{E}(x_jx_l) + 2\sum_{j=1}^q\hat{\beta}_j\mathbb{E}(yx_j).
    \end{align}
As $\mathbb{E}\left(\mathbf{x}\right) = 0_q 
\mbox{ and } \mathbb{V}(\mathbf{x}) = diag(\sigma^2_{x_j}), \forall j=1,...,q$ we have:
\begin{align}
    \mathbb{E}_{y,\mathbf{x}}(G(y;\mathbf{x};\delta_0)) &= -\mathbb{E}(y^2) - \sum_{j=1}^q\hat{\beta}_j^2\sigma^2_{x_j} + 2\sum_{j=1}^q\hat{\beta}_j\sigma_{y,x_j}. \nonumber 
\end{align}
As we assume that $\mathbb{E}(y) = 0$, we have:
\begin{align} \label{PM_MSE}
    \mathbb{E}_{y,\mathbf{x}}(G(y;\mathbf{x};\delta_0)) &= -\sigma^2_y- \sum_{j=1}^q\hat{\beta}_j^2\sigma^2_{x_j} + 2\sum_{j=1}^q\hat{\beta}_j\sigma_{y,x_j}.
\end{align}

\noindent Second, we prove that the benchmark value of the (opposite) MSE can be written as:
\begin{equation}
    \phi_0 = \mathbb{E}_{y}\mathbb{E}_{\mathbf{x}}(G(y;\mathbf{x};\delta_0)) = -\sigma^2_y - \sum_{j=1}^q\hat{\beta}_j^2\sigma^2_{x_j}.
\end{equation}

\noindent From Equation \eqref{MSE_proof}, by taking the expected value with respect to the target variable and the expected value with respect to the joint distribution of the features, we obtain:
\begin{align}
    \phi_0 = \mathbb{E}_{y}\mathbb{E}_{\mathbf{x}}(G(y;\mathbf{x};\delta_0)) &= -\mathbb{E}(y^2) - \sum_{j=1}^q\hat{\beta}_j^2\mathbb{E}(x_j^2) - 2\sum_{1 \le j < l \le q}\hat{\beta}_j\hat{\beta}_l\mathbb{E}(x_jx_l) + 2\sum_{j=1}^q\hat{\beta}_k\mathbb{E}(y)\mathbb{E}(x_j). 
\end{align}
As $\mathbb{E}\left(\mathbf{x}\right) = 0_q 
\mbox{ and } \mathbb{V}(\mathbf{x}) = diag(\sigma^2_{x_j}), \forall j=1,...,q,$ we have:
\begin{align} \label{phi_0_MSE}
    \phi_0 = \mathbb{E}_{y}\mathbb{E}_{\mathbf{x}}(G(y;\mathbf{x};\delta_0)) &= -\sigma^2_y - \sum_{j=1}^q\hat{\beta}_j^2\sigma^2_{x_j}.
\end{align}

\medskip
\noindent Third, we show that the XPER value associated with the feature $x_j$ for the (opposite) MSE can be expressed as:
\begin{equation}
    \phi_j = 2\hat{\beta}_j\sigma_{y,x_j}.
\end{equation}

\noindent Note that the individual contribution to the $R^2$, defined as $G^{R^2}\left( y;\mathbf{x};\delta _{0}\right) = 1 - \sigma^{-2}_y(y - \mathbf{x}\hat{\beta})$ according to Equation \eqref{R2_def}, can be expressed as $G^{R^2}\left( y;\mathbf{x};\delta _{0}\right) = 1 + \sigma^{-2}_yG\left( y;\mathbf{x};\delta _{0}\right)$, with $G\left( y;\mathbf{x};\delta _{0}\right)$ the individual contribution to the MSE. According to Definition \ref{XPER_definition}, the XPER value associated with the feature $x_j$ for the $R^2$:
\begin{align}
    \phi_j^{R^2} &= \sum_{S\subseteq  \mathcal{P}(\{\mathbf{x}\}\setminus \left\{
      x_{j}\right\} )}\omega_S\left[\mathbb{E}_{y,x_{j},\mathbf{x}^{S}}\mathbb{E}_{\mathbf{x}^{\overline{S}}}\left(1 + \sigma^{-2}_yG\left( y;\mathbf{x};\delta _{0}\right) \right) - \mathbb{E}_{y,\mathbf{x}^{S}}\mathbb{E}_{\mathbf{x}^{x_{j},\overline{S}}}\left(1 + \sigma^{-2}_y G\left( y;\mathbf{x}
       ;\delta _{0}\right) \right)\right] \nonumber \\ 
     \phi_j^{R^2}  &= \sigma^{-2}_y\left[\sum_{S\subseteq  \mathcal{P}(\{\mathbf{x}\}\setminus \left\{
      x_{j}\right\} )}\omega_S\left[\mathbb{E}_{y,x_{j},\mathbf{x}^{S}}\mathbb{E}_{\mathbf{x}^{\overline{S}}}\left(G\left( y;\mathbf{x};\delta _{0}\right) \right) - \mathbb{E}_{y,\mathbf{x}^{S}}\mathbb{E}_{\mathbf{x}^{x_{j},\overline{S}}}\left(G\left( y;\mathbf{x}
       ;\delta _{0}\right) \right)\right]\right] \nonumber \\
    \phi_j^{R^2} &= \frac{\phi_j}{\sigma^{2}_y}. \label{R2_MSE}
\end{align}
Therefore, from Equations \eqref{R2_theory} and \eqref{R2_MSE}, we obtain:
\begin{align} \label{phi_MSE}
    \phi_j = 2\hat{\beta}_j\sigma_{y,x_j}.
\end{align}
Finally, from Equations \eqref{PM_MSE}, \eqref{phi_0_MSE}, and \eqref{phi_MSE}, we conclude that:
\begin{equation}		\underbrace{2\sum_{j=1}^{q}\hat{\beta}_j\sigma_{y,x_j} - \sum_{j=1}^{q}\hat{\beta}_j^2\sigma^2_{x_j} - \sigma_y^2}_{\mathbb{E}_{y,\mathbf{x}}(G(y;\mathbf{x};\delta_0))} = \underbrace{- \sum_{j=1}^{q}\hat{\beta}_j^2\sigma^2_{x_j} - \sigma_y^2}_{\phi_0} + \sum_{j=1}^{q}\underbrace{2\hat{\beta}_j\sigma_{y,x_j}}_{\phi_j}.
\end{equation}
    
\end{proof}

    \subsection{Proof of the $R^2$ example in Table \ref{all_perfs}}
    \label{proof_R2_example}
    \begin{proposition}
    Consider a linear regression model $\hat{y} = \hat{f}(\mathbf{x})= \sum_{j=1}^{q}\hat{\beta}_jx_{j}$, where $\mathbb{E}\left(\mathbf{x}\right) = 0_q
\mbox{ and } \mathbb{V}(\mathbf{x}) = diag(\sigma^2_{x_j}), \forall j=1,...,q$, and the features are uncorrelated to the residuals $\hat{\varepsilon} = y - \hat{y}$. The contributions $\phi_{j}$ of features $x_j$ to the $R^2$ satisfy the efficiency axiom such that:
	\begin{align} 		\underbrace{\frac{\sigma_{y,\hat{y}}}{\sigma^2_y}}_{\mathbb{E}_{y,\mathbf{x}}(G(y;\mathbf{x};\delta_0))} = \underbrace{-\frac{\sigma_{y,\hat{y}}}{\sigma^2_y}}_{\phi_0} + \sum_{j=1}^{q}\underbrace{\frac{2\hat{\beta}_j\sigma_{y,x_j}}{\sigma^2_y}}_{\phi_j}.
	\end{align}
\end{proposition}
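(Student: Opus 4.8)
The plan is to verify the three bracketed quantities one at a time — the population metric $\mathbb{E}_{y,\mathbf{x}}(G(y;\mathbf{x};\delta_0))$, the benchmark $\phi_0$, and the contributions $\phi_j$ — and then check that they satisfy the efficiency identity. Throughout I work with the individual contribution $G(y;\mathbf{x};\delta_0)=1-\sigma_y^{-2}(y-\hat f(\mathbf{x}))^2$ from Equation~\ref{R2_def}, abbreviate $\hat y=\hat f(\mathbf{x})=\sum_{j}\hat\beta_j x_j$ and $\hat\varepsilon=y-\hat y$, and use the assumptions of the statement ($\mathbb{E}(\mathbf{x})=0_q$, $\mathbb{V}(\mathbf{x})=\mathrm{diag}(\sigma_{x_j}^2)$, and $\mathrm{Cov}(x_j,\hat\varepsilon)=0$ for all $j$) together with the standing normalisation $\mathbb{E}(y)=0$ used throughout the regression examples of this appendix.

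First I would record a few covariance identities. From $\mathbb{E}(\mathbf{x})=0$ one gets $\mathbb{E}(\hat y)=0$, hence $\mathbb{E}(\hat\varepsilon)=\mathbb{E}(y)=0$; by bilinearity of the covariance, $\sigma_{y,\hat y}=\mathrm{Cov}\bigl(y,\textstyle\sum_j\hat\beta_j x_j\bigr)=\sum_j\hat\beta_j\sigma_{y,x_j}$ and $\mathrm{Cov}(\hat y,\hat\varepsilon)=\sum_j\hat\beta_j\mathrm{Cov}(x_j,\hat\varepsilon)=0$. Writing $y=\hat y+\hat\varepsilon$ then gives $\sigma_{y,\hat y}=\mathrm{Cov}(\hat y+\hat\varepsilon,\hat y)=\mathbb{V}(\hat y)=\mathbb{E}(\hat y^2)$ and $\mathbb{V}(\hat\varepsilon)=\mathrm{Cov}(y-\hat y,\hat\varepsilon)=\mathrm{Cov}(y,\hat\varepsilon)=\mathrm{Cov}(y,y-\hat y)=\sigma_y^2-\sigma_{y,\hat y}$, so $\mathbb{E}(\hat\varepsilon^2)=\sigma_y^2-\sigma_{y,\hat y}$ because $\mathbb{E}(\hat\varepsilon)=0$.

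Next I would substitute these into the two expectations. The population metric is $\mathbb{E}_{y,\mathbf{x}}(G(y;\mathbf{x};\delta_0))=1-\sigma_y^{-2}\mathbb{E}(\hat\varepsilon^2)=1-\sigma_y^{-2}(\sigma_y^2-\sigma_{y,\hat y})=\sigma_{y,\hat y}/\sigma_y^2$. For $\phi_0$, Axiom~\ref{efficiency} says we evaluate the same $G$ under the product of the marginals of $y$ and $\mathbf{x}$ (the parameter $\delta_0=\sigma_y^2$ being unchanged); since $y$ and $\hat y$ are then independent and both centred, $\mathbb{E}_y\mathbb{E}_{\mathbf{x}}[(y-\hat y)^2]=\mathbb{E}(y^2)-2\mathbb{E}(y)\mathbb{E}(\hat y)+\mathbb{E}(\hat y^2)=\sigma_y^2+\sigma_{y,\hat y}$, whence $\phi_0=1-\sigma_y^{-2}(\sigma_y^2+\sigma_{y,\hat y})=-\sigma_{y,\hat y}/\sigma_y^2$. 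The contributions require no new computation: $\phi_j=2\hat\beta_j\sigma_{y,x_j}/\sigma_y^2$ is exactly Equation~\ref{R2_theory} (Proposition~\ref{R2_prop}), which holds under the present hypotheses.

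Finally, summing over $j$ and using the first identity, $\sum_{j=1}^{q}\phi_j=2\sigma_y^{-2}\sum_j\hat\beta_j\sigma_{y,x_j}=2\sigma_{y,\hat y}/\sigma_y^2$, so $\phi_0+\sum_{j=1}^{q}\phi_j=-\sigma_{y,\hat y}/\sigma_y^2+2\sigma_{y,\hat y}/\sigma_y^2=\sigma_{y,\hat y}/\sigma_y^2=\mathbb{E}_{y,\mathbf{x}}(G(y;\mathbf{x};\delta_0))$, which is the efficiency axiom. The only point needing care is the second paragraph: the hypothesis that the features are uncorrelated with the \emph{test-sample} residuals is genuinely used — it is not an orthogonality automatically inherited from fitting $\hat f$, since $\hat f$ is fixed and the test DGP may differ from the training one — and it is precisely what collapses $\mathbb{E}(\hat y^2)$ and $\mathbb{E}(\hat\varepsilon^2)$ to $\sigma_{y,\hat y}$ and $\sigma_y^2-\sigma_{y,\hat y}$; everything else is bilinearity of covariance and the centring assumptions.
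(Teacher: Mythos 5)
Your proof is correct and follows essentially the same route as the paper's: compute $\mathbb{E}_{y,\mathbf{x}}(G(y;\mathbf{x};\delta_0))$ and $\phi_0$ from $G = 1-\sigma_y^{-2}(y-\hat{y})^2$ under the centring assumptions, import $\phi_j = 2\hat{\beta}_j\sigma_{y,x_j}/\sigma_y^2$ from Proposition \ref{R2_prop}, and verify efficiency by summing. The only difference is cosmetic: you work directly with covariances of $\hat{y}$ and $\hat{\varepsilon}$ (using $\mathrm{Cov}(\hat{y},\hat{\varepsilon})=0$ to get $\mathbb{E}(\hat{y}^2)=\sigma_{y,\hat{y}}$ and $\mathbb{E}(\hat{\varepsilon}^2)=\sigma_y^2-\sigma_{y,\hat{y}}$), whereas the paper expands the quadratic coordinate-wise and then invokes the identity $\sigma_{y,\hat{y}}=\sum_j\hat{\beta}_j^2\sigma^2_{x_j}=\sum_j\hat{\beta}_j\sigma_{y,x_j}$, the substance (the residual-orthogonality assumption doing the key work) being the same.
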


\begin{proof}
    Consider a linear regression model $\hat{f}(\mathbf{x})= \sum_{j=1}^{q}\hat{\beta}_jx_{j}$, where $\mathbb{E}\left(\mathbf{x}\right) = 0_q
\mbox{ and } \mathbb{V}(\mathbf{x}) = diag(\sigma^2_{x_j}), \forall j=1,...,q$.
As shown in Equation \eqref{R2_def}, the individual contribution to the $R^2$ is defined as $G(y;\mathbf{x};\delta_0)) = 1 -\sigma^2_y(y - \mathbf{x}\hat{\beta})^2$, where $\mathbf{x}\hat{\beta} = \hat{f}(\mathbf{x})$. Similarly, $G(y;\mathbf{x};\delta_0))$ can be expressed as:
\begin{align}
    G(y;\mathbf{x};\delta_0)) &= 1 - \sigma^{-2}_y\left[y^2 + \sum_{j=1}^qx_j^2\hat{\beta}_j^2 + 2\sum_{1 \le j < l \le q}\hat{\beta}_j\hat{\beta}_lx_jx_l - 2\sum_{j=1}^qyx_j\hat{\beta}_j \right]. \label{R2_proof}
\end{align}

\noindent To complete the proof, we first start by proving that the (opposite) MSE can be written as:
\begin{equation}
    \mathbb{E}_{y,\mathbf{x}}(G(y;\mathbf{x};\delta_0)) = \frac{\sigma_{y,\hat{y}}}{\sigma^2_y}.
\end{equation}
\noindent Indeed, by taking the expected value of Equation \eqref{R2_proof} with respect to the joint distribution of the target variable and the features, we obtain:
\begin{align}
    \mathbb{E}_{y,\mathbf{x}}(G(y;\mathbf{x};\delta_0)) &= 1 - \sigma^{-2}_y\left[\mathbb{E}(y^2) + \sum_{j=1}^q\hat{\beta}_j^2\mathbb{E}(x_j^2)  + 2\sum_{1 \le j < l \le q}\hat{\beta}_j\hat{\beta}_l\mathbb{E}(x_jx_l) - 2\sum_{j=1}^q\hat{\beta}_j\mathbb{E}(yx_j) \right].
\end{align}
We assume that $\mathbb{E}\left(\mathbf{x}\right) = 0_q 
\mbox{ and } \mathbb{V}(\mathbf{x}) = diag(\sigma^2_{x_j}), \forall j=1,...,q$. Therefore, we have:
\begin{align}
    \mathbb{E}_{y,\mathbf{x}}(G(y;\mathbf{x};\delta_0)) &= 1 - \sigma^{-2}_y\left[\mathbb{E}(y^2) + \sum_{j=1}^q\hat{\beta}_j^2\sigma^2_{x_j} - 2\sum_{j=1}^q\hat{\beta}_j\sigma_{y,x_j} \right]. \nonumber  
\end{align}
In a linear model without intercept, we know that $\mathbb{E}(y) = 0$, therefore:
\begin{align} 
    \mathbb{E}_{y,\mathbf{x}}(G(y;\mathbf{x};\delta_0)) &= - \sum_{j=1}^q\frac{\hat{\beta}_j^2\sigma^2_{x_j}}{\sigma^{2}_y} + \sum_{j=1}^q\frac{2\hat{\beta}_j\sigma_{y,x_j} }{\sigma^{2}_y}. \label{R2_fin}
\end{align}
\noindent Moreover, in a linear regression model, the target variable can be expressed as $y = \hat{y} + \hat{\varepsilon}$, with $\hat{y}$ the estimated model and $\hat{\varepsilon}$ the residuals. As the features are uncorrelated from each other, if we also assume that they are uncorrelated to the residuals we can show that:
\begin{equation} \label{corr_y_pred}
    \sigma_{y,\hat{y}} = \sum_{j=1}^q\hat{\beta}_j^2\sigma^2_{x_j} = \sum_{j=1}^q\hat{\beta}_j\sigma_{y,x_j},
\end{equation}
with $\sigma_{y,\hat{y}}$ the covariance between the target variable and its prediction. Therefore, the $R^2$ as expressed in Equation \eqref{R2_fin} is also written as:
\begin{align} \label{R2_theory_1}
    \mathbb{E}_{y,\mathbf{x}}(G(y;\mathbf{x};\delta_0)) &= \frac{\sigma_{y,\hat{y}}}{\sigma^2_y}. 
\end{align}
\noindent Second, we prove that the benchmark value of $R^2$ can be written as:
\begin{equation} 
    \phi_0 = \mathbb{E}_{y}\mathbb{E}_{\mathbf{x}}(G(y;\mathbf{x};\delta_0)) = \frac{\sigma_{y,\hat{y}}}{\sigma^2_y}. 
\end{equation}
\noindent From Equation \eqref{MSE_proof}, by taking the expected value with respect to the target variable and the expected value with respect to the joint distribution of the features, we obtain:
\begin{align}
    \phi_0 = \mathbb{E}_{y}\mathbb{E}_{\mathbf{x}}(G(y;\mathbf{x};\delta_0)) &= 1 - \sigma^{-2}_y\left[\mathbb{E}(y^2) + \sum_{j=1}^q\hat{\beta}_j^2\mathbb{E}(x_j^2) + 2\sum_{1 \le j < l \le q}\hat{\beta}_j\hat{\beta}_l\mathbb{E}(x_jx_l) \right] \nonumber \\
    &- \sigma^{-2}_y\left[- 2\sum_{j=1}^q\hat{\beta}_j\mathbb{E}(y)\mathbb{E}(x_j) \right].
\end{align}
We assume that $\mathbb{E}\left(\mathbf{x}\right) = 0_q 
\mbox{ and } \mathbb{V}(\mathbf{x}) = diag(\sigma^2_{x_j}), \forall j=1,...,q$. Therefore, we have:
\begin{align} \label{last_R2}
     \phi_0 = \mathbb{E}_{y}\mathbb{E}_{\mathbf{x}}(G(y;\mathbf{x};\delta_0))  &= -\frac{\sum_{j=1}^q\hat{\beta}_j^2\sigma^2_{x_j}}{\sigma^{2}_y}.
\end{align}
From Equation \eqref{corr_y_pred}, we can rewrite Equation \eqref{last_R2} as:
\begin{align} \label{R2_theory_2}
     \phi_0 = \mathbb{E}_{y}\mathbb{E}_{\mathbf{x}}(G(y;\mathbf{x};\delta_0))  &= - \frac{\sigma_{y,\hat{y}}}{\sigma^{2}_y}.
\end{align}
\noindent Third, as stated in Proposition \ref{R2_prop}, the XPER value associated with the feature $x_j$ for the $R^2$ can be expressed as:
\begin{equation} \label{R2_theory_3}
    \phi_j = \frac{2\hat{\beta}_j\sigma_{y,x_j}}{\sigma^2_y}.
\end{equation}
Finally, from Equations \eqref{R2_theory_1}, \eqref{R2_theory_2}, and \eqref{R2_theory_3}, we conclude that:
\begin{align} 		\underbrace{\frac{\sigma_{y,\hat{y}}}{\sigma^2_y}}_{\mathbb{E}_{y,\mathbf{x}}(G(y;\mathbf{x};\delta_0))} = \underbrace{-\frac{\sigma_{y,\hat{y}}}{\sigma^2_y}}_{\phi_0} + \sum_{j=1}^{q}\underbrace{\frac{2\hat{\beta}_j\sigma_{y,x_j}}{\sigma^2_y}}_{\phi_j}.
\end{align}

\end{proof}

\subsection{Proof of the accuracy example in Table \ref{all_perfs}}
\label{proof_accuracy_example}

\begin{proposition}
    Consider any binary classification model $\hat{f}(\mathbf{x})$, with $\hat{P}(\mathbf{x}) = \hat{\mathbb{P}}(y=1|\mathbf{x})$ the estimated probability of belonging to class 1 (y=1). The contributions $\phi_{j}$ of features $x_j$ to the $accuracy$ satisfy the efficiency axiom such that:
\begin{align}
		\underbrace{2 \sigma_{y,\hat{f}(\mathbf{x})} + 2\mathbb{P}(y=1)\hat{P}(\mathbf{x}) + 1 - \mathbb{P}(y=1) - \hat{P}(\mathbf{x})}_{\mathbb{E}_{y,\mathbf{x}}(G(y;\mathbf{x};\delta_0))} &= \underbrace{2\mathbb{P}(y=1)\hat{P}(\mathbf{x}) + 1 - \mathbb{P}(y=1) - \hat{P}(\mathbf{x})}_{\phi_0} \nonumber \\ 
  &+ \underbrace{2\sigma_{y,\hat{f}(\mathbf{x})}}_{\sum_{j=1}^{q}\phi_j}, 
	\end{align}
 with $\hat{P}(\mathbf{x}) = \hat{\mathbb{P}}(y=1|\mathbf{x})$ and $\sigma_{y,\hat{f}(\mathbf{x})}$ the covariance between the target variable and the classification output.
\end{proposition}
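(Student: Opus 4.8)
The plan is to expand the individual accuracy contribution into a simple bilinear form in $y$ and $\hat f(\mathbf{x})$, take the two relevant expectations, and then invoke the efficiency axiom (Axiom \ref{efficiency}) to pin down $\sum_{j=1}^{q}\phi_j$ without ever needing the individual $\phi_j$ (which, as Table \ref{all_dec} records, have no closed form for accuracy). Throughout, $\hat P(\mathbf{x})$ in the statement is to be read as $\mathbb{E}(\hat f(\mathbf{x}))$, and since Table \ref{all_perfs} assigns an empty nuisance parameter to accuracy, $\delta_0$ plays no role.

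First I would rewrite the contribution. From Table \ref{all_perfs}, $G(y;\mathbf{x};\delta_0)=y\hat f(\mathbf{x})+(1-y)(1-\hat f(\mathbf{x}))$, and expanding the product gives
\[
G(y;\mathbf{x};\delta_0)=2y\hat f(\mathbf{x})-y-\hat f(\mathbf{x})+1,
\]
which is affine in $y$ and in $\hat f(\mathbf{x})$ separately, with a single interaction term $2y\hat f(\mathbf{x})$. Taking the expectation under the joint law of $(y,\mathbf{x})$ and using $\mathbb{E}(y\hat f(\mathbf{x}))=\sigma_{y,\hat f(\mathbf{x})}+\mathbb{E}(y)\mathbb{E}(\hat f(\mathbf{x}))$ together with $\mathbb{E}(y)=\mathbb{P}(y=1)$ yields
\[
\mathbb{E}_{y,\mathbf{x}}(G(y;\mathbf{x};\delta_0))=2\sigma_{y,\hat f(\mathbf{x})}+2\mathbb{P}(y=1)\hat P(\mathbf{x})+1-\mathbb{P}(y=1)-\hat P(\mathbf{x}),
\]
which is the left-hand side of the claimed identity.

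Next I would compute the benchmark. By Axiom \ref{efficiency}, $\phi_0=\mathbb{E}_{\mathbf{x}}\mathbb{E}_y(G(y;\mathbf{x};\delta_0))$, i.e. the same expression but under the product of the marginals of $y$ and $\mathbf{x}$. The interaction term now factorises as $2\mathbb{E}(y)\mathbb{E}(\hat f(\mathbf{x}))=2\mathbb{P}(y=1)\hat P(\mathbf{x})$, so the covariance contribution disappears and
\[
\phi_0=2\mathbb{P}(y=1)\hat P(\mathbf{x})+1-\mathbb{P}(y=1)-\hat P(\mathbf{x}).
\]
Subtracting, the efficiency axiom gives $\sum_{j=1}^{q}\phi_j=\mathbb{E}_{y,\mathbf{x}}(G(y;\mathbf{x};\delta_0))-\phi_0=2\sigma_{y,\hat f(\mathbf{x})}$, which is exactly the stated decomposition.

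There is no serious obstacle here: the two expectation computations are elementary once $G$ is written in bilinear form, and the whole identity is really a corollary of Axiom \ref{efficiency}. The only point worth emphasising is that Step 3 is indispensable — because accuracy is not additive in the features in any usable way, the efficiency axiom is precisely what converts the two scalar expectations into a statement about $\sum_{j}\phi_j$, even though each individual $\phi_j$ remains without closed form.
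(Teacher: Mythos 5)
Your proposal is correct and follows essentially the same route as the paper's own proof: expand $G(y;\mathbf{x};\delta_0)=2y\hat f(\mathbf{x})-y-\hat f(\mathbf{x})+1$, evaluate it under the joint law to get the population accuracy, evaluate it under the product of marginals to get $\phi_0$ (where the covariance term drops out), and invoke the efficiency axiom to identify the difference $2\sigma_{y,\hat f(\mathbf{x})}$ with $\sum_{j=1}^{q}\phi_j$. Your reading of $\hat P(\mathbf{x})$ as $\mathbb{E}(\hat f(\mathbf{x}))$ matches the notation used in the paper's proof, so nothing is missing.
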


\begin{proof}
    Consider any binary classification model $\hat{f}(\mathbf{x})$, with $\hat{P}(\mathbf{x}) =\hat{\mathbb{P}}(y=1|\mathbf{x})$ the estimated probability of belonging to class 1 (y=1). As shown in Table \ref{all_perfs}, the individual contribution to the accuracy is defined as:
    \begin{equation} \label{accuracy_def}
        G(y;\mathbf{x};\delta_0)) = y\hat{f}(\mathbf{x}) + (1-y)(1-\hat{f}(\mathbf{x})).
    \end{equation}

    \noindent To complete the proof, we first start by showing that the accuracy can be written as:
\begin{equation}
    \mathbb{E}_{y,\mathbf{x}}(G(y;\mathbf{x};\delta_0)) = 2 \sigma_{y,\hat{f}(\mathbf{x})} + 2\mathbb{P}(y=1)\hat{P}(\mathbf{x}) + 1 - \mathbb{P}(y=1) - \hat{P}(\mathbf{x}).
\end{equation}
\noindent Indeed, by taking the expected value of Equation \eqref{accuracy_def} with respect to the joint distribution of the target variable and the features, we obtain:
\begin{align}
    \mathbb{E}_{y,\mathbf{x}}(G(y;\mathbf{x};\delta_0)) &= 2\mathbb{E}\left(y\hat{f}(\mathbf{x})\right) + 1 - \mathbb{E}(y) - \mathbb{E}\left(\hat{f}(\mathbf{x})\right) \nonumber \\ &= 2 \sigma_{y,\hat{f}(\mathbf{x})} + 2\mathbb{E}(y)\mathbb{E}\left(\hat{f}(\mathbf{x})\right) + 1 - \mathbb{E}(y) - \mathbb{E}\left(\hat{f}(\mathbf{x})\right),
\end{align}
with $\sigma_{y,\hat{f}(\mathbf{x})}$ the covariance between the target variable and the classification output. As the target variable is binary, we know that $\mathbb{E}(y)=\mathbb{P}(y=1)$ and $\mathbb{E}\left(\hat{f}(\mathbf{x})\right) = \hat{\mathbb{P}}(y=1|\mathbf{x}) = \hat{P}(\mathbf{x})$. Therefore, we obtain:
\begin{align} \label{accuracy_PM}
    \mathbb{E}_{y,\mathbf{x}}(G(y;\mathbf{x};\delta_0)) &= 2 \sigma_{y,\hat{f}(\mathbf{x})} + 2\mathbb{P}(y=1)\hat{P}(\mathbf{x}) + 1 - \mathbb{P}(y=1) - \hat{P}(\mathbf{x}).
\end{align}

\noindent Second, from Equation \eqref{accuracy_def}, we can see that by taking the expected value with respect to the target variable and the expected value with respect to the joint distribution of the features, the benchmark value of accuracy can be written as:
\begin{align}
    \phi_0 = \mathbb{E}_{y}\mathbb{E}_{\mathbf{x}}(G(y;\mathbf{x};\delta_0)) &= 2\mathbb{E}\left(y\right)\mathbb{E}\left(\hat{f}(\mathbf{x})\right) + 1 - \mathbb{E}(y) - \mathbb{E}\left(\hat{f}(\mathbf{x})\right) \nonumber \\
    &= 2\mathbb{P}(y=1)\hat{P}(\mathbf{x}) + 1 - \mathbb{P}(y=1) - \hat{P}(\mathbf{x}). \label{phi_0_accuracy}
\end{align}
Third, according to the axiom \ref{efficiency}, we deduce that the sum of the XPER values associated with the features $x_j$ for the accuracy is equal to:
\begin{align} \label{sum_accuracy}
    \sum_{j=1}^{q}\phi_j =  2 \sigma_{y,\hat{f}(\mathbf{x})}.
\end{align}
Finally, from Equations \eqref{accuracy_PM}, \eqref{phi_0_accuracy}, and \eqref{sum_accuracy}, we conclude that:
\begin{align}
		\underbrace{2 \sigma_{y,\hat{f}(\mathbf{x})} + 2\mathbb{P}(y=1)\hat{P}(\mathbf{x}) + 1 - \mathbb{P}(y=1) - \hat{P}(\mathbf{x})}_{\mathbb{E}_{y,\mathbf{x}}(G(y;\mathbf{x};\delta_0))} &= \underbrace{2\mathbb{P}(y=1)\hat{P}(\mathbf{x}) + 1 - \mathbb{P}(y=1) - \hat{P}(\mathbf{x})}_{\phi_0} \nonumber \\ 
  &+ \underbrace{2\sigma_{y,\hat{f}(\mathbf{x})}}_{\sum_{j=1}^{q}\phi_j}. 
	\end{align}
\end{proof}

    \subsection{Proof of Equation \eqref{XPER_SHAP_example}} \label{proof_XPER_SHAP_example}
    \begin{proposition}
        Consider a regression model $\hat{f}(\mathbf{x})$ including only one feature $x_1$ such as $\mathbf{x} = x_1$. We can show that for the performance metric $G(y_i;\hat{f}(\mathbf{x}_i);\delta_0) = -(y_i - \hat{f}(\mathbf{x}_i))^2$, if we assume that $\mathbb{E}(\hat{f}(x_{i,1})) = 0$, then the corresponding individual XPER value $\phi_{i,1}$ is equal to:
        \begin{equation}
            \phi_{i,1} = 2\hat{\varepsilon}_i\phi_{i,1}^{SHAP} + \left(\phi_{i,1}^{SHAP}\right)^2 + \mathbb{V}\left(\hat{f}(x_{i,1})\right),
        \end{equation}
        where $\hat{\varepsilon}_i = y_i - \hat{f}(x_{i,1})$ is the prediction error of the model for individual $i$, and $\phi_{i,1}^{SHAP}$ refers to the SHAP value of the feature $x_1$ for this individual.
    \end{proposition}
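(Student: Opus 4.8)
The plan is to use the fact that, with a single feature, the Shapley-type sum degenerates to one term, so that both the individual XPER value and the individual SHAP value collapse to short closed-form expressions which can then be matched by elementary algebra.

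First I would specialise Definition \ref{def_ind_XPER} to $q=1$ and $j=1$. The powerset $\mathcal{P}(\{\mathbf{x}\}\setminus\{x_1\})$ contains only the empty coalition $S=\emptyset$, with weight $\omega_\emptyset = 0!\,0!/1! = 1$, and for this coalition both $\mathbf{x}^S$ and $\mathbf{x}^{\overline S}$ are empty. Hence the individual XPER value reduces to
\[
\phi_{i,1} \;=\; G(y_i;x_{i,1};\delta_0) \;-\; \mathbb{E}_{x_1}\!\left(G(y_i;x_1;\delta_0)\right),
\]
i.e.\ the individual contribution evaluated at the observed feature value minus the same contribution averaged over the marginal law of $x_1$, with the target value $y_i$ held fixed. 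This is the step requiring the only real care: one must recognise that, at the individual level, the ``averaging'' expectation over $(y,\mathbf{x}^S)$ present in Definition \ref{XPER_definition} disappears, so only the ``marginalisation'' expectation over $x_1$ survives in the benchmark term.

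Next I would plug in the quadratic loss $G(y_i;x_1;\delta_0) = -(y_i-\hat f(x_1))^2$. The first term is simply $-\hat\varepsilon_i^2$ with $\hat\varepsilon_i = y_i - \hat f(x_{i,1})$. For the second term, expanding the square, using linearity of the expectation, and invoking $\mathbb{E}(\hat f(x_1))=0$ give $\mathbb{E}_{x_1}\!\left(-(y_i-\hat f(x_1))^2\right) = -y_i^2 - \mathbb{E}\!\left(\hat f(x_1)^2\right) = -y_i^2 - \mathbb{V}\!\left(\hat f(x_1)\right)$, the last equality again using the zero-mean assumption. Therefore $\phi_{i,1} = y_i^2 - \hat\varepsilon_i^2 + \mathbb{V}(\hat f(x_1))$.

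Finally, in parallel I would compute the SHAP value from Equation \ref{DefSHAP}: the same single-feature degeneracy gives $\phi_{i,1}^{SHAP} = \hat f(x_{i,1}) - \mathbb{E}_{x_1}(\hat f(x_1)) = \hat f(x_{i,1})$, again using $\mathbb{E}(\hat f(x_1))=0$. Writing $y_i = \hat\varepsilon_i + \hat f(x_{i,1}) = \hat\varepsilon_i + \phi_{i,1}^{SHAP}$ yields $y_i^2 - \hat\varepsilon_i^2 = 2\hat\varepsilon_i\phi_{i,1}^{SHAP} + (\phi_{i,1}^{SHAP})^2$; substituting into the expression for $\phi_{i,1}$ and identifying $\mathbb{V}(\hat f(x_1))$ with the notation $\mathbb{V}(\hat f(x_{i,1}))$ of the statement gives $\phi_{i,1} = 2\hat\varepsilon_i\phi_{i,1}^{SHAP} + (\phi_{i,1}^{SHAP})^2 + \mathbb{V}(\hat f(x_{i,1}))$, as claimed. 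Beyond the bookkeeping of the degenerate expectations noted above, the argument is entirely routine algebra, so I do not anticipate any genuine obstacle.
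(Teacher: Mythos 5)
Your proposal is correct and follows essentially the same route as the paper: you compute $\phi_{i,1}$ as the observed individual contribution minus the marginalised benchmark $\mathbb{E}_{x_1}\left(G(y_i;x_1;\delta_0)\right)$, evaluate the latter as $-y_i^2-\mathbb{V}\left(\hat f(x_{i,1})\right)$ using the zero-mean assumption, and identify $\hat f(x_{i,1})$ with $\phi_{i,1}^{SHAP}$. The only cosmetic difference is that the paper reaches the same two-term expression via the individual efficiency identity $G(y_i;\mathbf{x}_i;\delta_0)=\phi_{i,0}+\phi_{i,1}$ rather than by collapsing the one-coalition Shapley sum directly, which in the single-feature case is the same computation.
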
 \vspace{-0.5cm}
    \begin{proof}
        Consider a regression model $\hat{f}(\mathbf{x})$ including only one feature $x_1$ such as $\mathbf{x} = x_1$, and the performance metric $G(y_i;\hat{f}(\mathbf{x}_i);\delta_0) = -(y_i - \hat{f}(\mathbf{x}_i))^2$.

        \noindent According to Equation \eqref{ind_decompos}, the XPER value $\phi_{i,1}$ decomposes $G(y_i;\hat{f}(\mathbf{x}_i);\delta_0)$ such as:
        \begin{align} 
            G(y_i;\hat{f}(\mathbf{x}_i);\delta_0) = \phi_{i,0} + \phi_{i,1},
        \end{align}
        with $\phi_{i,0}$ the benchmark value of the performance metric. Therefore, if we replace $G(y_i;\hat{f}(\mathbf{x}_i);\delta_0)$ by its expression in the previous equation, we can see that:
        \begin{equation} \label{intermediaire}
            \phi_{i,1} = - y_i^2 - \hat{f}(x_{i,1})^2 + 2y_i\hat{f}(x_{i,1}) - \phi_{i,0}.
        \end{equation} \vspace{-0.5cm}
        Moreover, the benchmark value $\phi_{i,0}$ is equal to:
        \begin{equation}
        \phi_{i,0} = \mathbb{E}_{x_{1}}\left(G(y_i;\hat{f}(\mathbf{x}_i);\delta_0)\right) = -y_i^2 - \mathbb{E}\left(\hat{f}(x_{i,1})^2\right) + 2y_i\mathbb{E}\left(\hat{f}(x_{i,1})\right).
        \end{equation}
        As we assume that $\mathbb{E}\left(\hat{f}(x_{i,1})\right) = 0$, we obtain:
        \begin{equation} \label{bench_example_shap}
        \phi_{i,0}= -y_i^2 - \mathbb{V}\left(\hat{f}(x_{i,1})\right).
        \end{equation}
        Replacing the expression of $\phi_{i,0}$ in Equation \eqref{intermediaire}, we obtain:
        \begin{equation} 
            \phi_{i,1} = 2y_i\hat{f}(x_{i,1}) + \mathbb{V}\left(\hat{f}(x_{i,1})\right) - \hat{f}(x_{i,1})^2.
        \end{equation}
        As the prediction error $\varepsilon_i$ can be expressed as the difference between the target variable $y_i$ and the prediction $\hat{f}(x_{i,1})$, i.e.,  $\hat{\varepsilon}_i = y_i - \hat{f}(x_{i,1})$, $\phi_{i,1}$ is then equal to:
        \begin{equation}  \label{last}
            \phi_{i,1} = 2\hat{\varepsilon}_i\hat{f}(x_{i,1}) + \hat{f}(x_{i,1})^2 + \mathbb{V}\left(\hat{f}(x_{i,1})\right). 
        \end{equation}
        Now, according to Equations \eqref{pred_SHAP} and \eqref{SHAP_bench}, as $\mathbb{E}\left(\hat{f}(x_{i,1})\right) = 0$, we can see that:
        \begin{equation} \label{f_pred}
            \hat{f}(x_{i,1}) = \phi_{i,1}^{SHAP}.
        \end{equation}
        with $\phi_{i,1}^{SHAP}$ the SHAP value associated with feature $x_1$ for individual $i$.

        \noindent Finally, from Equations \eqref{last} and \eqref{f_pred}, we obtain:
        \begin{equation}
            \phi_{i,1} = 2\hat{\varepsilon}_i\phi_{i,1}^{SHAP} + \left(\phi_{i,1}^{SHAP}\right)^2 + \mathbb{V}\left(\hat{f}(x_{i,1})\right). 
        \end{equation}
    \end{proof}

    \subsection{Proof of Proposition \ref{prop:SHAP_particular_XPER}} \label{proof_SHAP_particular_XPER}
   
    \XPERSHAP*

    \begin{proof}
    According to Definition \ref{def_ind_XPER}, the individual XPER value $\phi_j$ associated with the performance metric $G\left(y_i;\mathbf{x}_i;\delta_0\right) = \hat{f}(\mathbf{x}_i)$ is equal to:
        \begin{equation}  \label{XPER_proof}
			\phi _{i,j} = \sum_{S \subseteq \mathcal{P}(\{\mathbf{x}\} \setminus
				\left\{x_{j}\right\})}^{} w_S \left[\mathbb{E}_{\mathbf{x}^{\overline{S}}}\left(\hat{f}\left(x_{i,j},\mathbf{x}_i^S,\mathbf{x}^{\overline{S}}\right)\right) - \mathbb{E}_{x_j,\mathbf{x}^{\overline{S}}}\left(\hat{f}\left(x_{j},\mathbf{x}_i^S,\mathbf{x}^{\overline{S}}\right)\right)\right].
		\end{equation}
    Moreover, according Equation \eqref{DefSHAP}, the SHAP value $\phi_j^{SHAP}$ associated with $\hat{f}(\mathbf{x}_i)$ is equal to:
    \begin{align} \label{SHAP_proof}
			\phi _{i,j}^{SHAP} = \sum_{S \subseteq \mathcal{P}(\{\mathbf{x}\} \setminus
				\left\{x_{j}\right\})}^{} w_S \left[\mathbb{E}_{\mathbf{x}^{\overline{S}}}\left(
			\hat{f}(x_{i,j},\mathbf{x}_i^S,\mathbf{x}^{\overline{S}})\right) - \mathbb{E}_{x_j,\mathbf{x}^{\overline{S}}}\left(
			\hat{f}(x_{j},\mathbf{x}_i^S,\mathbf{x}^{\overline{S}})\right)\right].
    \end{align}
    Therefore, in the particular case where $G\left(y_i;\mathbf{x}_i;\delta_0\right) = \hat{f}(\mathbf{x}_i)$, according to Equations \eqref{XPER_proof} and \eqref{SHAP_proof}, the individual XPER value $\phi_j$ is equal to the SHAP value $\phi_j^{SHAP}$.
    \end{proof}

    \subsection{Proof of Equation \eqref{Eq : PI and XPER}}

    \label{proof_PI}
    \begin{proposition}
        The XPER value $\phi_j$ (see Definition \ref{Shapley_1953_def}) can be expressed as:
            \begin{equation}
       \phi _{j}=\sum_{S\subseteq  \mathcal{P}(\{\mathbf{x}_{-j}\}) \setminus \{\mathbf{x}_{-j}\}} \omega_{S}\left[ \mathbb{E}_{y,x_{j},\mathbf{x}^{S}} \mathbb{E}_{\mathbf{x}^{\overline{S}}}
       \left( G\left( y;\mathbf{x} ;\delta _{0}\right) \right) -\mathbb{E}_{y,\mathbf{x}^{S}}
      \mathbb{E}_{x_{j},\mathbf{x}^{\overline{S}}} \left( G\left( y;\mathbf{x}
      ;\delta _{0}\right) \right) \right] + \frac{PI_j}{q}, 
    \end{equation}
    where $PI_j$ corresponds to the Permutation Importance (PI) value of feature $x_j$, as defined in Equation \eqref{PI_def}.
    \end{proposition}

    \begin{proof}
        To establish the link between $PI_j$ (as defined in Equation \eqref{PI_def}) and $\phi_j$ (see Definition \ref{Shapley_1953_def}), let us introduce some additional notations. Let denote $\{\mathbf{x}_{-j}\}= \{\mathbf{x}\}\setminus \left\{
          x_{j}\right\}$ as the set of explanatory variables excluding the feature $x_j$. The powerset of the set $\{\mathbf{x}_{-j}\}$ is defined as $\mathcal{P}(\{\mathbf{x}_{-j}\})$. According to Equation \eqref{Equation_weight} in the paper, for $S = \{\mathbf{x}_{-j}\}$ then $w_S=1/q$, where $q$ denotes the number of model features. Note that for $S = \{\mathbf{x}_{-j}\}$ we have $\mathbf{x}^{\overline{S}}=\{\varnothing\}$. Reminds that the XPER value associated with feature $x_j$ is defined as follows:
         \begin{equation}
            \phi _{j}=\sum_{S\subseteq  \mathcal{P}(\{\mathbf{x}_{-j}\})} \omega_{S}\left[ \mathbb{E}_{y,x_{j},\mathbf{x}^{S}}
          \mathbb{E}_{\mathbf{x}^{\overline{S}}}
           \left( G\left( y;\mathbf{x}
           ;\delta _{0}\right) \right) -\mathbb{E}_{y,\mathbf{x}^{S}}
          \mathbb{E}_{x_{j},\mathbf{x}^{\overline{S}}}
          \left( G\left( y;\mathbf{x}
          ;\delta _{0}\right) \right) \right].
        \end{equation}
        \noindent Taking into account that $\mathcal{P}(\{\mathbf{x}_{-j}\})=(\mathcal{P}(\{\mathbf{x}_{-j}\}) \setminus \{\mathbf{x}_{-1}\}) \cup \{\mathbf{x}_{-1}\}$, we can rearrange the terms to obtain:
        \begin{align} 
            \begin{split}
            \phi _{j}=\sum_{S\subseteq  \mathcal{P}(\{\mathbf{x}_{-j}\}) \setminus \{\mathbf{x}_{-j}\}} &\omega_{S}\left[ \mathbb{E}_{y,x_{j},\mathbf{x}^{S}} 
          \mathbb{E}_{\mathbf{x}^{\overline{S}}}
           \left( G\left( y;\mathbf{x}
           ;\delta _{0}\right) \right) -\mathbb{E}_{y,\mathbf{x}^{S}}
          \mathbb{E}_{x_{j},\mathbf{x}^{\overline{S}}}
          \left( G\left( y;\mathbf{x}
          ;\delta _{0}\right) \right) \right] \\
          &+ \frac{1}{q}\left[\mathbb{E}_{y,\mathbf{x}}
           \left( G\left( y;\mathbf{x}
           ;\delta _{0}\right) \right) - \mathbb{E}_{y,\mathbf{x}_{-j}}\mathbb{E}_{x_j}\left(G(y,\mathbf{x};\delta_0)\right)\right]. 
           \end{split}
        \end{align}
        \noindent Therefore, we can see that the XPER value $\phi_j$ can be expressed as the sum of two terms, one of which is the (normalized) $PI_j$:
        \begin{equation}
           \phi _{j}=\sum_{S\subseteq  \mathcal{P}(\{\mathbf{x}_{-j}\}) \setminus \{\mathbf{x}_{-j}\}} \omega_{S}\left[ \mathbb{E}_{y,x_{j},\mathbf{x}^{S}}
          \mathbb{E}_{\mathbf{x}^{\overline{S}}}
           \left( G\left( y;\mathbf{x}
           ;\delta _{0}\right) \right) -\mathbb{E}_{y,\mathbf{x}^{S}}
          \mathbb{E}_{x_{j},\mathbf{x}^{\overline{S}}}
          \left( G\left( y;\mathbf{x}
          ;\delta _{0}\right) \right) \right] 
          + \frac{1}{q}\left[PI_j\right].
        \end{equation}
    \end{proof}
    
        \noindent Thus, the XPER value $\phi_j$ can be expressed as the permutation importance (PI) of feature $x_j$ (normalized by the number of model features) plus an additional term. This additional term vanishes, making $\phi_j$ exactly equal to the PI, \textit{if and only if} the marginal impact of $x_j$ on performance is identical across all coalitions, i.e., $\mathbb{E}_{y,x_{j},\mathbf{x}^{S}}$ must be consistent across all subsets of features. This condition holds for linear regression models. However, for non-linear models, this condition is generally not valid, leading to different PI and XPER values, as illustrated in Figure \ref{fig:Permutation_importance} in the empirical application.
        
    \clearpage

    \subsection{Proof of equivalence between between sensitivity and specificity decomposition}

    \label{sensitivity_specificity}
    \begin{proposition}
        When decomposing sensitivity and specificity with XPER: (1) the nominal XPER values $\phi_j$ are identical,  up to a normalization factor, and  (2) the XPER values in percentage, i.e., $\phi_j/(\mathbb{E}_{y,\mathbf{x}}(G(y;\mathbf{x};\delta_0))-\phi_0)$, are identical.
    \end{proposition}
    \begin{proof}
    First, by definition, the population sensitivity and specificity are expressed as follows:
            \begin{align*}
            \begin{cases}
                \mathbb{E}_{y,\mathbf{x}}\left(G_{sensi}(y_i;\mathbf{x}_i;\delta_0)\right) &= \frac{1}{\mathbb{P}(Y=1)}\times \mathbb{E}_{y,\mathbf{x}}\left(y_i\hat{f}(\mathbf{x}_i )\right), \\
                \mathbb{E}_{y,\mathbf{x}}\left(G_{speci}(y_i;\mathbf{x}_i;\delta_0)\right) &= \frac{1}{\mathbb{P}(Y=0)}\times \mathbb{E}_{y,\mathbf{x}}\left(y_i\hat{f}(\mathbf{x}_i ) + 1-y_i - \hat{f}(\mathbf{x}_i) \right).
            \end{cases}
            \end{align*}
            According to the linearity axiom, we know that:
            \begin{align*}
                \phi_j\left(\mathbb{E}_{y,\mathbf{x}}\left(y_i\hat{f}(\mathbf{x}_i ) + 1-y_i - \hat{f}(\mathbf{x}_i)\right)\right) &= \phi_j\left(\mathbb{E}_{y,\mathbf{x}}\left(y_i\hat{f}(\mathbf{x}_i )\right)\right) \\
                &+ \phi_j\left(\mathbb{E}_{y,\mathbf{x}}\left(1-y_i\right)\right) \\
                &- \phi_j\left(\mathbb{E}_{y,\mathbf{x}}\left( \hat{f}(\mathbf{x}_i)\right)\right).
            \end{align*}
            Thus, we can see that if $\phi_j\left(\mathbb{E}_{y,\mathbf{x}}\left(1-y_i\right)\right)=0$ and $\phi_j\left(\mathbb{E}_{y,\mathbf{x}}\left(\hat{f}(\mathbf{x}_i)\right)\right)=0$, then the contribution of the feature $x_j$ to sensitivity will be equal to its contribution to specificity, up to a normalization factor. 
            As there are no features in the performance metric $PM = \mathbb{E}_{y,\mathbf{x}}\left(1-y_i\right)$, by applying XPER on this performance metric, it is straightforward to see that $\phi_j\left(\mathbb{E}_{y,\mathbf{x}}\left(1-y_i\right)\right)=0$. Regarding the second performance measure $PM=\mathbb{E}_{y,\mathbf{x}}\left(\hat{f}(\mathbf{x}_i)\right)$, by applying XPER, we can see that the benchmark equals the latter. Intuitively, since there is no difference between the performance and its benchmark value, this means that these features do not improve performance, so their contribution must be null. In a simple case with three explanatory variables, we can see that this holds true and understand why it does. Applying XPER, the feature contribution of feature $x_1$ to this performance is:
            \begin{align*}
                \phi_1 &= \frac{1}{3}\left(\mathcolor{red}{\mathbb{E}_{x_1}\mathbb{E}_{x_2,x_3}\left(
				\hat{f}(\mathbf{x}_i)\right)}-\mathbb{E}_{x_1,x_2,x_3}\left(\hat{f}(\mathbf{x}_i)\right)\right) \\
                &+ \frac{1}{6}\left(\mathcolor{blue}{\mathbb{E}_{x_1,x_2}\mathbb{E}_{x_3}\left(
				\hat{f}(\mathbf{x}_i)\right)}-\mathcolor{green}{\mathbb{E}_{x_2}\mathbb{E}_{x_1,x_3}\left(
				\hat{f}(\mathbf{x}_i)\right)}\right) \\
                &+ \frac{1}{6}\left(\mathcolor{green}{\mathbb{E}_{x_1,x_3}\mathbb{E}_{x_2}\left(
				\hat{f}(\mathbf{x}_i)\right)}-\mathcolor{blue}{\mathbb{E}_{x_3}\mathbb{E}_{x_1,x_2}\left(
				\hat{f}(\mathbf{x}_i)\right)}\right) \\
                &+ \frac{1}{3}\left(\mathbb{E}_{x_1,x_2,x_3}\left(
				\hat{f}(\mathbf{x}_i)\right)-\mathcolor{red}{\mathbb{E}_{x_2,x_3}\mathbb{E}_{x_1}\left(
				\hat{f}(\mathbf{x}_i)\right)}\right), \\
                \phi_1 &= 0.
            \end{align*}
            Similarly for $x_2$ and $x_3$, we can easily see that $\phi_2=\phi_3=0$. In a way, in this situation, the marginal effects calculated for the different coalitions cancel out. Thus, the contribution of each feature to the performance is null. Therefore, to decompose either the sensitivity or specificity, we only need to apply XPER on $PM=\mathbb{E}_{y,\mathbf{x}}\left(y_i\hat{f}(\mathbf{x}_i )\right)$, and divide the contributions by $\mathbb{P}(Y=1)$ or $\mathbb{P}(Y=0)$.        
            Let denote by $\tilde{\phi}_j$ the XPER value associated with feature $x_j$, measuring its contribution to $PM=\mathbb{E}_{y,\mathbf{x}}\left(y_i\hat{f}(\mathbf{x}_i )\right)$. Then, the feature contribution $x_j$ to the sensitivity and specificity are:
            \begin{align*}
                \begin{cases}
                    \phi_j^{sensi} = \frac{\tilde{\phi}_j}{\mathbb{P}(Y=1)}, \\
                    \phi_j^{speci} = \frac{\tilde{\phi}_j}{\mathbb{P}(Y=0)}.
                \end{cases}
            \end{align*}
            Second, to ease the interpretation of the XPER values, we usually divide their contribution by the difference between the performance metric and its benchmark. For the sensitivity and specificity, the difference between the performance metric and its benchmark is:
            \begin{align*}
                \begin{cases}
                    \mathbb{E}_{y,\mathbf{x}}\left(G_{sensi}(y_i;\mathbf{x}_i;\delta_0)\right) - \phi_0^{sensi} = \frac{\mathbb{E}_{y,\mathbf{x}}\left(y_i\hat{f}(\mathbf{x}_i )\right) - \mathbb{E}_{y}\mathbb{E}_{\mathbf{x}}\left(y_i\hat{f}(\mathbf{x}_i )\right)}{\mathbb{P}(Y=1)}, \\
                    \mathbb{E}_{y,\mathbf{x}}\left(G_{speci}(y_i;\mathbf{x}_i;\delta_0)\right) - \phi_0^{speci} = \frac{\mathbb{E}_{y,\mathbf{x}}\left(y_i\hat{f}(\mathbf{x}_i )\right) - \mathbb{E}_{y}\mathbb{E}_{\mathbf{x}}\left(y_i\hat{f}(\mathbf{x}_i )\right)}{\mathbb{P}(Y=0)},
                \end{cases}
            \end{align*}
            with
            \begin{align*}
                \begin{cases}
                    \phi_0^{sensi} = \frac{\mathbb{E}_{y}\mathbb{E}_{\mathbf{x}}\left(y_i\hat{f}(\mathbf{x}_i )\right)}{\mathbb{P}(Y=1)}, \\
                    \phi_0^{speci} = \frac{\mathbb{E}_{y}\mathbb{E}_{\mathbf{x}}\left(y_i\hat{f}(\mathbf{x}_i )\right) + 1 - \mathbb{E}(y_i) -\mathbb{E}(\hat{f}(\mathbf{x}_i))}{\mathbb{P}(Y=0)}. \\
                \end{cases}
            \end{align*}
            Therefore, the contribution (expressed in percentage) of the features to the sensitivity and specificity are identical:
            \begin{align*}
                    \frac{\phi_j^{sensi}}{\mathbb{E}_{y,\mathbf{x}}\left(G_{sensi}(y_i;\mathbf{x}_i;\delta_0)\right)- \phi_0^{sensi}} 
                    &= \frac{\tilde{\phi}_j}{\mathbb{E}_{y,\mathbf{x}}\left(y_i\hat{f}(\mathbf{x}_i )\right) - \mathbb{E}_{y}\mathbb{E}_{\mathbf{x}}\left(y_i\hat{f}(\mathbf{x}_i )\right)} \\
                    &= \frac{\phi_j^{speci}}{\mathbb{E}_{y,\mathbf{x}}\left(G_{speci}(y_i;\mathbf{x}_i;\delta_0)\right)- \phi_0^{speci}}. 
            \end{align*}
            \end{proof}
            
    \clearpage

    \section{XPER decomposition of the $R^2$ in a linear regression model}

    \subsection{Interpretation of the benchmark} \label{Appendix:Benchmark}

    One of the main advantages of the XPER decomposition is the intuitive interpretation of the benchmark $\phi_0$. This value represents the expected performance metric in a hypothetical scenario where the model is applied (without re-estimation) to a dataset in which the target variable $y$ is independent of all the features $\mathbf{x}$. 
        
    \noindent When using the $R^2$ performance metric for a linear regression model, this benchmark value corresponds to $\phi_0 = -R^2$. This result arises from the fact that the model $\hat{f}(\mathbf{x})$ is treated as fixed and pre-estimated on a training sample, while the XPER decomposition is applied to the test sample. If we were to \textit{estimate} the model $\hat{f}(\mathbf{x})$ on a hypothetical sample where $y$ is independent of $\mathbf{x}$, the benchmark $R^2$ would be zero asymptotically. However, in our case, the model is estimated on the training set where $y$ and $\mathbf{x}$ are correlated. We then apply this \textit{pre-trained} model to a hypothetical test set with spurious regressors (i.e., where $y$ is independent of $\mathbf{x}$), without re-estimating the model. This procedure yields a negative $R^2$, which reflects the opposite of the $R^2$ obtained on the test sample with real (correlated) data.
        
        \smallskip

        \noindent To illustrate this point, consider the linear regression model given by $y_i = \beta_0 + \beta_1 x_{1,i} + \beta_2 x_{2,i} + \varepsilon_i$, where the parameters are set to $\{\beta_0, \beta_1, \beta_2\} = \{0, 1, 3\}$. The features $\mathbf{x}_i=(x_{1,i},x_{2,i})$ are i.i.d. with $\mathbb{V}(x_j) = 1$, and the errors $\varepsilon_i$ are i.i.d. following a normal distribution $\mathcal{N}(0, \sigma^2_\varepsilon)$ with $\sigma^2_\varepsilon = 10$. We simulate a sample of size $n = 4,000$, using the first $3,000$ observations as the training sample and the remaining ones as the test sample. We estimate the model $y_i = \mathbf{x}_i \beta + \varepsilon_i$ using the training sample, resulting in the following parameter estimates $\{\hat{\beta}_0, \hat{\beta}_1, \hat{\beta}_2\} = \{-0.0500,0.9767, 3.0166\}$. The corresponding $R^2$ value for the training sample is $R^2_{\text{train}}=0.4994$.
        
        \smallskip

        \noindent Next, we use the estimated model to make predictions $\hat{y}_i$ on the test sample. We obtain an $R^2$ equal to $R^2_{\text{test}} = 0.5544$. Then, we compute the XPER values on the test sample, yielding the following decomposition:
        \begin{equation*}
            \underbrace{0.5544}_{R^2} = \underbrace{-0.4761}_{\hat{\phi}_0} + \underbrace{0.0948}_{\hat{\phi}       _1} + \underbrace{0.9357}_{\hat{\phi}_2}.
        \end{equation*}
        To understand the interpretation of the benchmark value $\hat{\phi}_0$, we propose the following experiment. We simulate two spurious regressors $z_1$ and $z_2$, with $\mathbb{V}(z_j) = 1$, that are uncorrelated with the target variable $y$, and with the features $x_1$ and $x_2$. 
        \begin{enumerate}
            \item We \textit{estimate} a linear regression model $y_i = \gamma_0 + \gamma_1 z_{1,i} + \gamma_2 z_{2,i} + \mu_i$, where $y_i$ corresponds to the simulated variable generated from the true DGP $y_i=\beta_0 + \beta_1 x_{1,i} + \beta_2 x_{2,i} + \varepsilon_i$, on a sample of size $n=3000$. We then apply this model on the test set of size $n=1000$ and compute the $R^2$. As expected, we obtain a value close to 0, specifically  $R^2_{\text{test}} = -0.0030$. The corresponding parameter estimates are close to zero with $\{\hat{\gamma}_0, \hat{\gamma}_1, \hat{\gamma}_2\} = \{-0.0202, -0.0424, 0.0939\}$.

            \smallskip
            
            \item We \textit{apply} the estimated model obtained from the training sample, i.e., $y_i=\hat{\beta}_0 + \hat{\beta}_1 x_{1,i} + \hat{\beta}_2 x_{2,i}$, with $\{\hat{\beta}_0, \hat{\beta}_1, \hat{\beta}_2\} = \{-0.0500,0.9767, 3.0166\}$ to a hypothetical test set where the model features are uncorrelated with the target. Formally, we set $x_1 = z_1$ and $x_2 = z_2$ and compute the fitted values $\hat{y}_i = -0.0500 + 0.9767 z_{1,i} + 3.0166 z_{2,i}$ using the parameters estimated from the training sample with the ``true'' regressors $x_1$ and $x_2$. It is as if the variables used to estimate the model in the training sample had suddenly become irrelevant in the test sample. The corresponding $R^2$ value is $-0.4538$, which is close to the benchmark estimate $\hat{\phi}_0$, and nearly the opposite of the test $R^2$ initially obtained on the test set. Indeed, the fact that the constant term in the regression is not re-estimated induces a negative $R^2$.
        \end{enumerate}

        \subsection{Interpretation of the individual XPER values}
        % \noindent \textbf{Illustration: XPER decomposition of the $R^2$ in a linear regression model}

        \label{R2_ind_XPER}
        Consider a linear regression model and the $R^2$ as sample performance metric such that $y_i = \mathbf{x}_i^T\beta + \varepsilon_i$,	with $\varepsilon_{i}$ i.i.d $ \mathcal{N}(0,4)$. We consider three i.i.d. features such that $\mathbf{x}_i^T=(x_{i,1},x_{i,2},x_{i,3})$ and $\mathbf{x} \sim \mathcal{N}(\mathbf{0},\mathbf{\Sigma})$ with $diag(\mathbf{\Sigma})=(8,4,1)$. The true vector of parameters is $\{\beta_0,\beta_1,\beta_2,\beta_3\} = \{0.2,1,1,1\}$ with $\beta_0$ the intercept. 

        \begin{table}[htbp]
          \centering
          \caption{Illustration of $R^2$ XPER values in a three-fold standard linear model}
            \begin{tabular}{ccccccccc}
            \hline
            & $G(y_i; \mathbf{x}_i; \hat{\delta}_n)$ & $\hat{\phi}_0$ & $\hat{\phi}_1$ & $\hat{\phi}_2$ & $\hat{\phi}_3$ & $y_i$ & $\hat{y}_i$ & $(y_i - \hat{y}_i)^2$  \\
            \hline
            i = 1 & 0.7875 & -0.3388 & 0.9287 & -0.2303 &  0.4279 &  3.2871 &  1.3815  &  3.6312 \\
            i = 2  &  0.0151 & 0.2491 & -0.2697 & 0.0454 & -0.0097 & -0.3773 & -4.4793  & 16.8266 \\
            i = 3  &  0.5367 & 0.0830 & -0.0161 & 0.1953 & 0.2745 & -1.6622 &  1.1513  &  7.9157 \\
            i = 4  &  0.9992 & -1.0697 & 1.8839 & -0.0317 & 0.2166 & 4.8534 &  4.7354  &  0.0139 \\
            i = 5  &  0.8478 & 0.1834 & 0.4011 & 0.0691 & 0.1942 & 1.2401  & -0.3725  &  2.6007 \\
            ...   &   ...   &  ...   &  ...  &   ...  &   ...  &   ...   &   ...   &    ... \\
            i = 996 &  0.4523 & 0.2511 & 0.1402 & 0.0947 & -0.0336 & -0.3395 & -3.3984  &  9.3570 \\
            i = 997 & 0.8812 & 0.2219 & 0.2834 & 0.2266 & 0.1494 & -0.7394 &  0.6851  &  2.0290 \\
            i = 998 & 0.9627 & -1.1944 & 0.9645 & 0.6049 & 0.5876 & -4.9032 & -4.1044  &  0.6380 \\
            i = 999 & 0.9607 & 0.2520 & 0.3705 & 0.2684 & 0.0698 & -0.3202  & 0.4997  &  0.6722 \\
            i = 1,000 & 0.9203 & 0.2449 & 0.5788 & 0.0221 & 0.0745 & 0.6174 & -0.5495 &  1.3617 \\
            \hline
            & 0.7629 & -0.7385 & 0.9649 & 0.4202 & 0.1163 & 0.1138 & 0.0843 & 4.0504 \\
            \hline
            \end{tabular}
          \label{R2_phi_i_j}
        \end{table}%

        \smallskip
        
	    \noindent We simulate a sample of size $2,000$. We use the first $T=1,000$ observations to estimate the model and the remaining ones as test sample $S_n$. Consider a simulation for which the estimated parameters are equal to $ \{\hat{\beta}_0,\hat{\beta}_1,\hat{\beta}_2,\hat{\beta}_3\} = \{0.1395, 1.0208, 0.9743, 1.0434\}$. The objective is to evaluate the contribution of each feature to the $R^2$ of the model $\hat{f}(\mathbf{x}_i)=\mathbf{x}_i^T\hat{\beta}$, computed on the test sample $S_n$. The estimated $R^2$ and feature contributions are the following:
    	\begin{equation*}
    		\underbrace{0.7629}_{R^2} =  \underbrace{-0.7385}_{\hat{\phi}_0} + \underbrace{0.9649}_{\hat{\phi}_1} + \underbrace{0.4202}_{\hat{\phi}_{2}} + \underbrace{0.1163}_{\hat{\phi}_{3}}.
    	\end{equation*}
	    As expected, the estimated benchmark $\hat{\phi}_0$ is negative. The difference between the sample $R^2$ and the benchmark is explained by the contribution of the features. We verify that the features contributing the most to the $R^2$ are, in decreasing order, $x_{1}$, $x_{2}$, and $x_{3}$. Given Equation \eqref{R2_theory}, differences in $\phi_j$ values across features come from either $\hat{\beta}_j$ or $\sigma_{y,x_j}$. Here, the ranking of XPER values is the same as the ranking of the covariances of the feature with the target variable as $\sigma_{y,x_1} > \sigma_{y,x_2} > \sigma_{y,x_3}$. 

        \smallskip
        
	    \noindent Local analysis of feature contributions to the $R^2$ is detailed in Table \ref{R2_phi_i_j}. In the second column, we report the individual contributions to the $R^2$ on the test sample. By definition, the average of individual contributions corresponds to the (test) sample $R^2$ equal to $0.7629$. Individuals for which the contribution $G(y_i;\mathbf{x}_i;\hat{\delta}_n)$ is larger than the $R^2$ are those for which the squared residual (reported in last column) is lower than MSE. For instance, individual $i=1$ has a contribution to the $R^2$ ($0.7875$) larger than the sample $R^2$ ($0.7629$) as well as a squared residual ($3.6312$) lower than the MSE ($4.0504$). Individual $i=4$ contributes more to the $R^2$ than individual $i=5$ ($0.9992$ > $0.8478$) because its squared residual is smaller than the latter  ($0.0139$ < $2.6007$). Thus, contribution $G(y_i;\mathbf{x}_i;\hat{\delta}_n)$ measures the ability of the model (as measured by the $R^2$) to  predict the target variable $y$ for a given individual on the test sample.

        \smallskip

	    \noindent Individual feature contributions to the $R^2$ are reported in columns 4, 5, and 6. A positive contribution $\phi_{i,j}$ means that the feature $x_j$ tends to improve the predictive ability  of the model for individual $i$, as measured by the $R^2$. For instance, for individual $i=4$ feature values $x_{4,1}$ and $x_{4,3}$ (respectively $4.8040$ and $0.5165$) contributes to reduce its squared residual, so $\phi_{4,1}$ and $\phi_{4,3}$ are positive (respectively $1.8839$ and $0.2166$). On the contrary, the second feature value $x_{4,2}$ ($-0.8694$) reduces the contribution to the $R^2$ of this individual, i.e., $\phi_{4,2}=-0.0317 < 0$.  The intuition behind this result is as follows: (i) The realization of feature $x_2$ is inferior to the average ($-0.8694$ < $0.0100$) and its coefficient $\hat{\beta}_2$ in the model is positive. (ii) Hence, $x_2$ tends to decrease the prediction $\hat{y}_4$ compared to the average whereas we observe that the target value $y_4$ ($4.8534$) is larger than the average ($0.1138$). (iii) Therefore, feature $x_2$ increases prediction error and so hinders model performance. 

        \begin{figure}[!ht]
            \centering
            \includegraphics[width=1\linewidth]{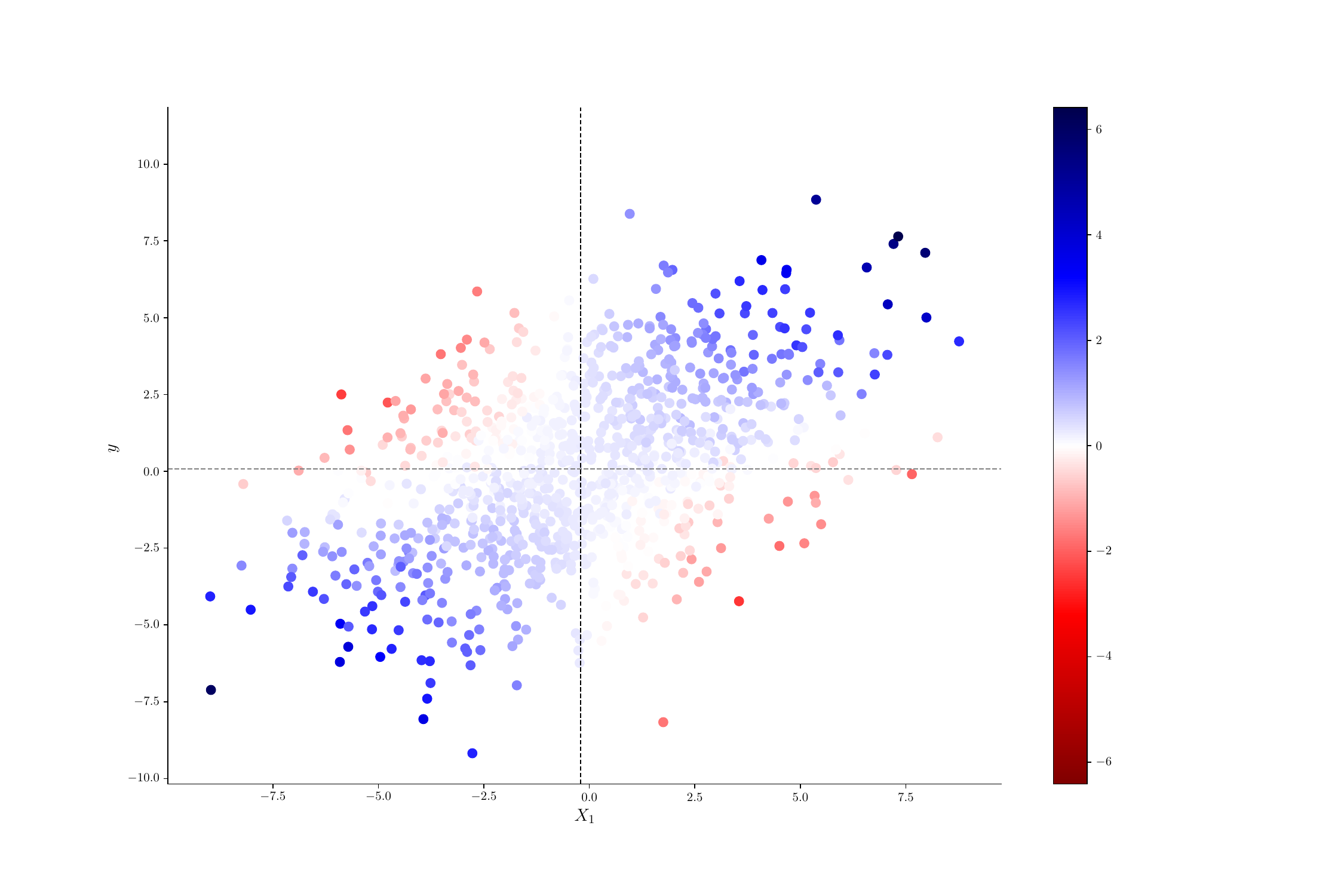}
            \caption{$R^2$ XPER values $\phi_{i,1}$ in a three-fold model}
            \begin{flushleft}
            {\footnotesize Note: This figure displays XPER values $\phi_{i,1}$ as a function of feature $x_1$ (x-axis) and target variable (y-axis). The vertical dotted line refers to the expected value of feature $x_1$ and the horizontal dotted line the one of the target variable.
            \par}
            \end{flushleft}
            \label{fig:R2_SHAP_X1_y}
        \end{figure}

        \begin{figure}[!ht]
            \centering
            \includegraphics[width=1\linewidth]{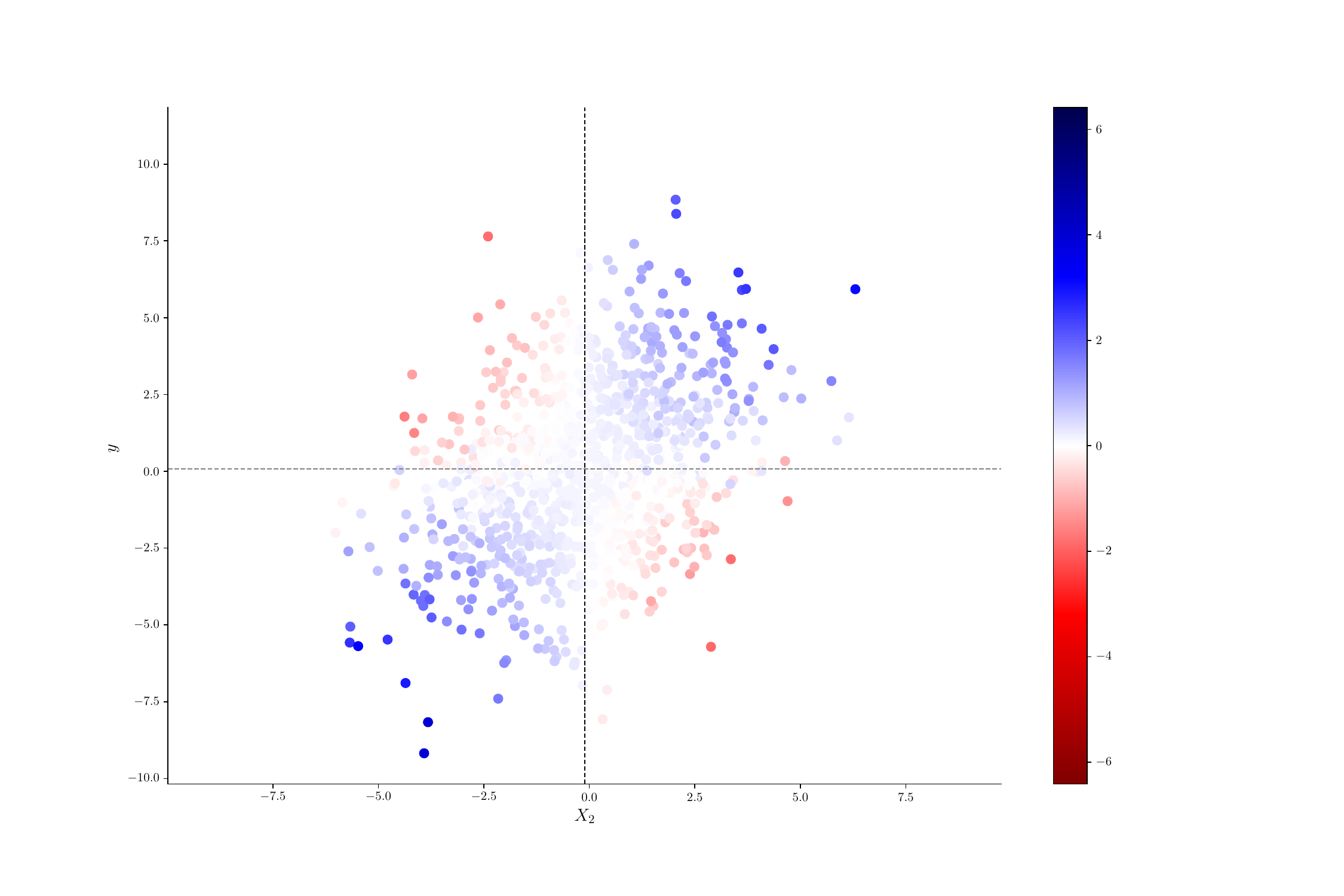}
            \caption{$R^2$ XPER values $\phi_{i,2}$ in a three-fold model}
            \medskip \medskip
        
        \justifying \noindent {\footnotesize Note: This figure displays XPER values $\phi_{i,1}$ as a function of feature $x_1$ (x-axis) and target variable (y-axis). The vertical dotted line refers to the expected value of feature $x_1$ and the horizontal dotted line the one of the target variable.}
            \label{fig:R2_SHAP_X2_y}
        \end{figure}
        
        \smallskip
        
    	\noindent One advantage of this local analysis is to reveal heterogeneity of the model predictive performance. The model better predict the target variable $y$ for some individuals than for others. Our methodology allows to understand why the model is more or less suited for individuals in the test sample, through individual feature contributions $\phi_{i,j}$. 
    	The scatter plot in Figure \ref{fig:R2_SHAP_X1_y} illustrates this heterogeneous analysis. We report the realizations of feature $x_1$ on the x-axis and target values on the y-axis. Blue (red) dots refer to positive (negative) individual feature contributions. We verify that the closer is the characteristic $x_1$ to its average, the less it contributes to the $R^2$, as indicated by the fading color of the dots. On the contrary, when the characteristic $x_1$ moves away from its expected value, individual feature contributions are either positive or negative. In order to understand the intuition behind this result let us assume that the target variable represents the consumption and the first feature corresponds to wages. In the estimated model, the consumption is positively correlated to wages as $\hat{\beta}_1=0.1395> 0$. Therefore, according to the model, individuals with large wages (compared to the average) are likely to have large consumption (compared to the average). Consider an individual with a large (low) wage and a large (low) consumption, i.e., an individual belonging to the  top right (bottom left) panel. In this case, the wage contribution to the $R^2$ is positive (blue dots) as the observed consumption is consistent with the model prediction. On the contrary, if an individual has an above-average salary (top left panel), then this feature misleads the model and thus the corresponding XPER value of the wage turns negative (red dots). 

        \smallskip
        
    	\noindent Finally, the larger is the feature deviation from the average, the larger is its contribution to $G(y_i;\mathbf{x}_i;\hat{\delta}_n)$. This result is highlighted in Figure \ref{fig:R2_SHAP_X1_y} by the darkest dots associated with individuals with both, large deviation from the average and large contribution to $G(y_i;\mathbf{x}_i;\hat{\delta}_n)$. Therefore, the larger the variance of the feature, the larger is its contribution, in absolute value, to the $R^2$. As illustrated by Figures \ref{fig:R2_SHAP_X1_y} and \ref{fig:R2_SHAP_X2_y}, given that $\mathbb{V}(x_1) > \mathbb{V}(x_2)$, feature $x_1$ contributions are larger than feature $x_2$. The corresponding weights are reported in Table \ref{Global_analysis} in the paper. 
        
    \clearpage

    \section{Simulations: Explaining overfitting}\label{Simulations_Appendix} In Appendices \ref{overfit_1} and \ref{overfit_2}, we propose two additional Monte Carlo simulation experiments illustrating how XPER values can be used to detect the origin of overfitting. The latter can arise for at least two reasons: (1) an improper control of the bias-variance trade-off through model hyperparameters, or (2) a shift of the feature distributions between the training and test samples. We illustrate each of these two cases in the following subsections.

    \subsection{Case 1: Improper control of the bias-variance trade-off}
    \label{overfit_1}
    Consider a DGP given by  $y_i = \mathbf{1}(y^*_i>0)$
	% \begin{equation}
	% 	y_i = \left\{ 
	% 	\begin{array}{ll}
	% 		1 & \mbox{if } y_i^{*} > 0 \\ 
	% 		0 & \mbox{otherwise},%
	% 	\end{array}
	% 	\right.  %\label{target_AUC_Y}
	% \end{equation}
	with $y_i^{*} = \mathbf{\omega}_i\beta + \varepsilon_{i}$ a latent variable, $\mathbf{\omega}_i=(1:\mathbf{x}^{'}_i)$, and $\varepsilon_{i}$ an i.i.d. error term with $\varepsilon_{i}\sim \mathcal{N}(0,1)$. We consider three independent features such that $\mathbf{x}_i \sim \mathcal{N}(\mathbf{0},\mathbf{\Sigma})$ with $diag(\Sigma)=(1.3,1.2,1.1)$. The true vector of parameters is $\beta=(\beta_0,\beta_1,\beta_2,\beta_3)^{'} = (0.05,0.5,0.5,0.5)^{'}$ with $\beta_0$ the intercept. We generate $K=5,000$ pseudo-samples $\lbrace y_i^s,\mathbf{x}_i^s \rbrace_{i=1}^{T+n}$ of size $1,00$0 using this DGP. Then, we estimate a decision tree using 5-fold cross validation on the first $T=700$ observations of each pseudo-sample and we use the remaining $n=300$ observations as a test sample. In order to intentionally generate overfitting, we impose a minimum tree-depth of 6 nodes for only three features in the model. For each trained model, we implement XPER to decompose the effect of the features on the AUC of the training and test samples. We display in Figure \ref{AUC_overfitting} the empirical distributions of the AUC. As expected, the trained tree models are overfitting the data, illustrated by the relatively low AUC values obtained on the test samples compared to the training samples. The empirical distributions of the XPER values reported in other panels of Figure \ref{illustration_2a_AUC} show that this drop in performance does not come from a particular feature. Indeed, the XPER contributions to the AUC are relatively close between the training and the test sample for all features. Thus, when overfitting is due to an improper control of the bias-variance trade-off, we observe a large decrease of the performance metric along with a stability of XPER values between the training and the test sample. Therefore, XPER can be used as a reverse engineering tool to detect wrong settings of hyperparameters.

		\begin{figure}[!h]
            \begin{subfigure}[b]{0.49\textwidth}
            \begin{center}
                \includegraphics[width=0.6%
			\textwidth,trim={0 0.8cm 0 0},clip]{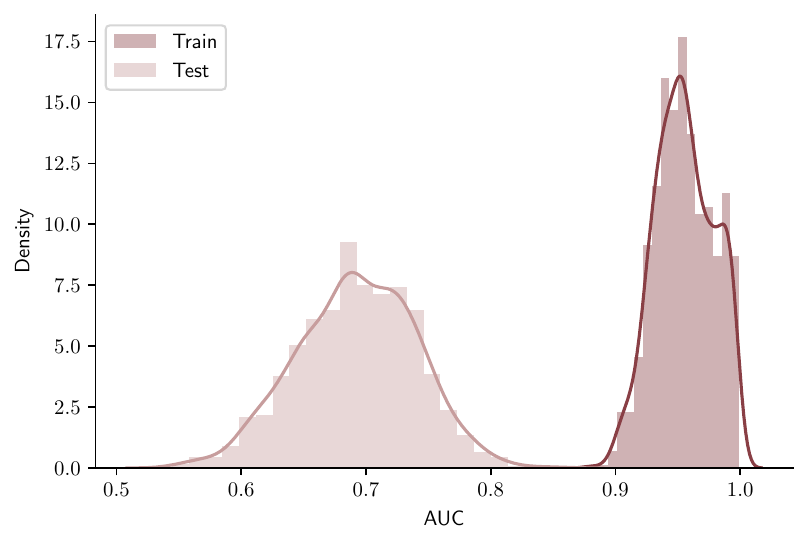}
			\caption{AUC}
			\label{AUC_overfitting}
            \end{center}
			\end{subfigure}
			\hfill
			\begin{subfigure}[b]{0.49\textwidth}
			\makebox[\linewidth]{
			\includegraphics[width=0.6%
			\textwidth,trim={0 0.8cm 0 0},clip]{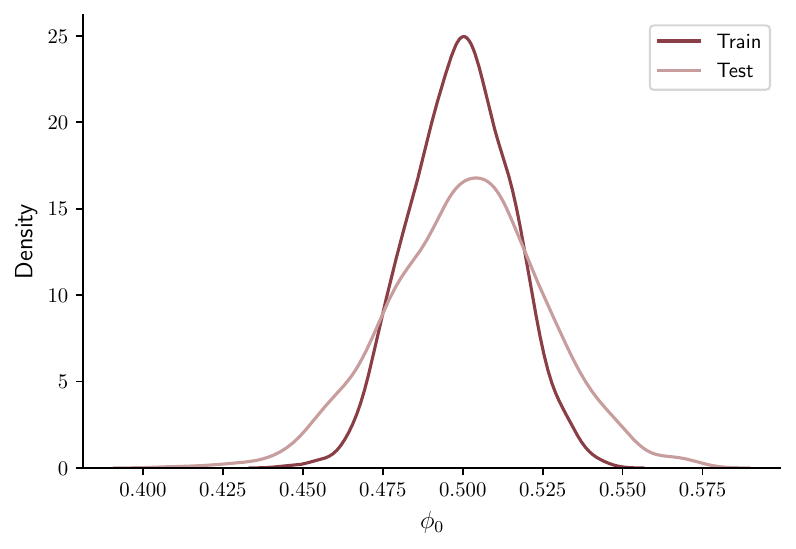}} %
			\caption{$\hat{\phi}_0$}
			\end{subfigure}
			\hfill
			\begin{subfigure}[b]{0.3\textwidth}
			\makebox[\linewidth]{
			\includegraphics[width=1%
			\textwidth,trim={0 0.8cm 0 0},clip]{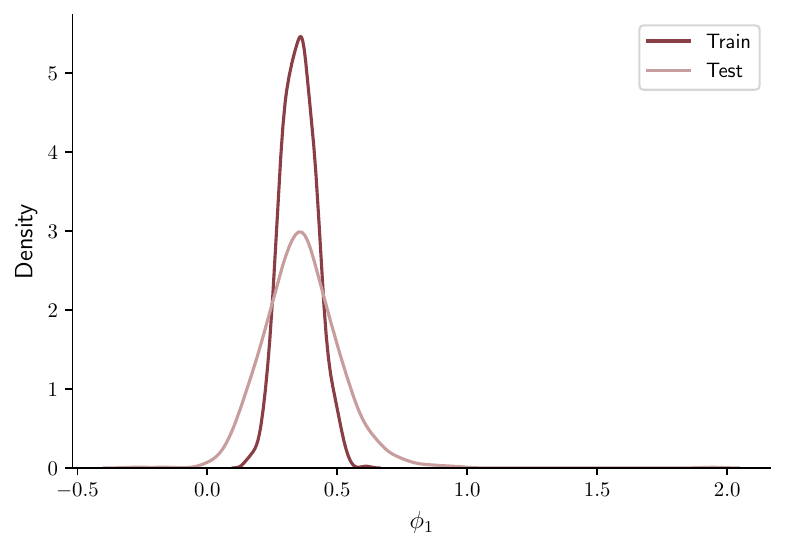}} %
			\caption{$\hat{\phi}_1$}
			\end{subfigure}
			\hfill
			\begin{subfigure}[b]{0.3\textwidth}
			\makebox[\linewidth]{
			\includegraphics[width=1%
			\textwidth,trim={0 0.8cm 0 0},clip]{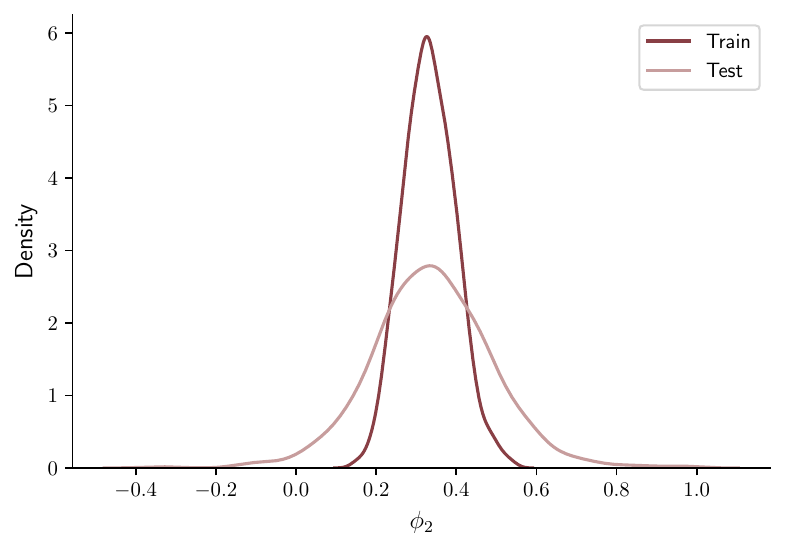}} %
			\caption{$\hat{\phi}_2$}
			\end{subfigure}
			\hfill
			\begin{subfigure}[b]{0.3\textwidth}
			\makebox[\linewidth]{
			\includegraphics[width=1%
			\textwidth,trim={0 0.8cm 0 0},clip]{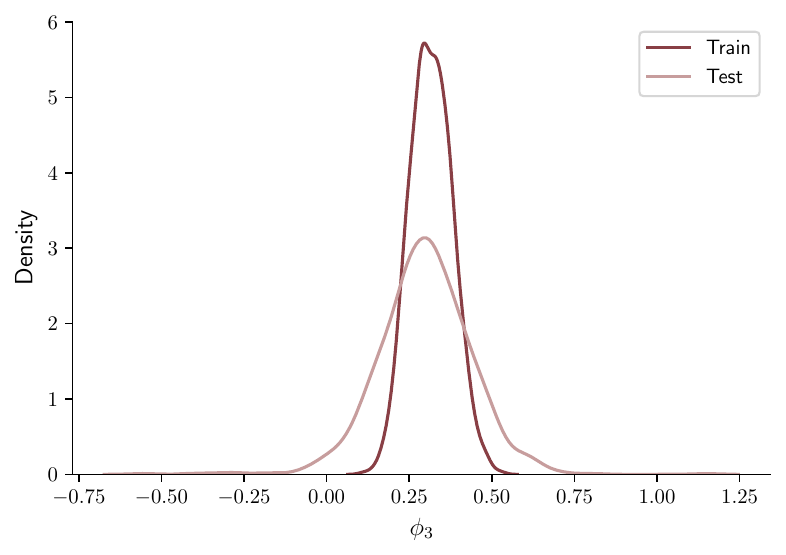}} %
			\caption{$\hat{\phi}_3$}
			\end{subfigure}
      \caption{Empirical distributions of AUC and XPER values in case of overfitting due to improper control of the bias-variance trade-off}
      \label{illustration_2a_AUC}
      \medskip \medskip
        
        \justifying \noindent {\footnotesize Note: This figure displays the empirical distributions of the AUC and XPER values on the training (dark color) and test (light color) sample according to the framework detailed in Illustration 2, case 1. XPER values are divided by the difference between the AUC of the model and the benchmark value to be comparable between the training and the test sample. The solid lines refer to kernel density estimations.}
        %\vspace{-1cm}
	\end{figure}

    \subsection{Case 2: Shift of the feature distribution}
    \label{overfit_2}
     Overfitting can also arise from a shift of the feature distributions between the training and the test sample. To illustrate this origin of overfitting, we consider two distinct DGPs for the training and the test sample. For the former, we keep the same DGP as in the first case. 
    For the test sample, we assume an increase in the variance of the first feature while keeping other parameters unchanged, such that $diag(\mathbf{\tilde{\Sigma}})=(3,1.2,1.1)$. In the context of time series, such shift in the variance can come from a structural change. See \cite{Perron2021} on how to detect forecasting performance changes with structural change tests. As in case 1, we generate $K=5,000$ pseudo-samples $\lbrace y_i^s,\mathbf{x}_i^s \rbrace_{i=1}^{T+n}$ of size 1,000 ($T=700$ and $n=300$). For each pseudo-sample, we estimate a decision tree with a depth between 1 to 5 using 5-fold cross validation. Setting a relatively low tree depth avoids overfitting due to an improper control of the bias-variance trade-off.

           \begin{figure}[!h]
            \begin{subfigure}[b]{0.49\textwidth}
            \begin{center}
                \includegraphics[width=0.6%
			\textwidth,trim={0 0.8cm 0 0},clip]{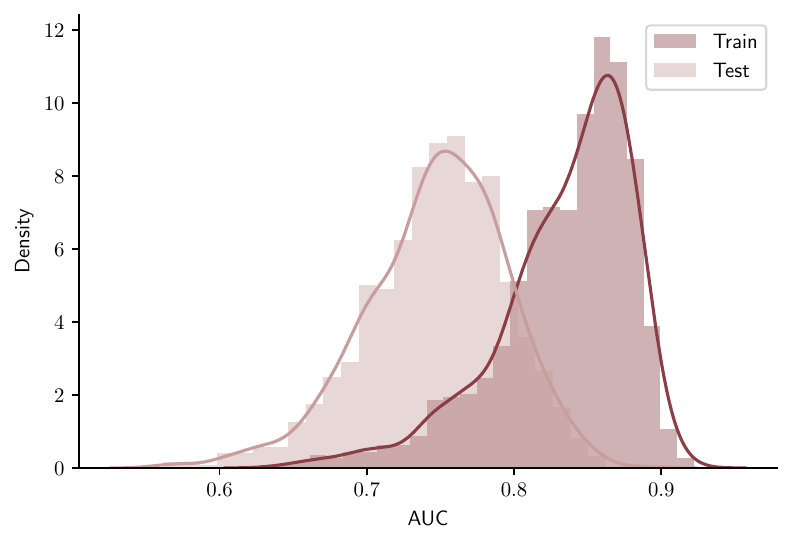} %
			\caption{AUC}
			\label{AUC_overfitting_2}
            \end{center}

			\end{subfigure}
			\hfill
			\begin{subfigure}[b]{0.49\textwidth}
			\makebox[\linewidth]{
			\includegraphics[width=0.6%
			\textwidth,trim={0 0.8cm 0 0},clip]{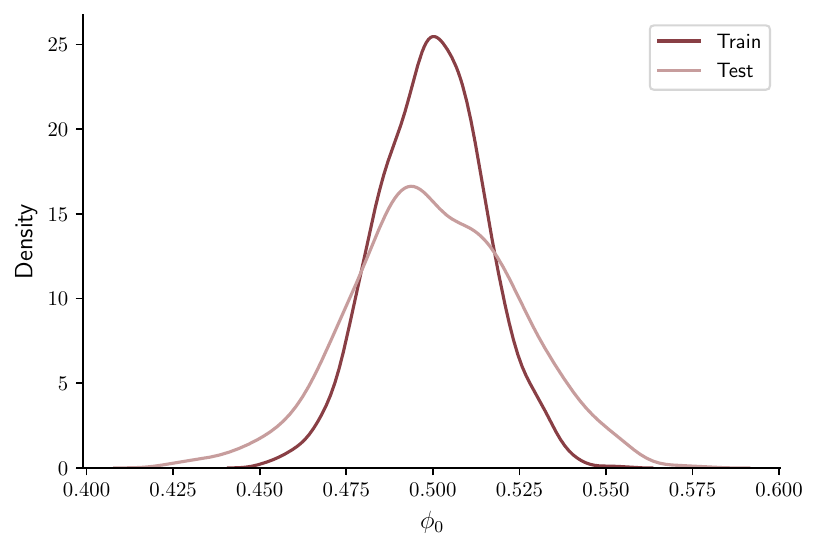}} %
			\caption{$\hat{\phi}_0$}
			\end{subfigure}
			\hfill
			\begin{subfigure}[b]{0.3\textwidth}
			\makebox[\linewidth]{
			\includegraphics[width=1%
			\textwidth,trim={0 0.8cm 0 0},clip]{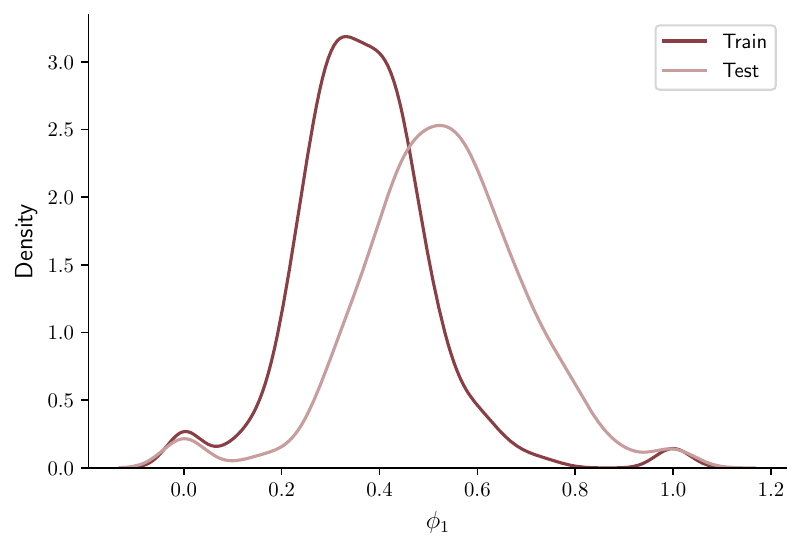}} %
			\caption{$\hat{\phi}_1$}
			\end{subfigure}
			\hfill
			\begin{subfigure}[b]{0.3\textwidth}
			\makebox[\linewidth]{
			\includegraphics[width=1%
			\textwidth,trim={0 0.8cm 0 0},clip]{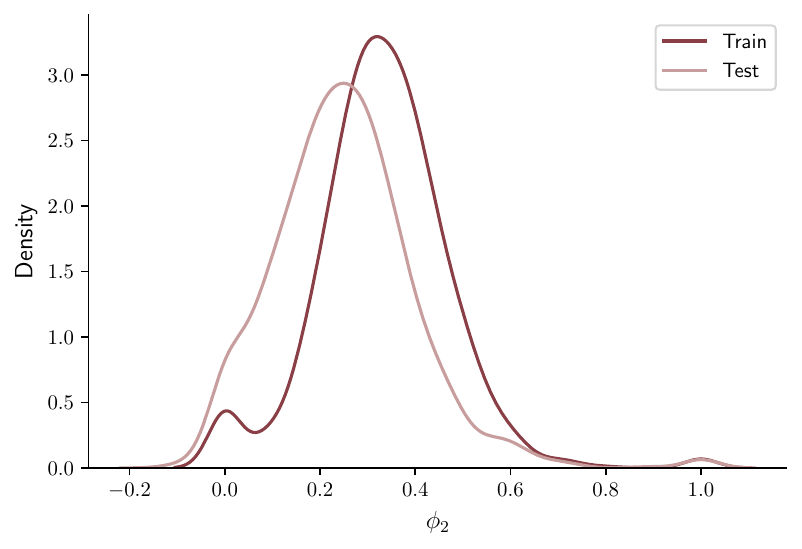}} %
			\caption{$\hat{\phi}_2$}
			\end{subfigure}
			\hfill
			\begin{subfigure}[b]{0.3\textwidth}
			\makebox[\linewidth]{
			\includegraphics[width=1%
			\textwidth,trim={0 0.8cm 0 0},clip]{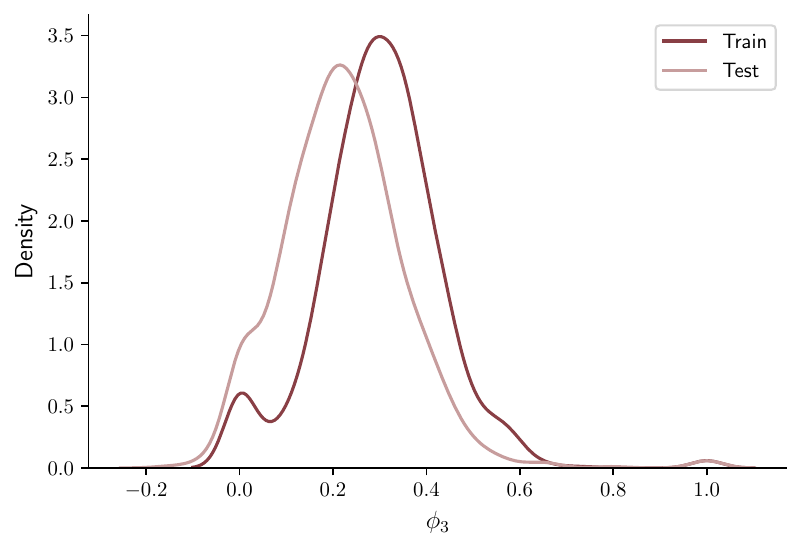}} %
			\caption{$\hat{\phi}_3$}
			\end{subfigure}
   \caption{Empirical distributions of AUC and XPER values in case of overfitting due to a shift of the distribution of the features}
		\label{illustration_2b_AUC}
  \medskip \medskip
        
        \justifying \noindent {\footnotesize Note: This figure displays the empirical distributions of the AUC and XPER values on the training (dark color) and test (light color) sample according to the framework detailed in Illustration 2, case 2. XPER values are divided by the difference between the AUC of the model and the benchmark value to be comparable between the training and the test sample. The solid lines refer to kernel density estimations.}
		 % Note: This figure displays the empirical distributions of the AUC and XPER values on the training (dark color) and test (light color) sample according to the framework detailed in Illustration 2, case 2. XPER values are divided by the difference between the AUC of the model and the benchmark value to be comparable between the training and the test sample. The solid lines refer to kernel density estimations. 
	%\vspace{-0.5cm}
	\end{figure}
 
     In Figure \ref{AUC_overfitting_2}, we observe a decrease in AUC between the training and test samples. Contrary to the previous case, this decrease is due to the shift of the  distribution of $x_1$ which has also an impact on XPER values. More precisely, in Figure \ref{illustration_2b_AUC}, we observe that the contribution of feature $x_1$ to the AUC increases from the training to the test sample whereas the contribution of the other features decreases. Thus, observing both a drop in the performance of the model and some variations in the XPER values from the training to the test sample can indicate a change in the data structure. Such change is not captured by the model and not related to hyperparameter settings.  
     
%\clearpage 

%\label{Prop_SHAP}
% \begin{equation}
%     G(y_i;\mathbf{x}_i;\hat{\delta}_n) = \tilde G(y_i;\hat{f}(\mathbf{x}_i);\hat{\delta}_n).
% \end{equation}
% Assuming that $G(y_i;\mathbf{x}_i;\hat{\delta}_n) = \hat{f}(\mathbf{x}_i)$ we have:
% \begin{equation}
%     G(y_i;\mathbf{x}_i;\hat{\delta}_n) = \tilde G(y_i;\hat{f}(\mathbf{x}_i);\hat{\delta}_n) = \hat{f}(\mathbf{x}_i) .
% \end{equation}
% According to Equation XXX we know that $\hat{f}(\mathbf{x}_i)=\sum_{j=0}^q \phi_{i,j}^{SHAP}$. Therefore, we have:
% \begin{equation}
%     G(y_i;\mathbf{x}_i;\hat{\delta}_n) = \tilde G(y_i;\hat{f}(\mathbf{x}_i);\hat{\delta}_n) = \hat{f}(\mathbf{x}_i) = \sum_{j=0}^q \phi_{i,j}^{SHAP}.
% \end{equation}
% Moreover, from Equation \ref{ind_decompos} we have $G(y_i;\mathbf{x}_i;\hat{\delta}_n) = \sum_{j=0}^q \phi_{i,j}$. Therefore, we obtain:
% \begin{equation}
%     G(y_i;\mathbf{x}_i;\hat{\delta}_n) = \sum_{j=0}^q \phi_{i,j}^{SHAP} = \sum_{j=0}^q \phi_{i,j}
% \end{equation}

\clearpage

\section{Empirical application: Some additional results}

\subsection{Summary statistics and features distribution} 

\begin{table}[htbp]
  \centering
  \caption{Summary Statistics}
  \resizebox{0.9\textwidth}{!}{ 
        \begin{tabular}{l|cccccccc}
    \hline
          & Count & Mean  & Std.  & Minimum & 25\%  & 50\%  & 75\%  & Maximum \vspace{0.1cm} \\ 
    \hline
    Job tenure & 7,440  & 9.3298 & 9.9787 & 0     & 2     & 5     & 15    & 58 \vspace{0.1cm} \\
    Age & 7,440  & 45.1691 & 14.7965 & 18    & 33    & 46    & 55    & 89 \vspace{0.1cm} \\
    Car price & 7,440  & 12,935 & 6,204 & 546 & 8,149 & 11,950 & 16,500 & 47,051 \vspace{0.1cm}\\
    Funding amount & 7,440  & 11,461 & 6,019 & 546 & 6,846 & 10,382 & 15,000 & 30,000 \vspace{0.1cm}\\
    Loan duration & 7,440  & 56.2176 & 19.3833 & 6     & 48    & 60    & 72    & 96 \vspace{0.1cm} \\
    Monthly payment & 7,440  & 0.1051 & 0.0611 & 0.0051 & 0.0690 & 0.0947 & 0.1304 & 2.6300 \vspace{0.1cm} \\
    Downpayment & 7,440  & 0.0897 &       & 0     &       &       &       & 1 \vspace{0.1cm} \\
    Credit event & 7,440  & 0.0220 &       & 0     &       &       &       & 1 \vspace{0.1cm}\\
    Married & 7,440  & 0.5347 &       & 0     &       &       &       & 1 \vspace{0.1cm}\\
    Homeowner & 7,440  & 0.3848 &       & 0     &       &       &       & 1 \vspace{0.1cm}\\
    Default & 7,440  & 0.2000 &       & 0     &       &       &       & 1 \vspace{0.1cm}\\
    \hline
    \end{tabular}}
  \label{tab:summary_stats}%
  \medskip \medskip
        
        \justifying \noindent {\footnotesize Note: This table displays summary statistics for each feature used in the XGBoost model as well as the target variable. For each categorical feature, the standard deviation (Std.) and the quartiles (25\%, 50\% and 75\%) are not displayed.}
        \vspace{-0.5cm}
\end{table}%

\bigskip

\begin{figure}[!h]
			\begin{center}
			    \includegraphics[width=0.65%
			\textwidth,trim={0 0cm 0cm 0cm},clip]{./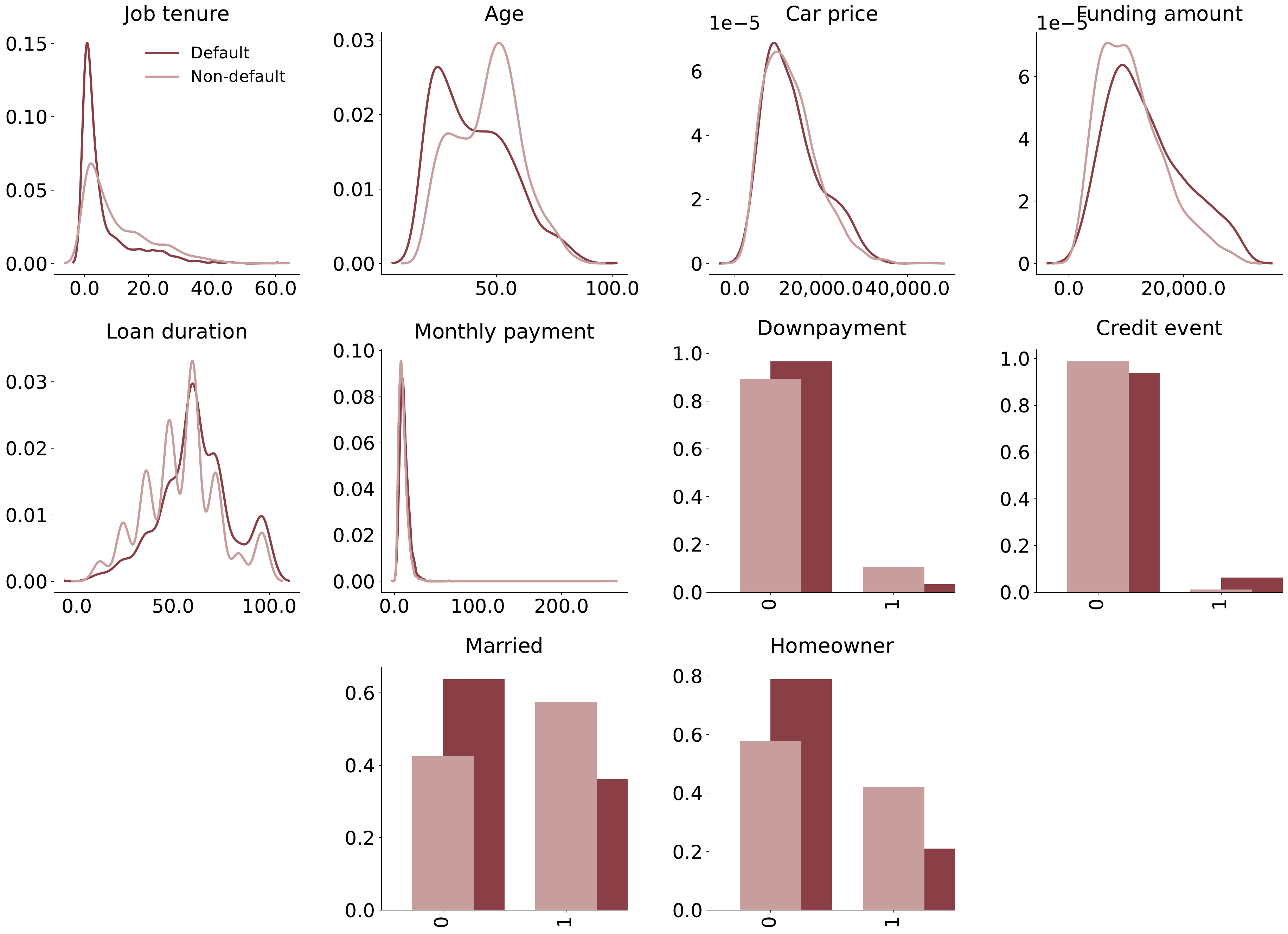} %
		\caption{Features distribution by default class}
  \label{fig:FeatureDistributions_by_default}
			\end{center} 
   \medskip \medskip
        
        \justifying \noindent {\footnotesize Note: This figure displays the distribution of the features by default class on the training sample, using kernel density estimation for continuous features. Dark red refers to defaulting borrowers and light red to non-defaulting borrowers.}
	\end{figure}

\clearpage
\subsection{XPER decomposition}
%\vspace{-0.5cm}
	\begin{figure}[!h]
		\begin{center}
		    			\includegraphics[width=0.45%
			\textwidth,trim={0 0 0 0},clip]{./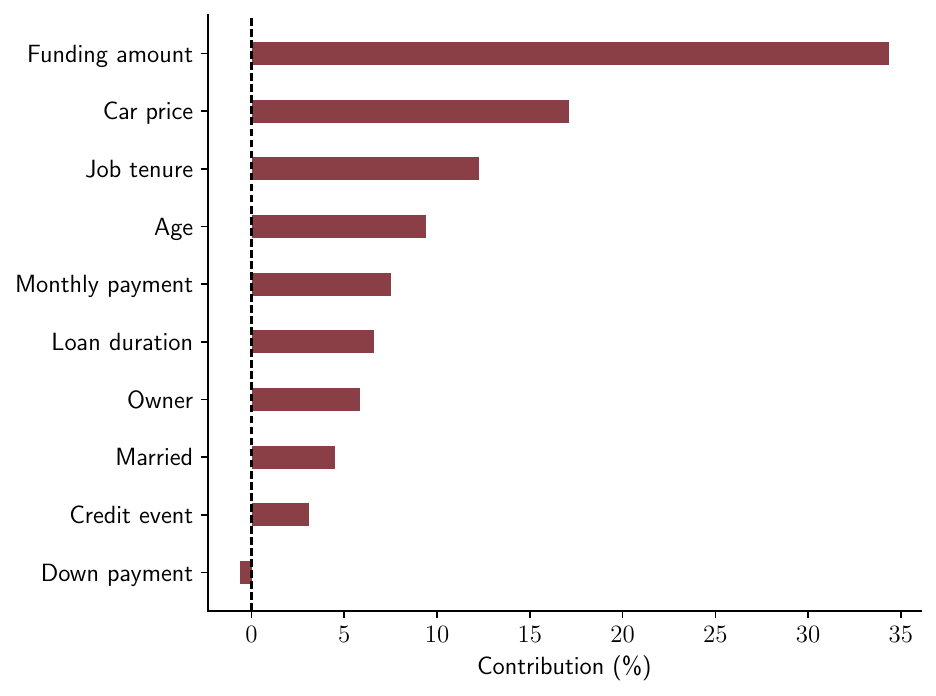}
             \caption{XPER decomposition of the AUC}
             \label{fig:AUC_ESHAP_Empirical_train} 
		\end{center}
     \begin{flushleft}
        {\footnotesize Note: This figure displays the XPER values of the AUC of the XGBoost model estimated on the training sample.\par}
        \end{flushleft}
        \vspace{-0.75cm}
	\end{figure}

\subsection{XPER decomposition for various predictive performance metrics}

\noindent In this section, we show that we can easily calculate and compare XPER values across different performance metrics. To illustrate this, we have applied XPER to decompose all the classification performance metrics listed in Table \ref{all_perfs}: AUC, Brier score, accuracy, balanced accuracy, sensitivity, specificity, and precision. Figure \ref{fig:XPER_all_metrics} displays the XPER values for these seven metrics, highlighting several key insights.
    
\noindent First, some model features have contrasting effects on different metrics. For instance, \textit{Car price} negatively impacts sensitivity on the test set, as indicated by its negative XPER value. This suggests it reduces the model's ability to correctly identify true credit defaults. However, this feature improves precision, which measures the proportion of true positive predictions out of all positive predictions. This example demonstrates how XPER values offer a nuanced understanding of how features affect performance across various metrics.
    
\noindent Second, the XPER values (expressed as percentages) for accuracy, balanced accuracy, sensitivity, and specificity are exactly equal. This equivalence can be attributed to the shared underlying structure of these metrics. As detailed in Table \ref{all_dec}, the differences between these metrics and their benchmarks depends on the covariance between the target variable and the predictions, $Cov(y,\hat{f}(x))$, along with a normalization factor independent of the features. For example, this normalization factor is $1/\mathbb{P}(Y=1)$ for sensitivity and $1/\mathbb{P}(Y=0)$ for specificity (see Appendix \ref{sensitivity_specificity} for more details). Thus, XPER measures the contribution of features to this normalized covariance, explaining why the values are identical when expressed as percentages, as the normalization factor cancels out.
    
\noindent Lastly, regardless of the metric considered, two features —\textit{Funding amount} and \textit{Job tenure}— account for a significant portion of the model's predictive performance, explaining at least 56\% of the difference between the performance and its benchmark value. Beyond these top-ranked features, the ranking of other features varies across metrics. For example, \textit{Car price} ranks as third and fourth for AUC and precision, respectively, but is either the last or second-to-last for other metrics. Similarly, \textit{Age} ranks fourth for AUC and ninth for precision, highlighting the diverse influence features can have across different metrics.
        
         \begin{figure}[!ht]
            \begin{subfigure}[b]{1\textwidth}
            \makebox[\linewidth]{
			\includegraphics[width=0.7%
			\textwidth,trim={0 0 0 0},clip]{./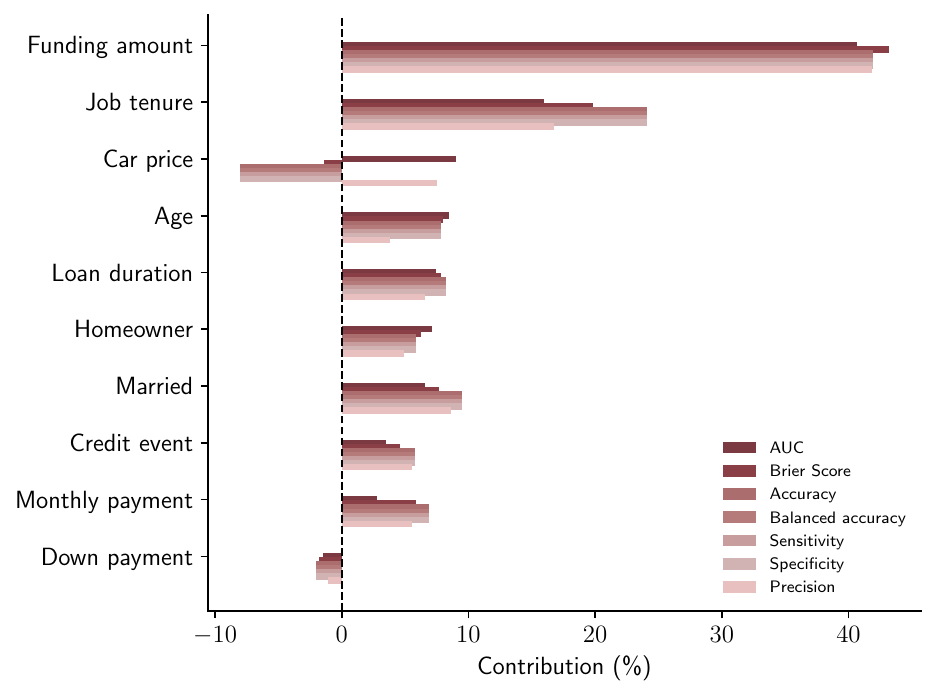}}
		      \label{fig:XPER_all_metrics}
			\end{subfigure}
			\hfill
            \vspace{-1cm}
            \caption{XPER decompositions for seven predictive performance metrics}
            \label{fig:XPER_all_metrics}
            \medskip \medskip
        
        \justifying \noindent {\footnotesize Note: This figure displays the XPER values of seven classification performance metrics for the XGBoost model: (1) AUC, (2) Brier Score, (3) Accuracy, (4) Balanced accuracy, (5) Sensitivity, (6) Specificity, and (7) Precision. The results are obtained on the test sample.}
            \vspace{-0.75cm}
	      \end{figure}

\subsection{XPER decomposition for imbalanced datasets}

\label{appendix:imbalanced}

The issue of imbalanced data is well known in machine learning, particularly in credit scoring, as it affects both the learning capacity of models (predictions tend to favor the majority class) and the validity of certain evaluation metrics (e.g., accuracy, sensitivity, specificity). However, these are not the only consequences. A recent study by \cite{Chen2024} highlights another issue: the instability of interpretability methods such as LIME and SHAP under extreme class imbalance. Specifically, they demonstrate that feature importance rankings generated by these methods become unstable as class imbalance increases, and they also observe greater variability in the absolute SHAP values for the same feature in low-default scenarios.
        
            \medskip
            
\noindent Recognizing the importance of this issue, we now show how XPER can be used and behaves for imbalanced datasets. First, we illustrate how the model’s performance and its XPER decomposition vary when the average default rate changes from 5\% to 30\% by applying undersampling to the initial dataset. Second, we apply XPER decomposition to sensitivity and specificity, and compare the results with those obtained for the AUC (see Figure \ref{fig:AUC_ESHAP_Empirical}).

            \medskip
            
\noindent \textbf{Varying the default rate.} We report in Table \ref{tab:XGBoost_Performances_def} and Figure \ref{fig:XPER_AUC_Defaults}, the AUC and the corresponding XPER decomposition obtained for our credit scoring empirical application when the average default rate is equal to 30\%, 20\%, 10\%, and 5\%. To decrease the default rates from the initial sample (20\%), we implement random undersampling, which involves removing some default observations from the initial sample at random. Conversely, to increase the default rate to 30\%, we remove non-default observations from the sample. Our findings are as follows: (i) the AUC varies between 0.7154 and 0.7884, (ii) significant variations in XPER values are observed when the default rate is as low as 5\%, compared to the other rates. Interestingly, even when the default rates are set to 5\% and 30\%, producing similar AUC values (0.7174 vs. 0.7154), the XPER decompositions differ significantly. This discrepancy arises because two distinct models can achieve the same AUC: (i) one that accurately predicts most defaults but performs poorly on non-defaults, and (ii) one that performs well on non-defaults but poorly on defaults. This demonstrates why it is essential to also evaluate the model’s sensitivity and specificity.
\bigskip 

            \begin{table}[ht!]
\centering
\caption{Model performances for different default rates}
\resizebox{0.8\textwidth}{!}{ 
\begin{tabular}{c|cccccc} \hline
    Default Rate & AUC & Brier Score  & Accuracy & BA & Sensitivity & Specificity \\ \hline
    5\% & 0.7174 & 0.0479  & 71.74 &  50.45 & 1.06 & 99.83\mbox{ } \\ 
    10\% & 0.7884 & 0.0771  & 90.53 &  61.66 & 25.50 & 97.82\mbox{ } \\ 
    20\% & 0.7521 & 0.1433  & 79.53 &  58.69 & 23.99 & 93.39\mbox{ } \\ 
    30\% & 0.7154 & 0.195  & 70.83 &  61.99 & 39.91 & 84.07\mbox{ } \\\hline
\end{tabular}}
\label{tab:XGBoost_Performances_def}
\medskip \medskip
        
        \justifying \noindent {\footnotesize Note: This table displays the performance metrics of an XGBoost model trained on datasets with default rates varying from 5\% to 30\%. BA stands for Balanced Accuracy.}
\end{table}

         \begin{figure}[ht!]
            \begin{subfigure}[b]{0.5\textwidth}
            \makebox[\linewidth]{
			\includegraphics[width=1%
			\textwidth,trim={0 0 0 0},clip]{./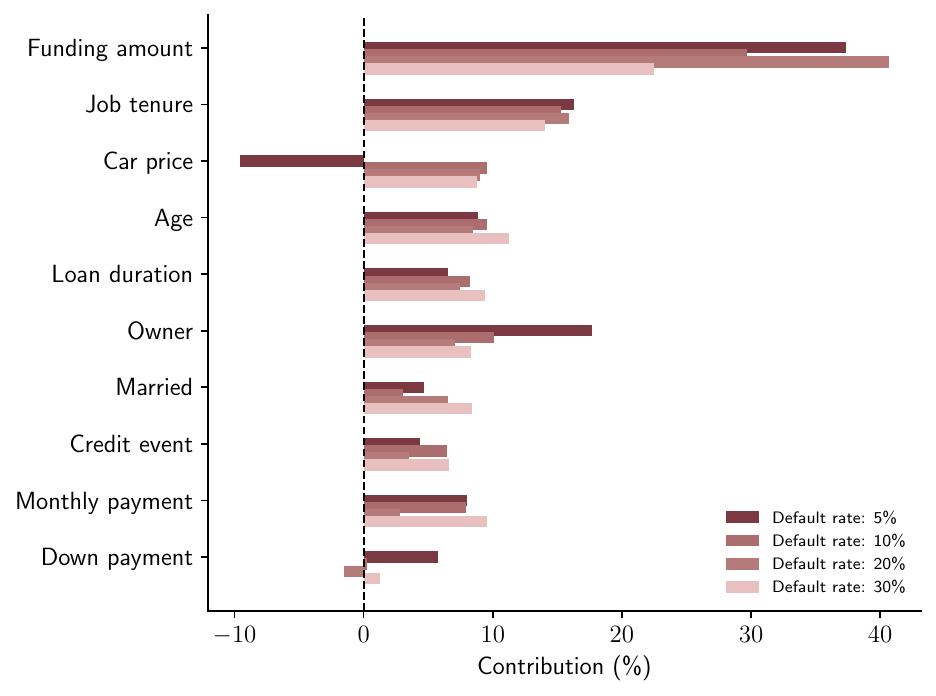}}
		\caption{AUC}
            \label{fig:XPER_AUC_Defaults}
			\end{subfigure}
			\hfill
			\begin{subfigure}[b]{0.5\textwidth}
			\makebox[\linewidth]{
			\includegraphics[width=1%
        			\textwidth,trim={0 0 0 0},clip]{./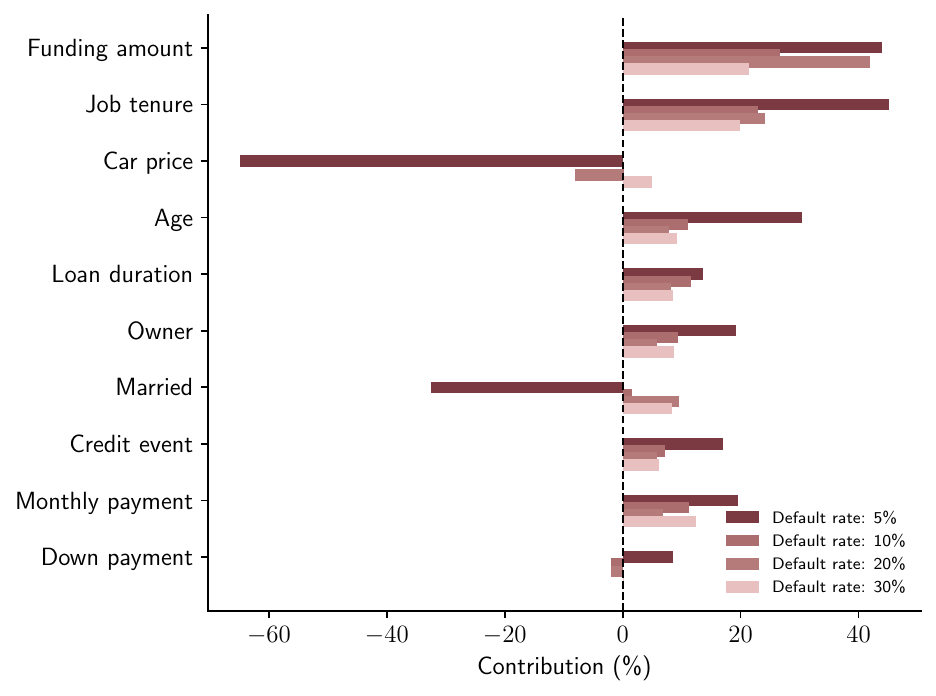}} 
        			\caption{Sensitivity/Specificity} 
           \label{fig:XPER_Sensitivity_Defaults}
			\end{subfigure}
   \hfill
   \vspace{-0.5cm}
    \caption{XPER decomposition of the AUC and Sensitivity/Specificity for multiple default rates}
    \label{fig:all}
     \medskip \medskip
        
        \justifying \noindent {\footnotesize Note: This figure displays the XPER values of the AUC (Panel (a)) and Sensitivity/Specificity (Panel (b)) considering four different test samples with default rates of 5\%, 10\%, 20\%, and 30\%. The results for the sample with a 20\% default rate for the AUC correspond to those shown in Figure 2a. Note that the results in Panel (b) correspond to both Sensitivity and Specificity, as we obtain identical XPER values (in \%) for these two performance metrics.}
%\vspace{-0.75cm}
	\end{figure}
            %\medskip
            %\vspace{-0.5cm}
            \noindent \textbf{Sensitivity and specificity.} We also decompose the sensitivity and specificity with XPER across different imbalance rates, ranging from 5\% to 30\%. We compare these results with the decomposition obtained for the AUC and report the performance metrics in Table \ref{tab:XGBoost_Performances_def}. As expected, the model's sensitivity significantly decreases as the dataset becomes more imbalanced, dropping from 25.50\% at a 10\% default rate to just 1.06\% at a 5\% default rate. Conversely, the model’s specificity increases as the default rate decreases, rising from 84.07\% at a 30\% default rate to 99.83\% at a 5\% default rate. This stark variation in sensitivity and specificity suggests that the underlying models trained on datasets with different default rates are fundamentally distinct. Consequently, we anticipate significant variations in XPER values for sensitivity and specificity, particularly at the lower default rate of 5\%. 
            
           \noindent We present the XPER values (in \%) for sensitivity and specificity in Figure \ref{fig:XPER_Sensitivity_Defaults}. Since these values are identical for both metrics, we display only one XPER value for each feature (see Appendix \ref{sensitivity_specificity} for further details). As expected, at a default rate of 5\%, the XPER values differ substantially from those at higher default rates. For instance, at a 5\% default rate, the \textit{Car price} feature drastically decreases the model's sensitivity/specificity ($\hat{\phi} < -60\%$), whereas its contribution is relatively negligible ($\hat{\phi} < -5\%$) for the other default rates. It is worth noting that while sensitivity and specificity vary across default rates of 10\%, 20\%, and 30\%, these variations are much less pronounced compared to the significant differences observed at a default rate of 5\%. 
           
	\subsection{Individual XPER decomposition and data visualization} \label{Individual_empirical}
	We analyze the impact of the various features on the performance metric but we now do it for each borrower individually. We start by analyzing in Figure \ref{fig:Force_plot} the XPER decomposition for two sample borrowers. These force plots enable us to decompose the individual performance of each borrower, as defined in Equation \eqref{ind_decompos}. By doing so, they allow us to understand why some individuals contribute more to the AUC of the model than others. In each panel of Figure \ref{fig:Force_plot}, \textit{Performance} refers to the contribution of the borrower to the AUC of the model and \textit{Benchmark} to their benchmark value, i.e., $\phi_{i,0}$ in Equation \eqref{ind_decompos}. For each borrower, the features increasing (respectively decreasing) the performance appear in red (blue). Borrower \#3 has a relatively high individual AUC compared to borrower \#28 (both have theoretically the same benchmark). The over-performance of borrower \#3 is mainly due to the large positive XPER values for \textit{funding amount}, \textit{job tenure}, and \textit{car price}. It also comes from the small negative XPER values for the marital status (\textit{married}) and the share of the monthly payment in the borrower's income (\textit{monthly payment}).
	 	 
		\begin{figure}[h]
        \begin{center}
	\centering
	\vspace{0.3cm}
        \begin{subfigure}[b]{1\textwidth}
			\includegraphics[width=1%
			\textwidth,trim={0 2.2cm 0 0},clip]{./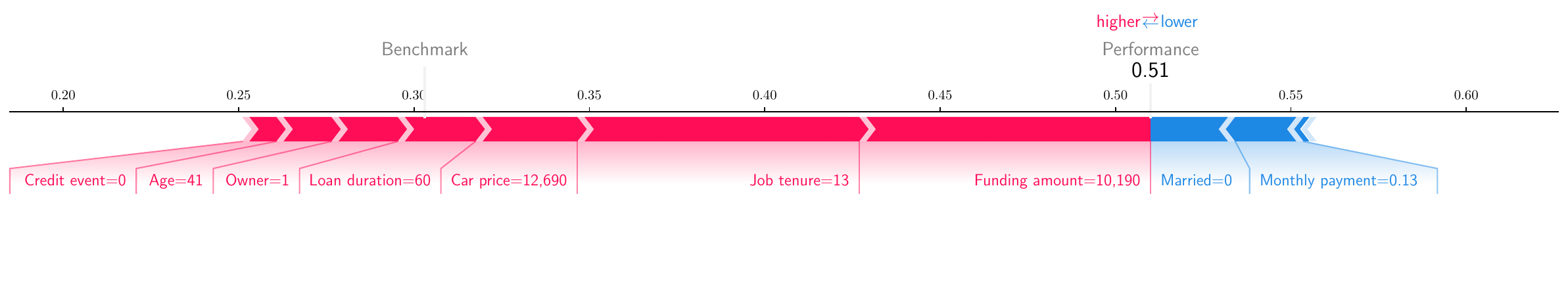} %
			\label{fig:E_SHAP_FP_non_def}
        \caption{Borrower \#3}
        \end{subfigure}
        \begin{subfigure}[b]{1\textwidth}
			\includegraphics[width=1%
			\textwidth,trim={0 2.2cm 0 0},clip]{./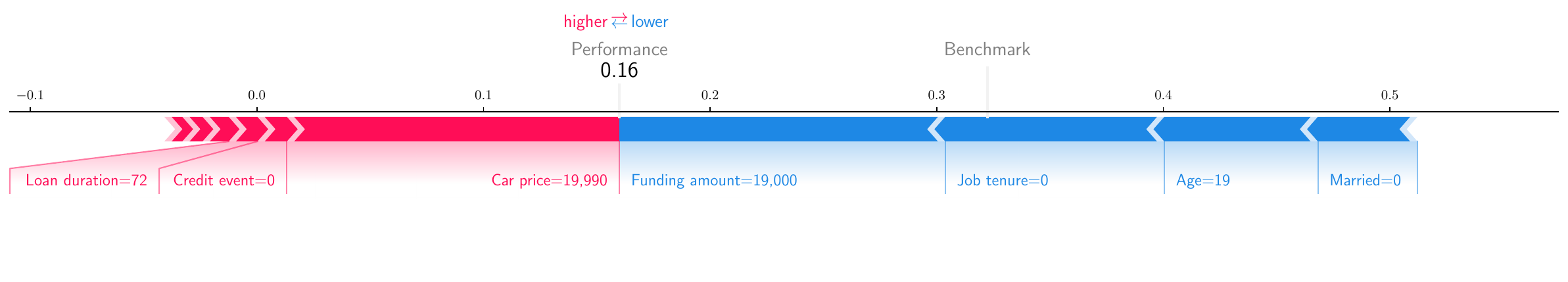} 
			\label{fig:E_SHAP_FP_non_def_obs}
        \caption{Borrower \#28}
        \end{subfigure}
        \caption{Force plots of individual XPER values}
        \label{fig:Force_plot}
        \end{center}
         \medskip \medskip
        
        \justifying \noindent {\footnotesize Note: This figure displays the XPER decomposition for the AUC of two loan borrowers (see Equation \eqref{DefLSHAP}). Borrower \#3 did not default on his loan and has a probability of default of 8\% according to the XGBoost model (Panel (a)). Borrower \#28 did not default on his loan and has a probability of default of 57\% according to the XGBoost model (Panel (b)). \textit{Performance} refers to the individual level of the AUC whereas \textit{Benchmark} represents the individual contribution to the AUC associated with a population where the target variable $y_i$ is independent from the features $\mathbf{x}_i$. The red color refers to positive XPER values, i.e., features increasing performance. The blue color refers to negative XPER values, i.e., features decreasing performance.}
%\vspace{-0.5cm}
  \end{figure}
To better understand the relative influence of each feature for the two borrowers, we analyze their risk-profiles and probabilities of default predicted by the model. Let us start with borrower \#3. He is 41 years old, homeowner, has a stable job, and applied for a loan to buy a moderately-priced car. He provided a down payment greater than 50\% of the car value and experienced no past credit event. Intuitively, we would naturally classify this borrower as low-risk and this is confirmed by the 8\% default probability estimated by the XGBoost model. Thus, as borrower \#3 eventually did not default on his loan, his contribution to the AUC is high. The situation of borrower \#28 is quite different as he exhibits a higher risk profile (young, jobless, not married, relatively large credit amount, no down payment). Yet, the model remains quite undecided about his capacity to pay back the loan with a 57\% estimated default probability. As the AUC measures the discriminatory ability of the model, this uncertainty leads to a low individual contribution, and even lower than the benchmark value.

%\clearpage
	\begin{figure}[!h]
             \begin{center}
            			\includegraphics[width=1%
            			\textwidth,trim={0 0 0 0},clip]{./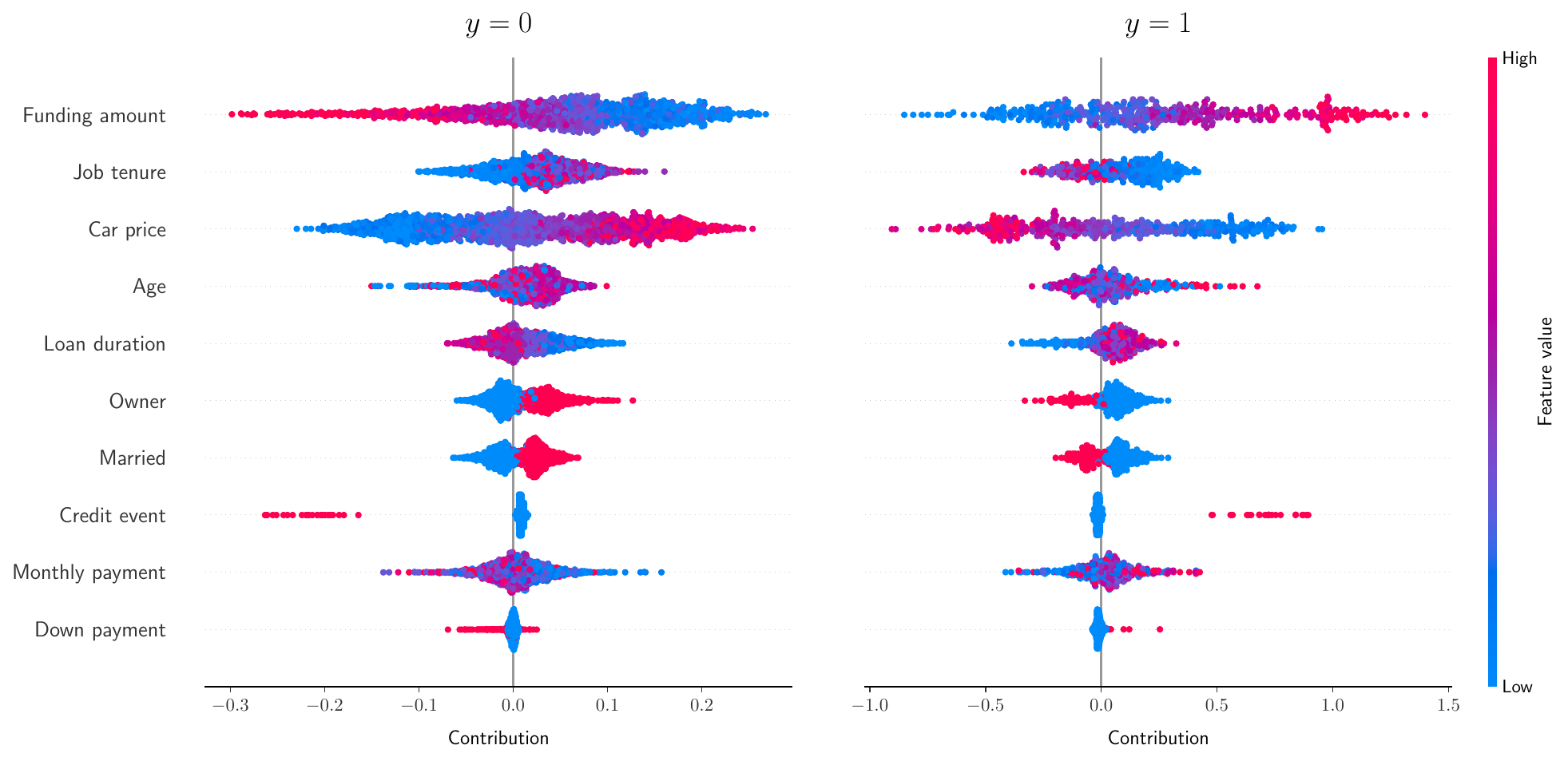} %
               \caption{Summary plots of individual XPER values}
            			\label{fig:EShapley_ind} 
             \end{center}
         \medskip 
        
        \justifying \noindent {\footnotesize Note: This figure displays the individual XPER values for each feature used in the XGBoost model. Each dot represents the value for a given borrower (see Equation \eqref{DefLSHAP}). We display the results for the borrowers who paid back their loans (left panel) and those who do not (right panel).}
        %\vspace{-0.5cm}
	\end{figure}
	   We then consider the entire sample of borrowers. In Figure \ref{fig:EShapley_ind}, we display the XPER values for each feature as a function of the feature value. We analyze these results according to two types of borrowers: non-defaulting borrowers (y=0) and defaulting borrowers (y=1). We clearly see that depending on the value of the feature and the type of borrower, we know if this feature contributes to increase or decrease the performance of the model. For instance, for a non-defaulting borrower (left panel), a relatively high job tenure is associated with a positive XPER value. This result is due to the fact that a relatively long job tenure tends to  lower the probability of default in the model. Hence, this increases the ability of the model to distinguish him from the defaulting borrowers and boosts the XPER value. On the opposite, for a defaulting borrower (right panel), a relatively high job tenure leads to a negative XPER value and thus decreases his contribution to the AUC of the model.

\clearpage
 
    \subsection{Boosting model performance}
    \label{appendix_boosting}

        \begin{table}[ht]
            \centering
            \caption{Silhouette scores for the clustering based on XPER values and features }
            \begin{tabular}{cccc}
            \toprule
            Number of Clusters & XPER clustering & Feature clustering \\
            \midrule
            2                  & 0.2037                & 0.2129             \\
            3                  & -0.0244               & 0.1846             \\
            4                  & -0.0608               & 0.1718             \\
            5                  & -0.1858               & 0.1609             \\
            6                  & -0.0624               & 0.1589             \\
            7                  & -0.0734               & 0.1595             \\
            8                  & -0.0596               & 0.1501             \\
            9                  & -0.1027               & 0.1514             \\
            10                 & -0.0665               & 0.1471             \\
        \bottomrule
        \end{tabular}
        \medskip \medskip
        
        \justifying \noindent {\footnotesize
        Note: This table presents the silhouette scores obtained for different numbers of clusters when clustering based on XPER values or feature values. The silhouette score ranges from $-1$ (worst) to $1$ (best).}
         \label{Tab:Silhouette_scores}
        \end{table}

 \begin{figure}[!h]
            \begin{subfigure}[b]{1\textwidth}
            \makebox[\linewidth]{
			\includegraphics[width=0.65
			\textwidth,trim={0 0 0 0},clip]{./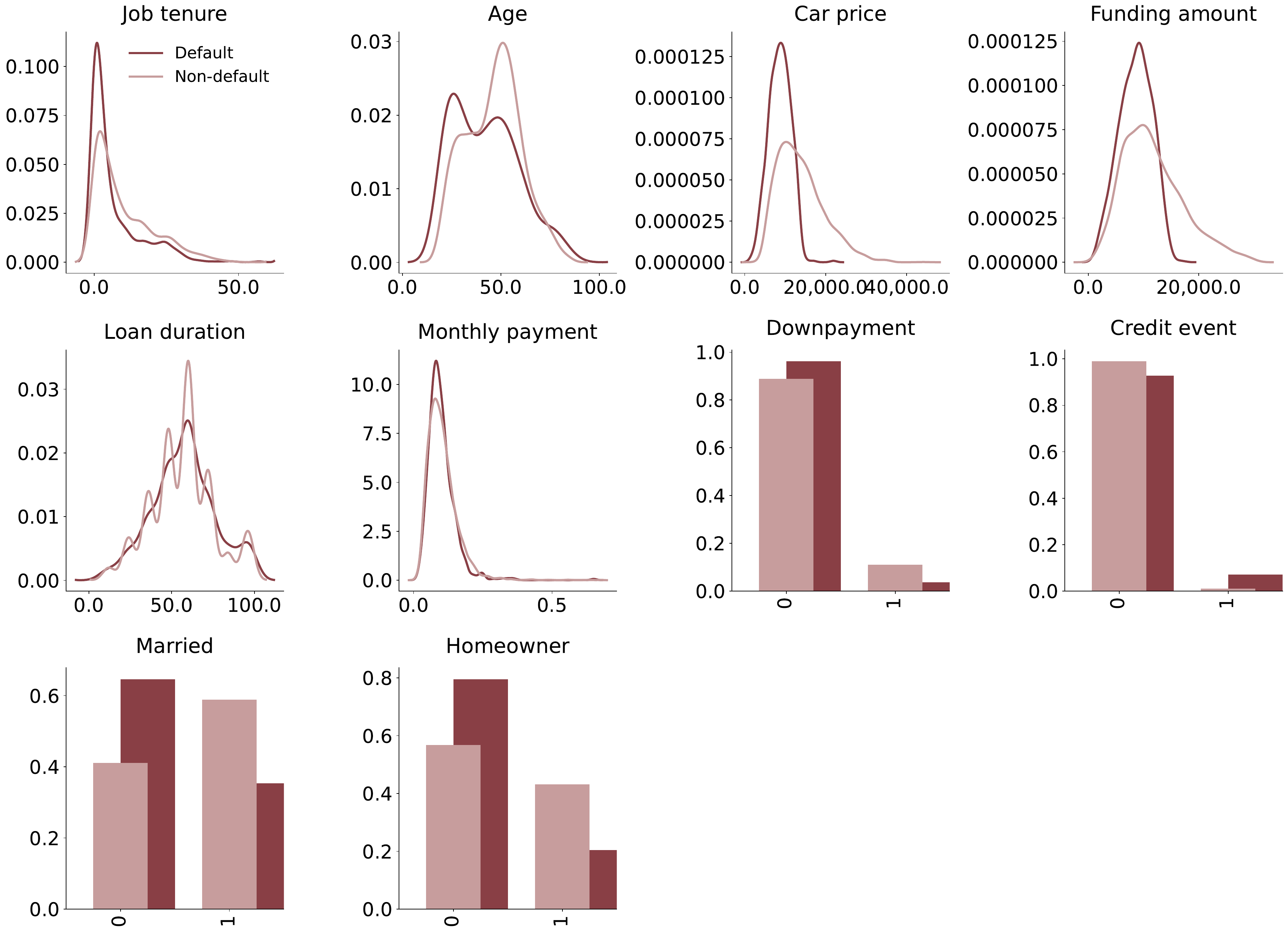}}
		\caption{Features distribution by default class in group 1}
            \label{fig:Feature_distrib_G1_train_default}
            \vspace{0.5cm}
			\end{subfigure}
			\hfill
            
			\begin{subfigure}[b]{1\textwidth}
			\makebox[\linewidth]{
			\includegraphics[width=0.65
        			\textwidth,trim={0 0 0 0},clip]{./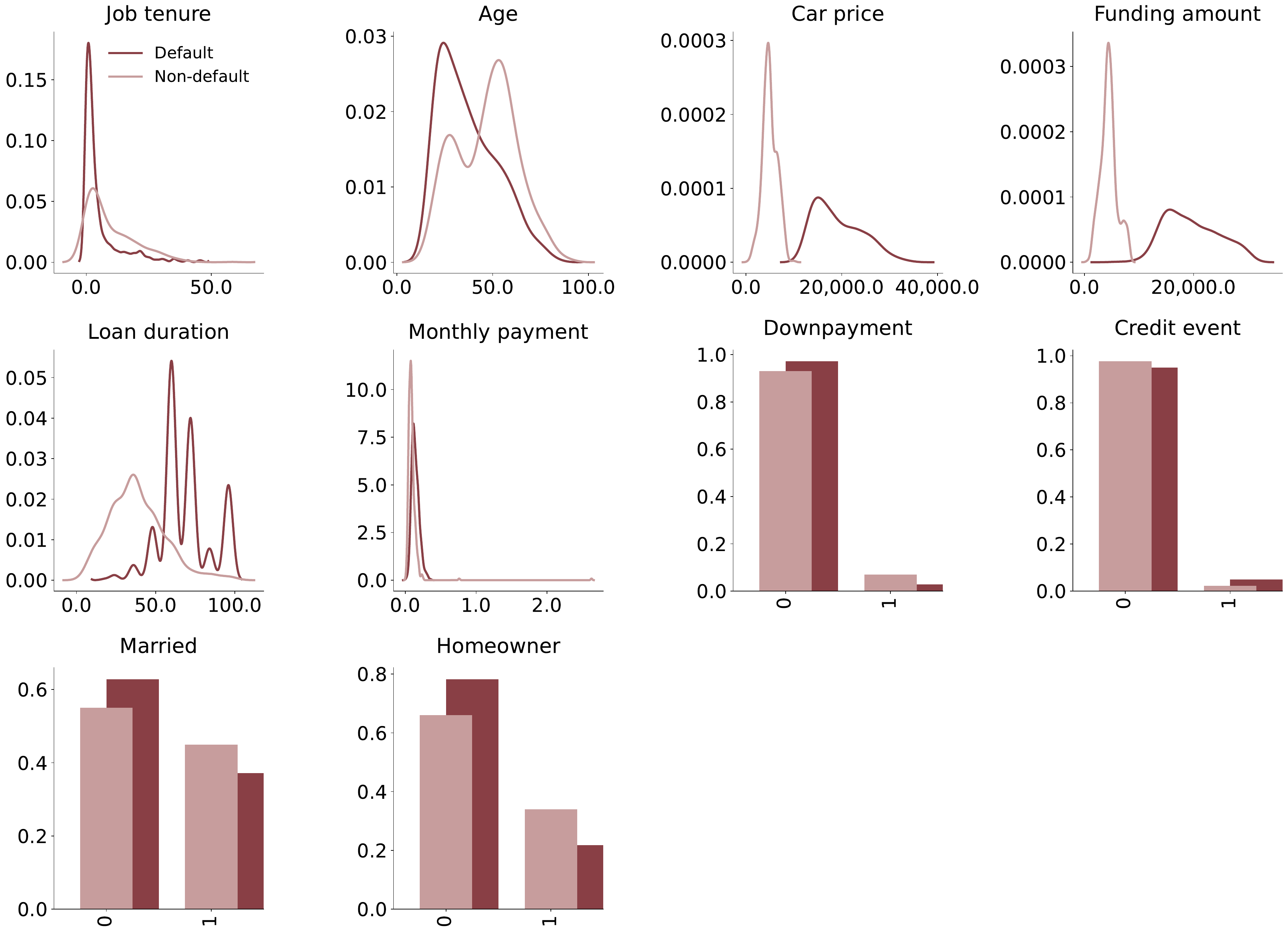}} 
        			\caption{Features distribution by default class in group 2}
           \label{fig:Feature_distrib_G2_train_default}
			\end{subfigure}
    \caption{Features distribution by default class for each group}
     \medskip \medskip
        
        \justifying \noindent {\footnotesize Note: This figure displays the distribution of the features by default class on the training sample, for the first (Panel (a)) and second group (Panel (b)) created from individual XPER values using the K-Medoids methodology. For continuous features, we use a kernel density estimation. Dark red refers to defaulting borrowers and light red to non-defaulting borrowers.}

	\end{figure}
\clearpage
\begin{figure}[h]
		\begin{center}
		    			\includegraphics[width=0.5%
			\textwidth,trim={0 0 0 0},clip]{./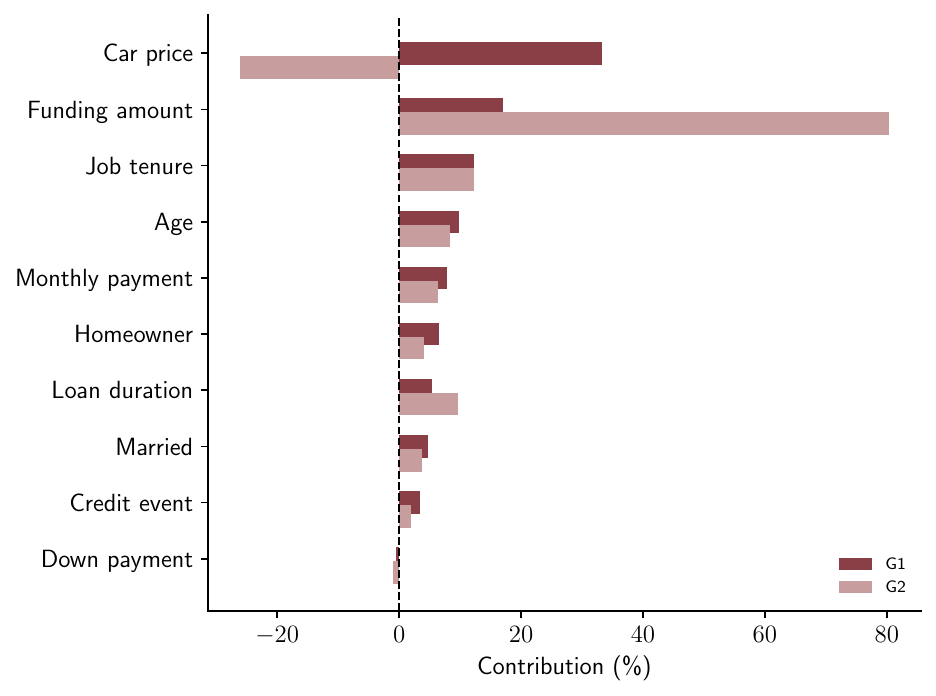}
             \caption{XPER decomposition of the AUC by group}
             \label{fig:AUC_cluster} 
		\end{center}
     \medskip
        
        \justifying \noindent {\footnotesize Note: This figure displays the XPER values of the AUC of the XGBoost model estimated on the training sample by group creating with the K-Medoids method.}
        \end{figure}

\end{document}